\theoremstyle{plain}
\newtheorem{theorem}{Theorem}[section]
\newtheorem{proposition}[theorem]{Proposition}
\newtheorem{lemma}[theorem]{Lemma}
\newtheorem{corollary}[theorem]{Corollary}
\theoremstyle{definition}
\newtheorem{definition}[theorem]{Definition}
\newtheorem{assumption}[theorem]{Assumption}
\theoremstyle{remark}
\newtheorem{remark}[theorem]{Remark}
\definecolor{papercolor}{HTML}{0668E1}
\definecolor{darkred}{rgb}{0.68,0.05,0.0}
\icmltitlerunning{Coevolutionary Continuous Discrete Diffusion}
\begin{document}

\twocolumn[
  \icmltitle{Coevolutionary Continuous Discrete Diffusion:\\
    Make Your Diffusion Language Model a Latent Reasoner}

  % It is OKAY to include author information, even for blind submissions: the
  % style file will automatically remove it for you unless you've provided
  % the [accepted] option to the icml2026 package.

  % List of affiliations: The first argument should be a (short) identifier you
  % will use later to specify author affiliations Academic affiliations
  % should list Department, University, City, Region, Country Industry
  % affiliations should list Company, City, Region, Country

  % You can specify symbols, otherwise they are numbered in order. Ideally, you
  % should not use this facility. Affiliations will be numbered in order of
  % appearance and this is the preferred way.
  \icmlsetsymbol{equal}{*}

  \begin{icmlauthorlist}
    \icmlauthor{Cai Zhou}{mit,msr}
    \icmlauthor{Chenxiao Yang}{ttic}
    \icmlauthor{Yi Hu}{pku}
    \icmlauthor{Chenyu Wang}{mit}
    \icmlauthor{Chubin Zhang}{thu}\\
    \icmlauthor{Muhan Zhang}{pku}
    \icmlauthor{Lester Mackey}{msr}
    \icmlauthor{Tommi Jaakkola}{mit}
    \icmlauthor{Stephen Bates}{mit}
    \icmlauthor{Dinghuai Zhang}{msr}
    %\icmlauthor{}{sch}
  \end{icmlauthorlist}

  \icmlaffiliation{mit}{Massachusetts Institute of Technology}
  \icmlaffiliation{msr}{Microsoft Research}
  \icmlaffiliation{ttic}{Toyota Technological Institute at Chicago}
  \icmlaffiliation{pku}{Peking University}
  \icmlaffiliation{thu}{Tsinghua University}

  \icmlcorrespondingauthor{Cai Zhou}{caiz428@mit.edu}
  \icmlcorrespondingauthor{Dinghuai Zhang}{dinzhang@microsoft.com}

  % You may provide any keywords that you find helpful for describing your
  % paper; these are used to populate the "keywords" metadata in the PDF but
  % will not be shown in the document
  \icmlkeywords{Machine Learning, ICML}

  \vskip 0.3in
]

% this must go after the closing bracket ] following \twocolumn[ ...

% This command actually creates the footnote in the first column listing the
% affiliations and the copyright notice. The command takes one argument, which
% is text to display at the start of the footnote. The \icmlEqualContribution
% command is standard text for equal contribution. Remove it (just {}) if you
% do not need this facility.

% Use ONE of the following lines. DO NOT remove the command.
% If you have no special notice, KEEP empty braces:
\printAffiliationsAndNotice{}  % no special notice (required even if empty)
% Or, if applicable, use the standard equal contribution text:
% \printAffiliationsAndNotice{\icmlEqualContribution}

\begin{abstract}
% \vspace{-15pt}
  Diffusion language models, especially masked discrete diffusion models, have achieved great success recently. While there are some theoretical and primary empirical results showing the advantages of latent reasoning with looped transformers or continuous chain-of-thoughts, continuous diffusion models typically underperform their discrete counterparts. In this paper, we argue that diffusion language models do not necessarily need to be in the discrete space. In particular, we prove that continuous diffusion models have stronger expressivity than discrete diffusions and looped transformers. We attribute the contradiction between the theoretical expressiveness and empirical performance to their practical trainability: while continuous diffusion provides intermediate supervision that looped transformers lack, they introduce additional difficulty decoding tokens into the discrete token space from the continuous representation space. We therefore propose \textbf{C}oevolutionary \textbf{C}ontinuous \textbf{D}iscrete \textbf{D}iffusion (CCDD), which defines a joint multimodal diffusion process on the union of a continuous representation space and a discrete token space, leveraging a single model to simultaneously denoise in the joint space. By combining two modalities, CCDD is expressive with rich semantics in the latent space, as well as good trainability and sample quality with the help of explicit discrete tokens. We also propose effective architectures and advanced training/sampling techniques for CCDD, which reveals strong empirical performance in extensive language modeling experiments on real-world tasks. 
% \vspace{15pt}
\end{abstract}

\section{Introduction}

Recent years have seen great success of autoregressive (AR) large language models (LLMs)~\citep{gpt4, qwen25, deepseekv3}, especially their significant improvement in complex reasoning~\citep{deepseekr1, gemini25}. However, the \textbf{discrete} and \textbf{left-to-right} nature of these models still poses some fundamental difficulties. It is a known result in computation complexity theory that the expressivity of transformers -- the architectural foundations of modern LLMs, are restricted in the class $\mathsf{TC^0}$ even with logarithmic Chain-of-Thought (CoT) steps~\citep{logdepth}. This suggests that transformers cannot accurately address problems outside the $\mathsf{TC^0}$ class such as \textit{recognizing formal language} which measures state tracking capabilities, and \textit{graph connectivity} that captures multistep reasoning ability, under standard complexity conjectures. 
Empirically, even state-of-the-art LLMs often struggle in a wide range of complex tasks requiring strong planning, parallel searching, and backtracking capabilities, such as \textit{Sudoku}. 
To overcome these challenges, researchers have been working to develop new language modeling paradigms.

On the one hand, LLMs are shown to be benefit from latent reasoning through various ways, including looped transformers (LT)~\citep{loopedprogrammable} or continuous CoT~\citep{coconut}. One line of research on \textit{looped transformers} or \textit{universal transformers}~\citep{dehghani2018universal} theoretically demonstrates that when a block of middle layers of a fixed transformer model is repeated for a variable number of times, its expressivity can be significantly improved~\citep{latentthoughtslooped, logdepth, loopedlength}. 
For example, \citet{logdepth} proves that a looped transformer with depth $\Theta(\log n)$ are in $\mathsf{TC^1}$ and thus solve regular language recognition and graph connectivity problems with input context length $n$, which are intractable by LLMs with logarithmic CoT steps. 
Intuitively, LTs do not decode latents into discrete tokens until the last loop time, enabling implicit conduction of complex reasoning such as planning and searching in the \textbf{continuous latent space} and efficient storage of these meaningful intermediate information efficiently for later loops. 
Taking advantage of a powerful latent space, scaling the looped depth of the model (even with uniform parameters) can be a more effective way for test-time scaling compared to increasing the length of reasoning steps with explicit CoT in the discrete, finite token space.
Unfortunately, despite the strong expressivity, LT scheme is not widely adopted in mainstream LLMs due to the limited practical performance, which we attribute to \textit{the lack of supervision on intermediate states} --
% looping is an inference-time trick and 
the intermediate rollouts are not supervised properly in training, thus LT inference could encounter out-of-distribution (OOD) issues.

On the other hand, diffusion language models (DLMs)~\citep{diffullama, llada, dream7b} have received considerable attention from researchers in recent years. The \textbf{non-autoregressive} nature of DLMs enables \textit{any-order generation}, \textit{self-correction}, and \textit{parallel decoding} capabilities, leading to potentials in stronger expressiveness and superior generation efficiency. State-of-the-art DLMs outperform LLMs in complex or structured reasoning tasks such as Sudoku~\citep{trainorder} and coding~\citep{diffucoder}. There are two main families of DLMs: continuous diffusion models (CDMs) based on SDE or PF--ODE~\citep{diffuseq, diffusionlm, duo}, and discrete diffusion models (DDMs) based on the continuous-time Markov chain (CTMC)~\citep{d3pm, sedd, mdlm, shi2024simplified}. Continuous DLMs, performing iterative denoising in either embedding space or probability space, emerged earlier but fell far behind AR LLMs in practical performance, until the recent success of discrete DLMs. Intriguingly, discrete diffusions with masked noises tend to outperform those with uniform noises~\citep{whymaskingwork}, at the cost of losing self-correction capabilities. 
Analogously to LLMs, discrete DLMs also reason in the explicit token space and may partially lose information of previous decoding steps.

In this paper, we conceptually connect all the aforementioned models and algorithms, and propose a new language modeling paradigm that combines the advantages of previous methods. In \Cref{sec:theory}, we systematically compare these models from the perspective of \textbf{theoretical expressivity} and \textbf{practical trainability} (\Cref{fig:expressivity}). We first show in \Cref{subsec:expressivity} that: (i) continuous DLMs are more powerful than their discrete counterparts; (ii) continuous diffusion generalizes looped transformer, which is already partially more expressive than CoT. However, previous heuristics in performance seem to contradict the theoretical expressiveness, which we try to elucidate in \Cref{subsec:trainability} from the perspective of trainability. Looped transformers face OOD issues in inference due to lack of intermediate supervision, while diffusion models supervise states in the whole probability path. We then attribute the insufficient trainability of continuous DLMs to the large decision space, the low-quality embeddings, and combinatorial decoding complexity.

Based on these insights, in \Cref{sec:ccdd} we propose a new language modeling paradigm named \textbf{Coevolutionary Continuous Discrete Diffusion} (CCDD), which combines the advantages of both continuous and discrete diffusion while eliminating the shortcomings via their complementarity. CCDD defines a joint diffusion process on both the discrete state space through CTMC and the continuous probability or embedding space through SDE~(\Cref{subsec:diffusion_process}). In the reverse process, one denoising model taking the partially noised tokens of both modalities as inputs learns to predict the data distribution in both spaces. Inspired by DiT~\citep{dit}, MM-DiT~\citep{sd3}, and MoE~\citep{moe}, we design several architectures for joint denoising with various parameter and computation efficiency~(\Cref{subsec:architecture_training}). 
In implementation, we adopt the contextualized embedding space from concurrent pretrained text embedding models such as Qwen3-Embedding~\citep{qwen3embedding}, which provides rich semnatics modeling joint distributions and injects knowledge from pretrained LMs via implicit representation guidance.
% CCDD further benefits from advanced training techniques such as asynchronous noises, and could adaptively balance between sampling quality (through inference-time scaling with SDE) and efficiency (through few-step sampling with ODE), thanks to improved sampling algorithms including representation classifier-free guidance (CFG) and consistency-based decoding. 
CCDD further enjoys great inference flexibility, adaptively balancing between sampling quality (through inference-time scaling with SDE) and efficiency (through few-step sampling with ODE), thanks to improved sampling algorithms including representation classifier-free guidance (CFG). 
To summarize, CCDD features both the strong expressive power of continuous diffusion and the good trainability of discrete diffusion.

We experimentally validate the effectiveness of CCDD through extensive text modeling tasks, showing that it reduces over $35\%$ perplexity compared with baselines of the same scale on LM1B and OWT dataset, using comparable parameters and FLOPs. Remarkably, our model reveals surprisingly \emph{superior few-step generation quality} compared with discrete baselines: CCDD with even only 8 steps achieves lower generative perplexity compared to MDLM with 256 steps. In complex reasoning tasks such as Sudoku, 3-SAT and countdown, CCDD also surpasses all previous methods including autoregressive models, masked discrete diffusion, and looped transformers.
Our code is publicly available at \href{https://github.com/zhouc20/CCDD}{https://github.com/zhouc20/CCDD}.

\begin{figure*}[t]
    \centering
    \includegraphics[width=0.8\linewidth]{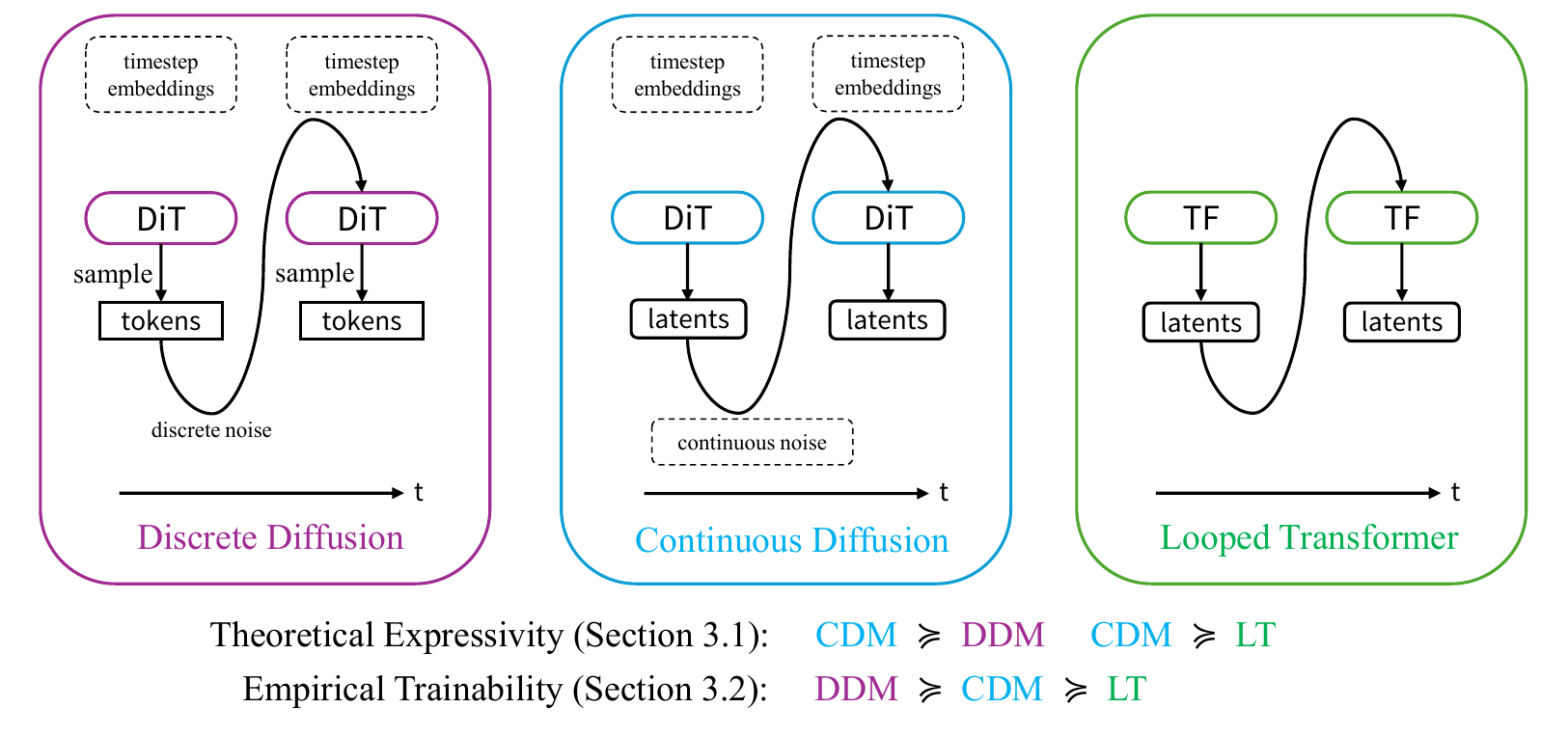}
    \vspace{-2pt}
\caption{Comparison of theoretical expressiveness and practical trainability of: discrete diffusion (left), continuous diffusion with optional continuous noise (middle), and looped transformer (right).}
\label{fig:expressivity}
\vspace{-5pt}
\end{figure*}

\section{Preliminary}

\paragraph{Notation.}
Let $\Omega=\{1,\dots,V\}$ be a vocabulary ($|\Omega|=V$) and $L$ the sequence length.
Suppose we have the discrete sequence data $x_0\in\Omega^L$. A fixed encoder $\mathcal{E}$ maps tokens to continuous embeddings $z_0=\mathcal{E}(x_0)\in\mathbb{R}^{L\times d}$, which can be either one-hot on the simplex $\Delta^{V-1}:=\{p\in\mathbb{R}^V_{\ge0}:\bm 1^\top p=1\}$ (namely $d=V$) or representations of any pretrained model / LLM -- hence, we may use the terms ``logits'' and ``representations'' interchangeably.
We write $t\in[0,1]$ for continuous time or $t\in\{1,\dots,T\}$ for discrete steps, where the latter can also be converted to discretized $\tilde t\in[0,1]$. Denote corrupted variables as $z_t$ and $x_t$, %then a model estimates $\hat z_0$ or $\hat x_0$ from $(z_t,t)$ or $(x_t,t)$. 
and in Gaussian settings $\epsilon_t\sim\mathcal{N}(0,I)$ denotes the standard noise used to synthesize $z_t$.
A \emph{single} time–conditioned network $f_\theta(\cdot,t,\mathrm{cond})$ (typically a transformer-based model) is called at every step/instant, where $\mathrm{cond}$ is the optional condition (often omitted). 

\paragraph{Continuous diffusion.}
The forward/noising dynamics for continuous data $z_t\in\mathbb{R}^{L\times d}$ are
\begin{equation}\label{eq:pf-sde}
d z_t \;=\; a_t(z_t)\,dt \;+\; g_t\,dW_t,
\end{equation}
with drift $a_t(\cdot)$, scalar (or matrix) diffusion $g_t\!\ge\!0$, and Wiener process $W_t$. The marginals $q_t(z_t)$ satisfy the Fokker--Planck PDE $\partial_t q_t = -\nabla\!\cdot(a_t q_t) + \tfrac12 g_t^2 \Delta q_t$.
A standard instance is the variance preserving (VP) schedule: $d z_t = -\tfrac12\beta_t z_t\,dt + \sqrt{\beta_t}\,dW_t$,\;\; yielding a closed-form forward marginals:
\begin{equation}\label{eq:vp_forward}
\begin{aligned}
z_t \;=\; \alpha_t\,z_0 + \sigma_t\,\epsilon_t,&\;\; \epsilon_t\sim\mathcal{N}(0,I), \\
\alpha_t=\exp\!\big(-\tfrac12\!\int_0^t\beta_\tau\,d\tau\big),&
\;\; \sigma_t=\sqrt{1-\alpha_t^2}.
\end{aligned}
\end{equation}
The reverse process is based on the reverse SDE in~\eqref{eq:reverse_sde} or the PF--ODE in~\eqref{eq:pfode}:
\begin{equation}\label{eq:reverse_sde}
    d z_t \;=\; \big(a_t(z_t) - g_t^2\,s_\theta(z_t,t)\big)\,dt \;+\; g_t\,d\bar W_t,
\end{equation}
\begin{equation}\label{eq:pfode}
    \dot z_t \;=\; a_t(z_t) - \tfrac12 g_t^2\,s_\theta(z_t,t).
\end{equation}
where $s_\theta(\cdot,t)\approx\nabla_{z}\log q_t(\cdot)$ is produced by $f_\theta$ (up to a known scaling).
Under \eqref{eq:vp_forward}, we have closed form sampling update rules as in DDPM~\citep{DDPM} and DDIM~\citep{DDIM}. 
In practice, common equivalent heads include $\epsilon\text{-pred: } \epsilon_\theta(z_t,t),\ x_0\text{-pred: } \hat z_{0,\theta}(z_t,t),\ v\text{-pred: } v_\theta(z_t,t)$,
which are equivalent up to linear transformations.
A typical VP training loss with $\epsilon$-prediction and time-dependent weight $\lambda_{\mathrm{cont}}(t)$ is
\begin{equation}\label{eq:loss_cont}
\mathcal{L}_{\text{cont}}=\mathbb{E}_{t,z_0,\epsilon}\big[\lambda_{\mathrm{cont}}(t)\,\|\epsilon-\epsilon_\theta(\alpha_t z_0+\sigma_t\epsilon,\,t)\|^2\big].
\end{equation}

\paragraph{Discrete diffusion.}
For a single token ($L=1$ for notation), let $q_t\in\Delta^{V-1}$ be the column vector of token marginals. In the forward process, a time-inhomogeneous continuous-time Markov chain (CTMC) with generator $G_t\in\mathbb{R}^{V\times V}$ evolves as follows ($\mathcal T$ the normalizing constant):
\begin{equation}\label{eq:ctmc}
\dot q_t = G_t\,q_t,\
P_{s\to t}=\mathcal{T}e^{(\int_s^t G_\tau\,d\tau)},\
q_t=P_{0\to t}q_0.
\end{equation}
Some common choices of $G_t$ include: (i) \textit{Uniform noise (USDM)}, where state $j$ jumps to any $i\neq j$ uniformly at rate $u_t$, resulting in 
$(G_t)_{ij}=\tfrac{u_t}{V-1}\ (i\neq j),\ (G_t)_{jj}=-u_t$; 
(ii) \textit{Masked (absorbing) noise}, where we Augment $\Omega$ with a mask state $\mathtt{[MASK]}$. For any $j\neq\mathtt{[MASK]}$, it jumps to the mask state with rate $u_t$, leading to
$(G_t)_{\mathtt{[MASK]},j}=u_t,\ (G_t)_{jj}=-u_t,\ (G_t)_{\cdot,\mathtt{[MASK]}}=0$.
The marginal of both processes can be expressed by an interpolation between the clean data and a noise distribution $\pi_t$,
\begin{equation}\label{eq:disc_forward_simple}
    q_t(x_t|x_0)=\text{Cat}(\eta_t x_0+(1-\eta_t)\pi_t)
\end{equation}
where $\pi_t=\bm m$ the one-hot vector for $\mathtt{[MASK]}$ for the absorbing noise, and $\pi_t=\frac{1}{V}\bm 1$ for the uniform noise.
For sequences of length $L$, corruptions are exerted independently per-position.
In the reverse process, the denoising network predicts the clean data distribution $\mathbf x_\theta:=\hat \pi_\theta(x_0| x_t,t)=\mathrm{softmax}(\ell_\theta(x_t,t))$
where $\ell_\theta$ the output logits. A Bayesian form of posterior is
\begin{equation}\label{eq:reverse_bayes}
    p_\theta(x_{s}| x_t)=q_{t|s}(x_t|x_s) \frac{q_s(x_s|\hat\pi_\theta)}{q_t(x_t|\hat\pi_\theta)}
\end{equation}
The training loss is usually calculated as with weights $\lambda_{\mathrm{disc}}(t, x_t,x_0)$ derived from Rao-Blackwellized likelihood bounds and would be zero for unmasked token in masked diffusion:
\begin{equation}\label{eq:loss_disc}
    \mathcal L_{\text{disc}}=\mathbb E_{t, x_0}\big[\lambda_{\mathrm{disc}}(t, x_t,x_0)\log \langle \hat\pi_\theta(x_t,t),x_0 \rangle\big]
\end{equation}

\paragraph{Looped transformer.}
In the standard setting of looped transformer, a \emph{single} $n$-layer transformer ($\Phi_\theta$) block with shared parameters $\theta$ is rolled out adaptive $T$ steps.
Let $h_k\in\mathbb{R}^{L\times d}$ be the hidden state after $k$ steps.
\begin{equation}\label{eq:lt}
h_{k+1} \;=\; \Phi_\theta\big(h_k\big),\qquad k=0,\dots,T-1.
\end{equation}
The transformer layers can be either encoder-based (bidirectional attention) or decoder-based (causal attention) with residual connections.
A readout $R(h_T):\mathbb R^{L\times d}\rightarrow\mathbb R^{L\times V}$ yields logits $\ell_{\theta,T}=R(h_T)$ and token samples $x_T\sim\mathrm{Cat}(\mathrm{softmax}(\ell_{\theta,T}))$. Typically, LTs receive supervision on the final outputs using standard cross-entropy loss.

\section{Theoretical Expressivity and Practical Trainability Analysis}\label{sec:theory}

\subsection{Theoretical Expressivity Analysis}\label{subsec:expressivity}
% \vspace{1pt}
In this subsection, we analyze the theoretical expressivity of CDM, DDM and LT. 
We assume standard measurability/Lipschitz conditions when needed. 
Throughout the paper, for expressiveness comparison we assume the \emph{same} architectures and parameter counts in networks with \textit{finite capacity} as in common practice. By default, we consider Transformers (TF)~\citep{vaswani2017attention} and Diffusion Transformers (DiT)~\citep{dit}, up to slight differences in the first encoding layer and the last decoding layer. Proofs are available in Appendix~\ref{sec_appendix:proof}.

\paragraph{Continuous diffusion dominates discrete diffusion.}\label{subsubsec:exp_cont>disc}
We first compare the families of trajectory laws and terminal distributions induced by CDM on $\mathcal{Z}:=\mathbb{R}^{L\times d}$ and DDM on $\mathcal{X}:=\Omega^L$ embedded into $\mathcal{Z}$ via a bijective encoder $\mathcal{E}$.
Denote the distribution induced from the reverse SDE~\eqref{eq:reverse_sde} as $p_t(z_t)\in \mathcal P(\mathcal Z)$, and $p_t(x_t)\in\mathcal P(\mathcal X)$ produced by the posterior of CTMC~\eqref{eq:reverse_bayes}.
\begin{definition}[Trajectory families and embedded discrete family]
Define the trajectory family of continuous diffusion $\mathsf F_{\mathrm{cont}}(\theta)$, discrete diffusion $F_{\mathrm{disc}}(\theta)$, and the \emph{embedded discrete family} $\widetilde{\mathsf{F}}_{\mathrm{disc}}(\theta)\subset\mathcal{P}(\mathcal{Z})$ (the pushforward by the fixed encoder $\mathcal E$) as follows:
\begin{equation}
\begin{aligned}
    \mathsf F_{\mathrm{cont}}(\theta)&:=\Big\{\{p_t(z_t)\}_{t\in[0,1]}\Big\},\\
    \mathsf F_{\mathrm{disc}}(\theta)&:=\Big\{\{p_t(x_t)\}_{t\in[0,1]}\Big\},\\ \widetilde{\mathsf{F}}_{\mathrm{disc}}(\theta)&:=\mathcal{E}_\sharp\mathsf{F}_{\mathrm{disc}}(\theta)
\end{aligned}
\end{equation}
\end{definition}

\begin{theorem}[Strict trajectory-level gap]\label{thm:strict_gap}
At any fixed $t\in[0,1]$, we have the following strict inclusion
\begin{equation}
    \widetilde{\mathsf{F}}_{\mathrm{disc}}(\theta)\ \subsetneq\ \mathsf{F}_{\mathrm{cont}}(\theta)\ \subseteq\ \mathcal{P}(\mathcal Z)
\end{equation}
\end{theorem}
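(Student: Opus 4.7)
The outer inclusion $\mathsf{F}_{\mathrm{cont}}(\theta)\subseteq\mathcal{P}(\mathcal{Z})$ is immediate from the definition of the reverse SDE in~\eqref{eq:reverse_sde}, which by construction advances probability measures on $\mathcal{Z}=\mathbb{R}^{L\times d}$ at every $t$. The substantive content is the left-hand strict inclusion, which I would prove in two halves: containment first, then separation.

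For containment, a discrete trajectory $\{p_t(x_t)\}_{t\in[0,1]}$ pushed through the fixed encoder becomes a family of atomic measures $\mu_t:=\mathcal{E}_\sharp p_t(x_t)$ supported on the finite set $\mathcal{E}(\mathcal{X})\subset\mathcal{Z}$. By the continuity equation, this measure-valued flow is generated by some (generically singular) velocity field $v_t$ on $\mathcal{Z}$, so it coincides with the solution of the PF--ODE~\eqref{eq:pfode} obtained by taking $g_t\to 0$ and choosing the score head so that $a_t-\tfrac12 g_t^2 s_\theta\to v_t$. Since the continuous and discrete models share the transformer backbone $\Phi_\theta$ with equal parameter count, the output layer of $s_\theta$ can realise the required velocity on the finite support $\mathcal{E}(\mathcal{X})$, embedding every atomic trajectory into $\mathsf{F}_{\mathrm{cont}}(\theta)$ in the PF--ODE limit (or, if $g_t>0$ is required, in the weak closure obtained by mollifying each $\mu_t$ with $\mathcal{N}(0,\varepsilon I)$ and sending $\varepsilon\to 0$).

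For strictness, I exhibit a continuous marginal that cannot lie in $\widetilde{\mathsf{F}}_{\mathrm{disc}}(\theta)$. Fix any $t\in(0,1)$ and any VP schedule from~\eqref{eq:vp_forward} with $\sigma_t>0$: the marginal $q_t$ is the convolution of the clean data law with $\mathcal{N}(0,\sigma_t^2 I)$ and is therefore absolutely continuous with respect to Lebesgue measure on $\mathcal{Z}$, with a strictly positive smooth density (standard parabolic smoothing of the Fokker--Planck PDE stated in the preliminary). Running the reverse SDE with any admissible $s_\theta$ preserves this regularity. By contrast, every element of $\widetilde{\mathsf{F}}_{\mathrm{disc}}(\theta)$ is supported on the finite set $\mathcal{E}(\mathcal{X})$ and is singular with respect to Lebesgue measure; mutual singularity precludes equality, so the inclusion is strict.

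The delicate step is containment: realising a purely atomic flow as a genuine continuous diffusion requires either the degenerate PF--ODE limit $g_t=0$ or a passage to weak closures, both of which sit on the boundary of the stated family and must be stipulated when formalising what ``$\mathsf{F}_{\mathrm{cont}}(\theta)$'' means. Fortunately the strictness conclusion is robust to this choice, because the asymmetry between the families is one of regularity -- absolute continuity versus atomic support on a finite set -- rather than any subtle algebraic property of the parametrisation or the specific drift schedule.
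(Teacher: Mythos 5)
Your separation argument — that every element of $\widetilde{\mathsf{F}}_{\mathrm{disc}}(\theta)$ is a finite atomic measure (pushed through a fixed encoder over $\Omega^L$) while the reverse SDE with $g_t>0$ generates absolutely continuous marginals — is exactly the paper's argument, appealing to what the paper packages as Lemma~\ref{lem:finite_support} and Lemma~\ref{lem:ac_marginals}.

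Where you genuinely diverge, and improve on the paper, is in treating the \emph{containment} half of $\subsetneq$ as a claim that needs proof. The paper's own proof establishes only that the two families are \emph{different}: it shows each $q_t\in\widetilde{\mathsf{F}}_{\mathrm{disc}}(\theta)$ is finitely supported, shows that $\mathsf{F}_{\mathrm{cont}}(\theta)$ contains absolutely continuous members, and then writes ``hence $\widetilde{\mathsf{F}}_{\mathrm{disc}}(\theta)\subsetneq\mathsf{F}_{\mathrm{cont}}(\theta)$'' — a non sequitur, since disjointness of a sub-collection does not yield inclusion. You explicitly fill this hole by attempting to embed each atomic trajectory into $\mathsf{F}_{\mathrm{cont}}(\theta)$ via a singular velocity field realizing the PF--ODE in the $g_t\to 0$ limit, or via Gaussian mollification and weak closure, and you are right to flag this as the delicate step. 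In fact it is more than delicate: under Assumption~\ref{asmp:regular}, which the paper invokes for $\mathsf{F}_{\mathrm{cont}}(\theta)$ and which requires $g_t>0$ on a set of positive measure, Lemma~\ref{lem:ac_marginals} forces \emph{every} marginal at $t>0$ in $\mathsf{F}_{\mathrm{cont}}(\theta)$ to be absolutely continuous. Taken literally, this makes $\widetilde{\mathsf{F}}_{\mathrm{disc}}(\theta)$ and $\mathsf{F}_{\mathrm{cont}}(\theta)$ \emph{disjoint} at such $t$, not nested, so the claimed strict inclusion cannot hold without either dropping $g_t>0$ for the embedded-discrete trajectories (your degenerate-ODE route) or redefining $\mathsf{F}_{\mathrm{cont}}(\theta)$ to include its weak closure (your mollification route). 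You have correctly identified the latent inconsistency in the theorem as stated; the paper's proof buys brevity by not noticing it. A clean fix is to state the result as ``$\mathsf{F}_{\mathrm{cont}}(\theta)\setminus\widetilde{\mathsf{F}}_{\mathrm{disc}}(\theta)\neq\emptyset$ and $\widetilde{\mathsf{F}}_{\mathrm{disc}}(\theta)$ lies in the weak closure of $\mathsf{F}_{\mathrm{cont}}(\theta)$,'' which your argument proves and which captures the intended expressivity message without the false nesting.
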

The key insight here is the fact that the input of denoising network in DDM is always discrete and \textit{supported on a finite set} (Lemma~\ref{lem:finite_support}), while the Fokker-Planck equation in CDM would yield \textit{absolutely continuous marginals} (Lemma~\ref{lem:ac_marginals}).
The inclusion holds for the entire sampling trajectories so as the terminal distributions. Intuitively, considering both two models operating on the probability simplex -- analogously for other embedding spaces given the encoder bijective. The ``logits$\to$sample$\to$embed'' operation in discrete diffusion sampling loop \emph{quantizes} the cross-step memory into a single token per step, losing access to the full logits. However, the ``logits$\to$logits" procedure in continuous diffusion propagates a continuous state, retaining fine-grained uncertainty and historical memory. The discrete scheme imposes a hard finite-support bottleneck with information loss at every step (Lemma~\ref{lem:info_loss}), making it strictly dominated by the continuous counterpart producing non-atomic outputs.

\begin{remark}[“Finite-combination” viewpoint]
Discrete diffusion operates over convex combinations of finitely many basis states in $\Delta^{V-1}$. This is a strict subset of the continuous family, which admits general smooth densities via \eqref{eq:pf-sde}. Lemma~\ref{lem:info_loss} shows that per-step token sampling discards the full logit geometry, whereas continuous samplers propagate full $z_t$ (ODE deterministically or SDE stochastically) without compulsory quantization at intermediate times.
\end{remark}

In addition to the functional class inclusion, we also highlight that continuous diffusion is capable of modeling the joint distribution over tokens and internally supports ODE sampling, while discrete diffusion models the token marginals and can only sample with SDE. Therefore, CDM has huge potentials in global planning and few-step generation, which is verified through our experiments in \Cref{sec:exp}.

\begin{table*}[t]
% \small
\begin{center}
\caption{Comparison between generation space of continuous diffusion.}
\vspace{-1pt}
\label{tab:cont_space_comparison}
\begin{small}
% \resizebox{\textwidth}{!}{\begin{tabular}{l|c|c|c}
\begin{tabular}{l|c|c|c}
\toprule
 & \textbf{Simplex} $\Delta^{V-1}$ & \textbf{Token-wise} $\mathbb{R}^{d}$ & \textbf{Contextualized} $\mathbb{R}^{d}$ \\
\midrule
Dimensionality & $V-1$ (high) & $d\le V$ (often $\ll V$) & $d\le V$ (often $\ll V$) \\
Geometry & Constrained manifold & Euclidean; codebook cells & Euclidean; contextual manifold \\
Target smoothness & Low (near vertices) & Atomic, non-smooth & Higher (good embedding models) \\
Calibration & Natural & Requires decoder & Requires decoder (context) \\
Expressivity (terminal) & Baseline & $\not\!\!>$ simplex (Prop.~\ref{prop:tok_not_more_expressive}) & $\ge$ simplex if decoder strong \\
Decoding ambiguity & Low & Medium (NN/energy) & High if not sufficient \\
Optimization & Hard (constraints) & Boundary brittle & Complex but smoother targets \\
\bottomrule
\end{tabular}
% }
\end{small}
\end{center}
\vspace{-2pt}
\end{table*}

\paragraph{Continuous diffusion generalizes looped transformer.}\label{subsubsec:exp_cont>loop}
It is known that looped transformers are already partially more expressive than CoT~\citep{latentthoughtslooped, logdepth, loopedlength}. 
We now further show that a continuous diffusion, in principle, can \emph{simulate} any looped transformer with the same architecture and parameter count, hence is at least as expressive as the powerful looped transformer (and potentially even more expressive).

\begin{proposition}[Continuous diffusion sampler simulates looped rollouts]\label{prop:diff_simulates_transformer}
Fix any looped transformer $\Phi_\theta$ and roll out times $T\in\mathbb{N}$, there exists a continuous diffusion sampler for the reverse PF--ODE by the explicit Euler method with step size $1/T$ that exactly reproduces the looped rollout. 
\end{proposition}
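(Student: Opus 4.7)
The plan is to fix the simplest possible continuous forward process, realize the score network inside a transformer of the same architecture as $\Phi_\theta$, and then verify that one explicit Euler step of the reverse PF--ODE coincides with one application of $\Phi_\theta$. Concretely, I would set zero drift $a_t \equiv 0$ and a constant diffusion coefficient $g_t \equiv g > 0$ in \eqref{eq:pf-sde}, discretize $[0,1]$ into $T$ uniform time steps $t_k = k/T$ for $k=0,\dots,T$, and initialize the sampler at $z_0 := h_0 \in \mathbb{R}^{L\times d}$ so that the diffusion state starts where the looped rollout starts. Under these choices the explicit Euler update of the PF--ODE \eqref{eq:pfode} with step size $1/T$ collapses to
\begin{equation*}
z_{k+1} \;=\; z_k \;-\; \tfrac{g^{2}}{2T}\, s_\theta(z_k, t_k),
\end{equation*}
so matching $z_{k+1}$ to $\Phi_\theta(z_k)$ reduces to choosing $s_\theta(z,t)=\tfrac{2T}{g^{2}}\bigl(z-\Phi_\theta(z)\bigr)$.

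Next I would build exactly this score network inside a DiT sharing the backbone of $\Phi_\theta$. The inner computation $\Phi_\theta(z)$ is carried out by reusing the transformer block unchanged; the residual subtraction $z-\Phi_\theta(z)$ is structural and introduces no new parameters, and the scalar prefactor $2T/g^{2}$ is absorbed into the fixed output linear head. DiT's time-conditioning (AdaLN) modules can be frozen to the identity modulation at every $t_k$, so the trainable weights of $s_\theta$ are in bijection with those of $\Phi_\theta$ and the shared readout $R$ can be inherited verbatim. Plugging this choice into the Euler update gives $z_{k+1}=z_k-(z_k-\Phi_\theta(z_k))=\Phi_\theta(z_k)$, and a one-line induction on $k$ yields $z_k=h_k$ for every $k$, so in particular $z_T=h_T$ and $R(z_T)=\ell_{\theta,T}$ exactly reproduces the looped transformer's final logits.

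The hard part is not the Euler arithmetic, which is essentially one line, but the "same architecture and parameter count" clause. I would have to carefully justify two things: (i) the fixed scalar $2T/g^{2}$ and the residual subtraction can be folded into the linear decoding layer already covered by the paper's convention of allowing trivial discrepancies in the encoding/decoding layers, and (ii) DiT's time-conditioning modules can be set to act as the identity without adding effective degrees of freedom. Once these architectural identifications are in place, the Euler--looped-transformer correspondence is exact for every $T\in\mathbb{N}$, so the family of trajectories realizable by $\Phi_\theta$ embeds into $\mathsf{F}_{\mathrm{cont}}(\theta)$.
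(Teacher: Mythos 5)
Your proof is correct and takes essentially the same route as the paper: both choose a vector field proportional to $\Phi_\theta(z)-z$ with a scaling factor of $T$ so that a single explicit Euler step of size $1/T$ collapses to one application of $\Phi_\theta$; you simply express this via the score $s_\theta = \tfrac{2T}{g^2}(z-\Phi_\theta(z))$, which under $a_t\equiv 0$, $g_t\equiv g$ yields the paper's scaled vector field $v_\theta^{(\mathrm{scaled})} = T(\Phi_\theta(z)-z)$ exactly. Your additional care in folding the scalar prefactor into the output head and freezing the AdaLN time-conditioning to identity is a useful elaboration of the paper's terser claim that the construction "uses the same network $\theta$."
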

The crucial intuition is one can always construct a reverse PF--ODE with grid outpoints matching the looped transformer roll outs, and is realizable with a denoising network with the same architecture and parameter budget as the looped transformer. The contrary does not hold: a deterministic looped rollout cannot simulate any non-degenerate stochastic path produced by a continuous diffusion with $g_t>0$ in the reverse SDE (\Cref{thm:diff_vs_looped}). 
%In addition, \citet{expressivelooptime} proves that timestep embeddings improve expressiveness, while standard looped transformers tend to discard. 

\subsection{Empirical Performance and Practical Trainability}\label{subsec:trainability}

Intriguingly, despite their strong theoretical expressiveness, previous looped transformers tend to exhibit limited empirical performance compared with SOTA LLMs. Meanwhile, continuous DLMs typically underperform their discrete counterparts, contradicting to the expressivity inclusion. In this subsection, we analyze these empirical observations through the lens of practical trainability.

\paragraph{Advantages of intermediate supervision.}
We point out a drawback in classical looped transformer training: they are typically trained as standard transformers (i.e., depth $T=1$), or trained with a fixed depth and only supervised on the last roll out. Consequently, LT would encounter out-of-distribution (OOD) problems when rolled out with a different time from training, since supervisions on the intermediate states of these depths are never received. 

Fortunately, continuous diffusion models naturally address this problem. During training, all continuous time instances (or sufficiently dense discrete timesteps) would be sampled and supervised by denoising loss, so the model is able to model all intermediate timesteps along the probability path. The progressively denoising parameterization also enables flexible number of function evaluations (NFEs) or diffusion timesteps in inference, which is hard for looped transformers without sophisticated special design.
Combining the advantage brought by \emph{intermediate supervision} and the theoretical expressivity inclusion in Proposition~\ref{prop:diff_simulates_transformer}, we conclude that instead of LTs, one can actually train CDMs which are expressive and easier to optimize.

\paragraph{Limitations in trainability of continuous diffusion.}
While theoretically expressive, previous CDMs typically underperform their discrete counterparts and AR LLMs in practice, necessitating us to rethink the reasons behind. In addition to the gap in engineering efforts, we argue that there are fundamental challenges of existing continuous diffusion: the larger decision space, the combinatorial complexity in decoding, and the deficient representation space. 

We summarize three generation spaces for CDMs in \Cref{tab:cont_space_comparison} and leave more rigorous definitions and detailed discussions to Appendix~\ref{sec_appendix:discussion}. In fact, all of them adhere \emph{larger decision spaces} compared with DDMs, which brings expressiveness gain yet incur harder optimization. In particular, the probability simplex adopted by \cite{ssdlm, duo} is \emph{high-dimensional} with potential hard constraints. Token-wise embedding space is the most common choice of early CDMs~\citep{diffuseq, diffusionlm}, which we argue is a \emph{deficient representation space} as it is not more expressive than the simplex with dimension $d\leq V$ (Proposition~\ref{prop:tok_not_more_expressive}). Moreover, the generation target is the atomic codebook representations (essentially a set in $\mathbb R^d$), posing difficulty for a CDM to generate. By contrast, contextualized embeddings (where token embeddings depends on the sequence contexts, e.g. hidden features in LLMs) provide more semantic information of the contexts and serve as a smoother generation target -- especially those high-quality representations from pretrained LLMs. However, the complicated and ambiguous contextualized embeddings present more difficulties in \emph{decoding featuring combinatorial complexity}. 
The analysis above is supported by quantitative experimental results demonstrated in \Cref{fig:loss_comparison}: utilizing the first layer of Qwen3-Embedding as the generation space (which produces essentially token-wise embeddings) results in the smallest cross-entropy but the largest generation MSE, while the last layer (which gives contextualized embeddings) lead to moderately small MSE and larger classification loss. 

\vspace{-1pt}
\begin{figure}[t]
  \centering
  \includegraphics[width=0.48\textwidth]{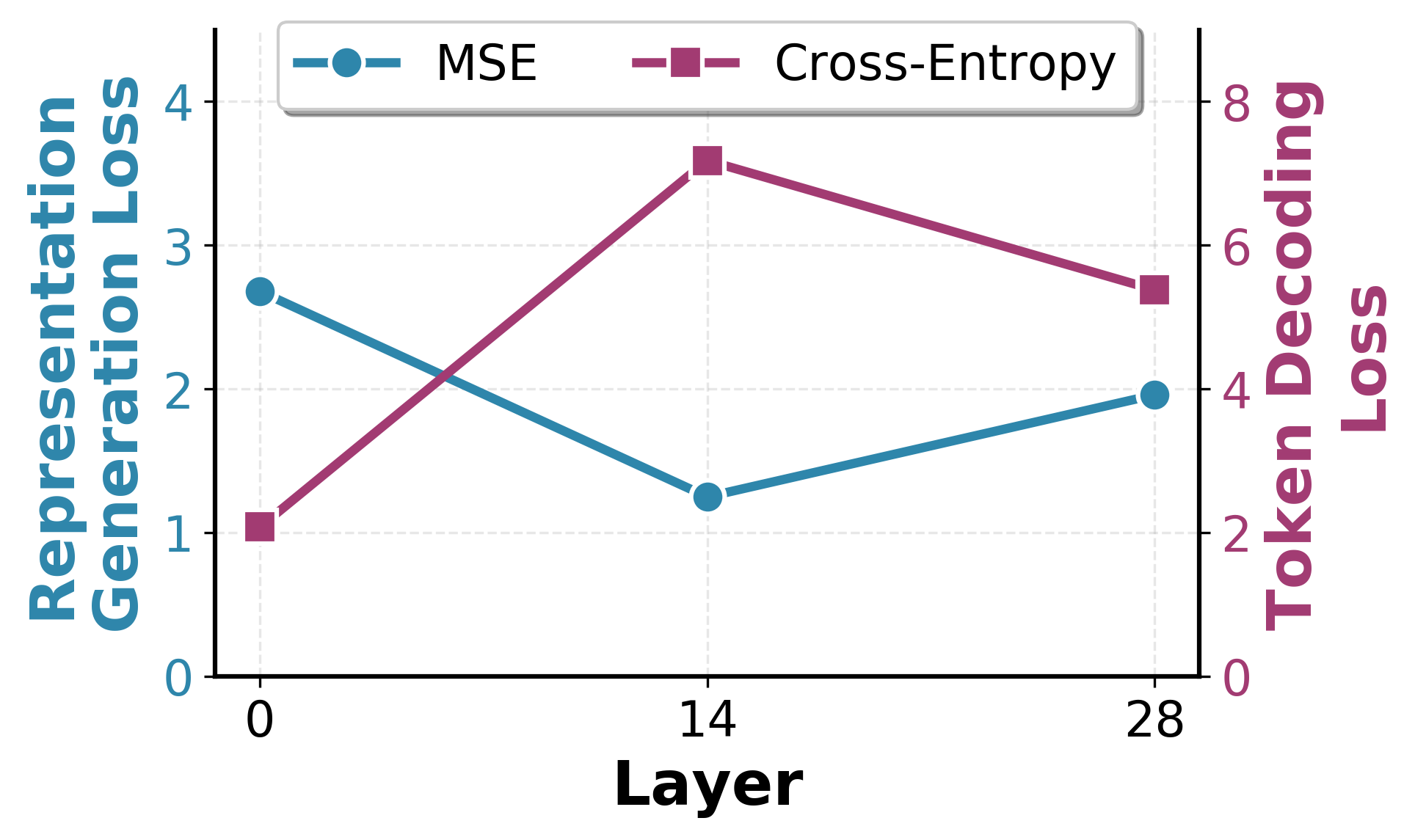}
  \vspace{-1pt}
  \caption{Comparison of validation losses when using representations from different layers of Qwen3-Embedding-0.6B as the latent spaces for CDMs.}
  \label{fig:loss_comparison}
  \vspace{-5pt}
\end{figure}

\begin{figure*}[t]
    \centering
    \includegraphics[width=0.85\linewidth]{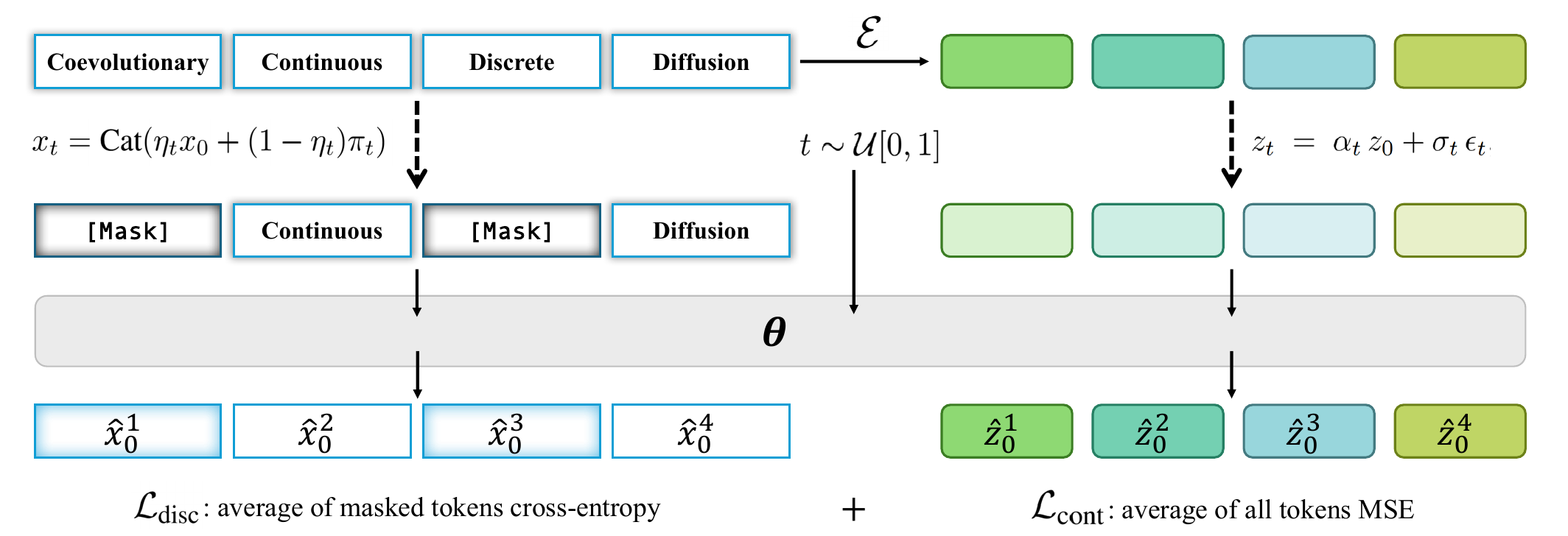}
    \vspace{-1pt}
\caption{Framework of Coevolutionary Continuous Discrete Diffusion.}
\label{fig:framework}
\vspace{-3pt}
\end{figure*}

\section{Coevolutionary Continuous Discrete Diffusion}\label{sec:ccdd}

Based on the insights in \Cref{sec:theory}, we aim to \textbf{bridge the gap between theoretical expressivity and practical learnability}. 
In this section, we overcome the expressivity-trainability dilemma by combining the continuous representation space with discrete diffusion. Remarkably, the discrete state reduces the uncertainty and complexity of input space, making optimization and decoding of the model easier. The continuous space enlarges expressivity upper bound, and representations from well-pretrained LLMs significantly improve the generation quality. 

\subsection{Joint Continuous-Discrete Diffusion}\label{subsec:diffusion_process} 

Now we introduce Coevolutionary Continuous Discrete Diffusion (CCDD), a diffusion model on the joint of discrete and continuous space $\mathcal X\times \mathcal Z$ (\Cref{fig:framework}).
In particular, we consider a \emph{joint} corruption process $(x_t,z_t)\in\mathcal{X}\times\mathcal{Z}$ that applies noise individually to each component, and a denoising process that \emph{conditions} on both $(x_t,z_t)$ but updates each component with its own modality-specific rule, i.e., both forward and backward process are parametrically factored.

\paragraph{Forward process.}
Let the forward law be the \emph{product} of a CTMC on $\mathcal{X}$ and an SDE on $\mathcal{Z}$, both possibly time-inhomogeneous, and independent \emph{conditional on} $(x_0,z_0)$, which gives the factored conditional forward kernels:
\begin{equation}\label{eq:joint_forward}
    q_t(x_t,z_t\mid x_0,z_0)=q_t^{\mathrm{disc}}(x_t\mid x_0)\,q_t^{\mathrm{cont}}(z_t\mid z_0).
\end{equation}
For instance, considering the representative forward process for continuous diffusion in \eqref{eq:vp_forward} and discrete diffusion in \eqref{eq:disc_forward_simple}, the corresponding $(x_t, z_t)$ follows: 
\begin{equation}
    z_t\sim \mathcal N(\alpha_t z_0,\sigma_t^2I), \quad x_t\sim \text{Cat}(\eta_t x_0+(1-\eta_t)\pi_t).
\end{equation}

\paragraph{Reverse process.}
A single time–conditioned network $f_\theta(\cdot,t)$ receives inputs $(x_t,z_t)$ and outputs modality-specific heads. We adopt the following \emph{factored} reverse update at a time step $t\to s$ ($s<t$), conditioned on the \emph{multimodal pair} $(x_t,z_t)$:
\begin{equation}\label{eq:factored_reverse}
p_\theta(x_s,z_s\mid x_t,z_t)\;=\;p_\theta^{\mathrm{disc}}(x_s\mid x_t,z_t)\;\,p_\theta^{\mathrm{cont}}(z_s\mid x_t,z_t).
\end{equation}
For simplicity, we illustrate with $x_0$-prediction, while other standard parameterizations such as $\epsilon$-prediction and $v$-prediction are equivalent. While the estimation for each modality depends on both states, i.e., $\hat x_{0,\theta}=\hat x_{0,\theta}(x_t,z_t,t),\ \hat z_{0,\theta}=\hat z_{0,\theta}(x_t,z_t,t)$, the following updates are carried on separately based on the original rules such as DDPM~\eqref{eq:reverse_sde} or DDIM~\eqref{eq:pfode} for $z_t$ and the Bayes posterior~\eqref{eq:reverse_bayes} for $x_t$; see \Cref{alg:sample_joint_factored} for algorithmic description.

Intuitively, this reverse process combines the \textit{radical} discrete decoder with high confidence, and the \textit{conservative} continuous decoder with full uncertainty information.
Thanks to the continuous component, CCDD is able to preserve full semantics in previous denoising steps and later leverage these historical information, which would mostly be discarded by masked DDM. CCDD is also capable of striking the balance between inference efficiency through few-step ODE sampling, and generation quality through test-time scaling with SDE.

Based on the established ELBOs for continuous and discrete diffusion, we calculate the loss for two modalities according to \eqref{eq:loss_cont} and \eqref{eq:loss_disc}, respectively. As illustrated in \Cref{alg:train_joint_factored}, the CCDD training loss is a (weighted) sum of two losses:
\begin{equation}\label{eq:loss_ccdd}
    \mathcal L_{\text{CCDD}}=\gamma_{\text{cont}}\cdot \mathcal L_{\text{cont}}+\gamma_{\text{disc}}\cdot \mathcal L_{\text{disc}}
\end{equation}

\begin{remark}[Conditioning vs.\ factorization]
Although \eqref{eq:factored_reverse} factorizes the \emph{kernel} at each step, each factor is allowed to depend on \emph{both} inputs $(x_t,z_t)$. Thus cross-modal coupling is injected via conditioning: $x$-updates see $z_t$ and vice versa. In other words, the factorization provides an efficient parameterization without making $x_t$ and $z_t$ to be independent in the reverse process. In fact, the factored forward processes admit semigroups~(\Cref{lem:trotter}), and the factored reverse kernels with sufficiently small steps yield aymptotically the same expressivity as fully coupled kernels~(\Cref{thm:factorization_effect}).
\end{remark}

\begin{table*}[t]
% \small
\begin{center}
% \vspace{-0.5pt}
% \renewcommand{\arraystretch}{1.15}
\caption{Validation perplexity on LM1B. We use Qwen3-Embedding-0.6B as the continuous generation space for CCDD, and reimplement the baselines with the same Qwen-2 tokenizer. CCDD-MDiT with the same number of parameters of FLOPs significantly outperforms the discrete-only MDLM baseline, reducing over $25\%$ perplexity. CCDD-MoEDiT and MMDiT further improve the performance.
% The numbers of trained tokens and non-embedding parameter counts are also reported for fair comparison. 
}
% \vspace{-3pt}
\label{tab:ppl_lm1b}
% \resizebox{\textwidth}{!}{
\begin{tabular}{lccc}
\toprule
Model & Train. toks. & \# params. & Validation PPL ($\downarrow$)\\
\midrule
MDLM~\citep{mdlm} (reimpl.) & 33B & 92.1M & $\leq$ 39.17\\
CCDD-MDiT w/ Qwen3 (ours) & 33B & 92.1M & $\leq$ 29.22\\
CCDD-MoEDiT w/ Qwen3 (ours) & 33B & 104.0M & $\leq$ 28.50\\
CCDD-MMDiT w/ Qwen3 (ours) & 33B & 216.2M & $\leq$ \textbf{25.76}\\
\bottomrule
\end{tabular}
% }
\end{center}
\vspace{-5pt}
\end{table*}

% \vspace{-0.5pt}
\subsection{Implementation Techniques}
\label{subsec:architecture_training}
% \vspace{1.5pt}

\paragraph{Architecture design.} Based on DiT~\citep{dit}, MM-DiT~\citep{sd3}, and MoE~\citep{moe}, we design several effective architectures for joint denoising with various parameters and complexities~(\Cref{fig:architecture}).
In particular, MDiT introduces no additional parameters or FLOPs in non-embedding layers; MMDiT and MoEDiT increase parameters and FLOPs in different manners with significantly better performance. More details are available in Appendix~\ref{subsec_appendix:architecture}.

\paragraph{Selection of continuous space: representation learning perspective.}
Based on the above analysis, we select contextualized embedding space obtained from pretrained LLM-based text encoders, such as Qwen3-Embedding~\citep{qwen3embedding}. The \textit{contextualized} embeddings provides rich sequence-level semantics and is easier to generate. Moreover, from a \emph{representation learning perspective}, the high-quality latents serve as representation regularization that accelerates the convergence of training~\citep{repa, reed}, and the high-level conditioning or guidance in inference~\citep{rcg, redi}. 
%This could also be viewed as implicit knowledge distillation. 
%The procedure of continuous diffusion learning to reconstruct the representations in pretrained LLMs can also be viewed as an implicit form of knowledge distillation. %We regard these representation learning benefits also as part of our novel contributions.

\paragraph{Classifier-free guidance.} The continuous representations can be viewed as self-generated representation guidance for discrete token generation. Analogously to classifier-free guidance (CFG)~\citep{ho2022classifier}, we treat the dual-modality forward as the conditional model (with output $\text{logits}_c$), and the discrete-only forward as the unconditional model (with output $\text{logits}_\phi$). In training, we randomly zero-in and zero-out all continuous token embeddings with probability $p_{\text{drop}}$, hence the model forwards with only discrete states. In sampling, the logits per-step with CFG are computed as $\text{logits}=w\cdot \text{logits}_c+(1-w)\cdot \text{logits}_\phi$ with the guidance scale $w$.

To summarize, CCDD is a novel language modeling regime that combines multimodal spaces to generate unimodal texts. CCDD generalizes CDM, DDM and LT, featuring high expressivity upper bound; meanwhile, the learnability is also improved through synergy: the discrete component provides capabilities for decoding, and the continuous component benefits from representation learning.

\begin{table*}[ht]
\begin{center}
\vspace{-0.5pt}
% \small
\renewcommand{\arraystretch}{1.15}
\caption{Validation perplexity on OWT with Qwen-2 tokenizer. CCDD is trained with Qwen3-Embedding-0.6B embeddings.
% The numbers of trained tokens and non-embedding parameter counts are also reported for fair comparison.
}
\label{tab:ppl_owt}
% \vspace{5pt}
% \resizebox{\textwidth}{!}{
\begin{tabular}{lccc}
\toprule
Model & Train. toks. & \# params. & Validation PPL ($\downarrow$)\\
\midrule
% GPT2~\citep{gpt2}$^\dagger$ & unk. & 117M & 23.40 \\
% Llama110M (retrain.)$^\dagger$ & 262B & 110M & 16.11 \\
% \midrule
% SEDD~\citep{sedd}$^\dagger$ & 262B & 92.1M & $\leq 24.10$\\
% MD4~\citep{shi2024simplified} & 524B & 21.80 & - \\
% MDLM~\citep{mdlm} (reimpl.) & 131B & 92.1M & $\leq 27.39$ \\
% GIDD+~\citep{gidd} (reimpl.) & 131B & 92.1M & $\leq 25.82$ \\
% \midrule
% CCDD-MoEDiT w/RoBERTa (ours) & 131B & 104.0M & $\leq 24.56$\\
% \midrule
MDLM (reimpl.) & 131B & 92.1M & $\leq$ 33.78\\
\midrule
CCDD-MDiT w/Qwen3 (ours) & 131B & 92.1M & $\leq$ 29.18\\
CCDD-MoEDiT w/Qwen3 (ours) & 131B & 104.0M & $\leq$ \textbf{21.90}\\
CCDD-MMDiT w/Qwen3 (ours) & 131B & 124.0M & $\leq$ 26.59\\
\bottomrule
\end{tabular}
% }
\vspace{-9pt}
\end{center}
\end{table*}

\begin{table*}[ht]
\begin{center}
\vspace{-0.5pt}
% \small
\renewcommand{\arraystretch}{1.15}
\caption{Validation perplexity on OWT with GPT-2 tokenizer. CCDD is trained with the simple RoBERTa-base or Qwen3-Embedding-0.6B embeddings, but still outperforms discrete baselines.
% The numbers of trained tokens and non-embedding parameter counts are also reported for fair comparison.
}
\label{tab:ppl_owt_gpt2}
% \vspace{5pt}
% \resizebox{\textwidth}{!}{
\begin{tabular}{lccc}
\toprule
Model & Train. toks. & \# params. & Validation PPL ($\downarrow$)\\
\midrule
GPT2~\citep{gpt2}$^\dagger$ & unk. & 117M & 23.40 \\
Llama110M (retrain.)$^\dagger$ & 262B & 110M & 16.11 \\
\midrule
SEDD~\citep{sedd}$^\dagger$ & 262B & 92.1M & $\leq 24.10$\\
% MD4~\citep{shi2024simplified} & 524B & 21.80 & - \\
MDLM~\citep{mdlm} (reimpl.) & 131B & 92.1M & $\leq$ 27.39 \\
GIDD+~\citep{gidd} (reimpl.) & 131B & 92.1M & $\leq$ 25.82 \\
\midrule
CCDD-MoEDiT w/RoBERTa (ours) & 131B & 104.0M & $\leq$ 24.56\\
\bottomrule
\end{tabular}
% }
\vspace{-9pt}
\end{center}
\end{table*}

% \vspace{-1.5pt}
\section{Experiments}\label{sec:exp}
% \vspace{2.5pt}

\paragraph{Experimental setup.}
We pretrain our models on the widely used One Billion Words Dataset (LM1B)~\citep{lm1b} and OpenWebText (OWT)~\citep{Gokaslan2019OpenWeb} dataset, following most settings in prior work~\citep{gidd,shi2024simplified, sedd, mdlm}. For LM1B, we use the standard split, and train models using sequence length $L=128$ with sentence packing. For OWT, following \cite{mdlm, gidd}, we reserve the last 100K documents as the validation set, and adopt sequence length $L=512$ with sentence packing. Instead of bert-base-uncased tokenizer on LM1B and GPT-2~\citep{gpt2} tokenizer on OWT, for both datasets we use GPT-2 tokenizer when train CCDD with RoBERTa~\citep{roberta}, and use Qwen-2~\citep{qwen2} tokenizer (also adopted by Qwen-3 series of models) when leverage Qwen3-Embedding~\citep{qwen3embedding} representations. Notably, perplexity calculated with different vocabulary sizes are not comparable: Qwen-2 tokenizer has approximately $3\times$ vocabulary size compared with GPT-2, naturally resulting in larger ELBO and perplexity -- we thus reproduce the baselines with the same tokenizer. We develop our transformer architectures MDiT, MMDiT, and MoEDiT (detailed in Appendix~\ref{subsec_appendix:architecture}) based on \cite{sedd} with the same configurations when plausible, which augments DiT~\citep{dit} with rotary embeddings~\citep{roformer}. All models are trained for 1M steps with batch size 512 on both datasets, corresponding to 33B tokens on LM1B and 131B tokens on OWT. We also evaluation CCDD on downstream benchmarks and complex reasoning tasks.
More experimental setup, implementation details and additional results are deferred to Appendix~\ref{sec_appendix:exp}.

\paragraph{Main results.} CCDD consistently outperforms discrete baselines on both benchmark, with comparable parameter and computation conjectures. 

The results on LM1B are reported in \Cref{tab:ppl_lm1b}. With the help of the powerful Qwen3-Embedding representations, CCDD-MDiT reduces validation perplexity by over $25\%$ compared with MDLM baseline using the same number of parameters. Moreover, it takes MDLM 1000k iterations to achieve the $39.17$ perplexity, while CCDD needs only 40k iterations, resulting in a \textbf{$25\times$ training acceleration}. Scaling the number of parameters via architectural improvement further enhances the performance. 

Shown in \Cref{tab:ppl_owt} and \Cref{tab:ppl_owt_gpt2} are the results on the more challenging OWT dataset trained with Qwen-2 tokenizer and GPT-2 tokenizer respecptively. \Cref{tab:ppl_owt_gpt2} demonstrates that even the simple RoBERTa embeddings could benefit CCDD training. 
By switching to the well-pretrained Qwen3-Embedding space and scaling the parameters, CCDD reveals significantly larger advantages (\Cref{tab:ppl_owt}), reducing over $35\%$ validation perplexity with comparable parameters.

\paragraph{Few-step generation performance.} CCDD has superior few-step generation performance, thanks to the capability of modeling joint distribution and global latent reasoning of the continuous component. We generate 256 sequences with length 512 with various denoising steps (\Cref{fig:few_step}), and CCDD with only 8 steps outperforms MDLM baseline with 256 steps, a $16\times$ acceleration.

\begin{figure}[h]
    \centering
    \includegraphics[width=0.9\linewidth]{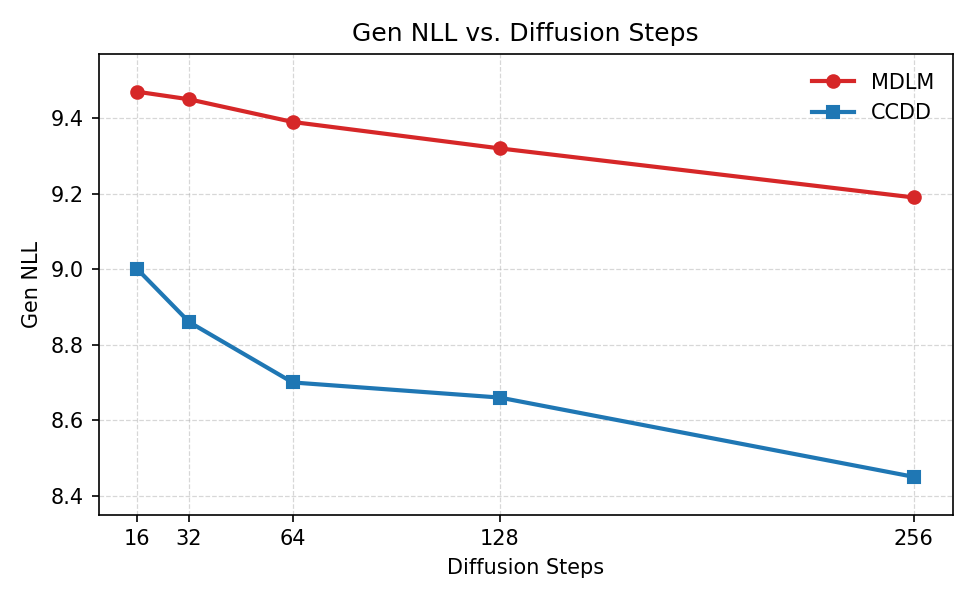}
    \caption{Few-step generative NLL with Qwen-2 tokenizer.}
    \vspace{-2pt}
    \label{fig:few_step}
\end{figure}

\paragraph{Classifier-free guidance.} To measure the inference-time flexibility, we report the generative NLLs of CCDD samples with inference-time CFG in \Cref{tab:gen_ppl}. We generate 256 samples with sequence length 512 using 512 denoising steps parameterized by DDPM and MDLM reverse process. We use GPT2-Large as the reference model, and the generative perplexity is calculated as the exponential function of the NLL. With either discrete-only forward ($w=1$) or standard joint forward ($w=0$), CCDD has superior performance, while CFG further improves the quality, verifying the advantages of latent reasoning.

% \vspace{-3pt}
\begin{table}[ht]
\centering
% \scriptsize
% \small
\caption{Generative NLLs of CCDD with Qwen3-Embedding pretraining on OWT with Qwen-2 tokenizer using CFG.}
\vspace{-0.5pt}
\begin{tabular}{lcc}
\toprule
Model & $w$ & Gen. NLL ($\downarrow$) \\
\midrule
MDLM & - & 9.19\\
CCDD-MoEDiT & 0.0 & 9.06\\
CCDD-MoEDiT & 1.0 & 8.38\\
CCDD-MoEDiT & 1.5 & \textbf{8.25}\\
\bottomrule
\end{tabular}
\label{tab:gen_ppl}
\vspace{-2pt}
\end{table}

\paragraph{Ablation studies.}
As shown in \Cref{fig:loss_comparison}, we compare the validation losses when using representations from different layers of Qwen3-Embedding-0.6B as the latent spaces for continuous diffusion models. The losses consist of two components: (i) the representation MSE loss, which we use to measure the difficulty in generating the target representations; and (ii) the token decoding cross-entropy loss, which measures the difficulty in decoding the generated latents into discrete tokens. All the representations are normalized, with a hidden dimension of $32$. All models leverage the DiT architectures and are trained on LM1B for 500k steps with same configurations. The results validate our hypothesis: the $0$-th layer (token-wise) corresponds to the smallest token loss but the largest representation loss, while the $28$-th layer (contextualized) admits moderate losses, striking a balance between the generativity and decodability. We refer readers to Appendix~\ref{sec_appendix:exp} for more ablation studies and additional results.

\paragraph{Complex reasoning: Sudoku, SAT, and Countdown.} For complex mathematical reasoning tasks, we select Sudoku, SAT and Countdown. We adopt the datasets and experimental settings in \cite{ye2024beyond}. Details of the tasks and datasets are available in \Cref{subsec_appendix:datasets}. By default, we train 6M models of different kinds from scratch, keeping hyperparameters consistent with \cite{ye2024beyond}.

As shown in \Cref{tab:complexreasoning}, the results on all three datasets consistently show that while MDM is more powerful than AR models (including the large pretrained LLaMA), looped transformer (LT) with even 2 looped depth have better performance, and \emph{CCDD with just two steps is sufficient to beat all models}. The empirical observations are consistent with our theory: LT is expressive through latent reasoning, and scaling depth is more effective than scaling CoT steps as in AR or increasing discrete diffusion steps in MDM. The results also validate that CCDD is at least as expressive as the better one within MDM and LT, and the joint design brings even further performance gain. Therefore, CCDD benefit from both any-order generation capability of MDM, and the strong papalism/deep effective depth of LT. Furthermore, CCDD is also the most efficient one: with the same architectures and parameter counts, it features more efficient training than LT who needs to roll out all depths in training, and fewer sampling steps suffices compared with MDM.

\vspace{-1pt}
\begin{table}[t]
\begin{center}
\caption{Test accuracy on complex reasoning datasets: Sudoku, 3-SAT, and Countdown.}
\vspace{-2pt}
\label{tab:complexreasoning}
\begin{small}
\resizebox{0.5\textwidth}{!}{
\begin{tabular}{lccccc}
\toprule
Model & Size  & Depth $T$ &   Sudoku   &   3-SAT   &   Countdown \\
\midrule
GPT2 scratch & 6M & Seq. Len. & 16.2 & 73.1 & 31.9\\
GPT2 scratch & 303M & Seq. Len. & 19.4 & - & 41.3\\
Llama & 7B & Seq. Len. & 27.1 & - & 41.1 \\
Llama & 13B & Seq. Len. & 33.8 & - & 51.1 \\
\midrule
MDM & 6M & 2 & 88.6 & 39.2 & - \\
MDM & 6M & 20 & 99.9 & 87.0 & 52.0 \\
\midrule
LT & 6M & 2 & \textbf{100.0} & 91.3 & 60.6 \\
LT & 6M & 3 & \textbf{100.0} & - & 68.2 \\
\midrule
CCDD (ours) & 6M & 2 &  \textbf{100.0}   & \textbf{91.9} & 67.8 \\
CCDD (ours) & 6M & 3 &  \textbf{100.0}   & - & \textbf{73.7}\\
\bottomrule
\end{tabular}
}
\end{small}
\end{center}
\vspace{-10pt}
\end{table}

% \vspace{-4pt}
\section{Conclusion}
% \vspace{2pt}

The contributions of the paper are sumarized as follows. Theoretically, we systematically analyze mainstream language modeling regimes through the lens of expressivity and trainability. We conclude that under same computation conjecture continuous diffusion dominates discrete diffusion, while being able to simulate looped transformer. However, although CLM overcomes the OOD problem of LT, it still lacks trainability due to several fundamental limitations.
Methodologically, we introduce Coevolutionary Continuous Discrete Diffusion (CCDD), which defines a joint diffusion process on both continuous and discrete space and levarages a single model to jointly denoise. CCDD is a novel language modeling scheme that retains both strong expressivity and trainability. We also present effective architectures as well as advanced training and sampling techniques.
Experimentally, pretrained CCDD on real-world datasets LM1B and OWT reveals significantly lower validation perplexity compared with baselines, and superior performance in downstream benchmarks and complex reasoning datasets further validates the strength of latent reasoning.

\iffalse
\section*{Accessibility}

Authors are kindly asked to make their submissions as accessible as possible
for everyone including people with disabilities and sensory or neurological
differences. Tips of how to achieve this and what to pay attention to will be
provided on the conference website \url{http://icml.cc/}.

\section*{Software and Data}

If a paper is accepted, we strongly encourage the publication of software and
data with the camera-ready version of the paper whenever appropriate. This can
be done by including a URL in the camera-ready copy. However, \textbf{do not}
include URLs that reveal your institution or identity in your submission for
review. Instead, provide an anonymous URL or upload the material as
``Supplementary Material'' into the OpenReview reviewing system. Note that
reviewers are not required to look at this material when writing their review.

% Acknowledgements should only appear in the accepted version.
\section*{Acknowledgements}

% C. Z. acknowledges Shangyuan Tong, Yifei Wang, and other members in Prof. Tommi Jaakkola's group for their discussions and advice. C. Y. acknowledges Prof. Zhiyuan Li for his suggestions.

\textbf{Do not} include acknowledgements in the initial version of the paper
submitted for blind review.

If a paper is accepted, the final camera-ready version can (and usually should)
include acknowledgements.  Such acknowledgements should be placed at the end of
the section, in an unnumbered section that does not count towards the paper
page limit. Typically, this will include thanks to reviewers who gave useful
comments, to colleagues who contributed to the ideas, and to funding agencies
and corporate sponsors that provided financial support.
\fi

% \clearpage
% \newpage
\section*{Impact Statement}

This paper presents work whose goal is to advance the field of Machine Learning, Language Generation and Reasoning. There are many potential societal consequences of our work, none which we feel must be specifically highlighted here.

% Authors are \textbf{required} to include a statement of the potential broader
% impact of their work, including its ethical aspects and future societal
% consequences. This statement should be in an unnumbered section at the end of
% the paper (co-located with Acknowledgements -- the two may appear in either
% order, but both must be before References), and does not count toward the paper
% page limit. In many cases, where the ethical impacts and expected societal
% implications are those that are well established when advancing the field of
% Machine Learning, substantial discussion is not required, and a simple
% statement such as the following will suffice:

% ``This paper presents work whose goal is to advance the field of Machine
% Learning. There are many potential societal consequences of our work, none
% which we feel must be specifically highlighted here.''

% The above statement can be used verbatim in such cases, but we encourage
% authors to think about whether there is content which does warrant further
% discussion, as this statement will be apparent if the paper is later flagged
% for ethics review.

% In the unusual situation where you want a paper to appear in the
% references without citing it in the main text, use \nocite
% \nocite{langley00}

\bibliography{iclr2026_conference}
\bibliographystyle{icml2026}

%%%%%%%%%%%%%%%%%%%%%%%%%%%%%%%%%%%%%%%%%%%%%%%%%%%%%%%%%%%%%%%%%%%%%%%%%%%%%%%
%%%%%%%%%%%%%%%%%%%%%%%%%%%%%%%%%%%%%%%%%%%%%%%%%%%%%%%%%%%%%%%%%%%%%%%%%%%%%%%
% APPENDIX
%%%%%%%%%%%%%%%%%%%%%%%%%%%%%%%%%%%%%%%%%%%%%%%%%%%%%%%%%%%%%%%%%%%%%%%%%%%%%%%
%%%%%%%%%%%%%%%%%%%%%%%%%%%%%%%%%%%%%%%%%%%%%%%%%%%%%%%%%%%%%%%%%%%%%%%%%%%%%%%
\newpage
\appendix
\onecolumn
\section{Related Work}
% \vspace{2pt}

\paragraph{Looped transformers and latent CoT.} 
Recent research has explored how enable a model with a fixed number of layers to think deeper about problems through architectural design or specialized training, effectively simulating a deeper transformer~\citep{latentreasonsurvey}. A fundamental strategy in this direction involves loop-based architectures~\citep{universaltransformer,cotformer,recursivetransformer}.
For instance, \citet{loopedlength} show that standard fixed-depth transformers struggle in length generalization which can be significantly improved by looped transformers. \citet{latentthoughtslooped} first proves that a $T$-depth looped transformer can implicitly generate latent thoughts and can simulate $T$ steps of CoT reasoning under mild assumptions. Furthermore, \citet{logdepth} shows that a looped transformer with depth $\Theta(\log n)$ can solve regular language recognition and graph connectivity with arbitrary input context length $n$, which is intractable by LLMs with logarithmic CoT steps.
Mixture of Recursion~\citep{mixturerecursion} is the recent looped transformer that practically works, which adaptively adjust the looping depth for tokens.

In contrast to architectural recurrence, which necessitates explicit structural changes, an alternative known as \textit{continuous chain-of-thought (continuous CoT)} achieves comparable computational advantages through specialized training of standard transformer models~\citep{coconut,codi,ccot,plantoken}. A representative is COCONUT~\citep{coconut}, which operates on the continuous token space instead of using recurrent parameters. COCONUT directly treats the last hidden state of previous tokens as reasoning tokens for CoT reasoning, allowing them to explore multiple reasoning paths simultaneously, akin to breadth-first search, without being constrained to natural language tokens~\citep{continuousCoTparallel, superpositioncoconut}. Continuous CoT outperforms standard discrete CoT in certain reasoning tasks demanding parallel searching and multiple reasoning paths, yet still falls behind in general tasks.
Notably, latent CoT could be simulated by continuous diffusion with diffusion forcing~\citep{diffusionforcing}, and we focus on looped transformer in the main context for clarity.

\paragraph{Diffusion language models.}
Diffusion language models emerge as a new paradigm that reformulates text generation as an iterative denoising process, enabling complex reasoning by leveraging full-sequence context. This paradigm primarily includes \textit{Masked Diffusion Models (MDMs)}, which are a type of DDMs, and \textit{Embedding-based Diffusion Models (EDMs)}, a subset of CDMs. 

MDMs operate on discrete tokens, starting from a masked sequence and refining tokens simultaneously using bidirectional context. Early foundational work includes D3PM~\citep{d3pm} and SEDD~\citep{sedd}, which introduced discrete transition processes and score matching losses. Subsequent methods~\citep{radd,mdlm,shi2024simplified} streamlined training through hybrid masked losses, facilitating the conversion of encoder models like BERT into generative reasoners. The iterative unmasking process inherent in MDMs supports sophisticated reasoning capabilities, such as iterative refinement~\citep{ired} and reverse-order reasoning~\citep{llada}. The framework has also been integrated with chain-of-thought reasoning~\citep{dot-sedd}, demonstrating strong performance in tasks requiring parallel context and systematic refinement. Similar algorithms are proposed from the flow matching perspective~\citep{discrete_flow_matching}. Additional to mask noises, some work try to leverage uniform noises which tend to have worse performance~\citep{gidd, ko_discrete_flow}. Another series of work extend intermediate states by introducing partially noised states in between mask and clean tokens~\citep{hdlm, mdmprime}, for example, HDLM~\citep{hdlm} leverages hierarchies of semantics for each token, where lower-level detailed tokens are noised into higher-level abstract tokens (such as cluster tokens) in the forward process, and the model progressively denoises by predicting the next semantic scale in the reverse process. MDM-Prime~\cite{mdmprime} extends the masking state by converting each word token into several subtokens and gradually mask the subtokens. Notably, these methods still operate in the discrete state spaces.

In contrast, EDMs perform diffusion in a continuous embedding space. EDM research focused on controllable generation~\citep{diffusionlm} and sequence-to-sequence tasks~\citep{categorical,tess, diffuseq}, with Plaid~\citep{plaid} later establishing empirical scaling laws that significantly narrowed the efficiency gap with autoregressive models. The framework was further extended by DoT-Plaid~\citep{dot-sedd}, which generalized chain-of-thought reasoning to EDMs, leveraging iterative latent refinement for improved coherence and mathematical reasoning. There are also a few continuous diffusion models operating on the logit space~\cite{ssdlm, duo}.

Some previous work have noticed the potential of multimodal generation integrating continuous and discrete diffusion, with applications to text-image joint generation~\citep{diffuseeverything} and protein sequence-structure co-design~\citep{multiflow}. DUO~\citep{duo} tries to connect two types of diffusion models via marginal matching, and apply distillation tricks for continuous diffusion to discrete text diffusion. In the context of diffusion language models, Diffuseq-v2~\citep{gong2023diffuseq} also tries to bridge continuous and discrete generation process by introducing a soft absorbing state. In comparison, our work generalize these results, propose a more principled method for joint continuous-discrete modeling, and provide systematic analysis on expressiveness and trainability. We also practically showcase that combining continuous and discrete models to benefit each other, and the powerful latent reasoning could significantly improve the expressivity and performance in complex reasoning tasks such as Sudoku.

\paragraph{Representation learning for diffusion models.} Recent advances in representation-enhanced diffusion model training show that high-quality representations from pretrained models could benefit the training efficiency and sampling quality of diffusion models through flexible ways~\citep{rcg, georcg, repa, reed, redi}. In particular, RCG~\citep{rcg} and GeoRCG~\citep{georcg} adopt two-stage generation processes where a representation generator first samples high-level features which serve as the conditions for the second-stage image or molecule generation. In contrast, REPA~\citep{repa} is a training-time technique that aligns the internal features of diffusion models with external pretrained representations, thereby accelerating the training procedure. REED~\citep{reed} unifies RCG and REPA from a theoretical perspective, and generalizes these methods by leveraging multi-modal representations and improved training curriculum. Furthermore, ReDi~\citep{redi} demonstrates that generating images and their representations at the same time also boosts generation quality of diffusion models.

\section{Omitted Proof}\label{sec_appendix:proof}
% \vspace{2pt}

\subsection{Theoretical Expressivity Analysis}
% \vspace{1pt}
This subsection provides proof for \Cref{subsec:expressivity} in the main text and presents additional results. We assume standard measurability/Lipschitz conditions when needed.

First we start with the concepts of circuit-complexity classes $\mathsf{TC^0}$ and $\mathsf{TC^1}$ 
\begin{definition}[$\mathsf{TC}^0$]
    A language $L$ is in $\mathsf{TC}^0$ if it is decided by a family of Boolean circuits $\{C_n\}$ such that: (i) $C_n$ has polynomial size; (ii) $C_n$ has \emph{constant depth} $O(1)$; (iii) gates are NOT, AND, OR, and unbounded-fan-in threshold gates.
\end{definition}
\begin{definition}[$\mathsf{TC}^1$]
    A language $L$ is in $\mathsf{TC}^1$ if it is decided by a family of Boolean circuits $\{C_n\}$ such that: (i) $C_n$ has polynomial size; (ii) $C_n$ has \emph{logarithmic depth} $O(\log n)$; (iii) gates are NOT, AND, OR, and unbounded-fan-in threshold gates.
\end{definition}
Some basic containments are:
\begin{equation}
    \mathsf{AC}^0 \subsetneq \mathsf{TC}^0 \subseteq \mathsf{TC}^1 \subseteq \mathsf{NC}^2 \subseteq \mathsf{P}.
\end{equation}
The intuition is that $\mathsf{TC}^0$ computes addition, multiplication, majority, parity, etc.\ in constant depth.
$\mathsf{TC}^1$ allows more power via logarithmic-depth threshold circuits, including division and iterated arithmetic.
Our results show that analogously to LT, CDM and CCDD can be in class $\mathsf{TC}^1$, showing the advantage of latent reasoning and continuous DLM.

\subsubsection{Continuous Diffusion Dominates Discrete Diffusion}
% \vspace{1pt}

We now give rigorous definitions of decision space and representation space of CDM and DDM.
\begin{definition}[Decision Space]
    Given any generation task and the context $\text{cond}$, the decision space is just the set of all possible outputs by the model parameterized with $\theta$. For a single token being generated, the decision space of a discrete diffusion is clearly $\Omega$, and that of a continuous diffusion is either $\Delta^{V-1}$ or $\mathbb R^d$ as already stated in Table 1. For multiple tokens with length $L$, the definition decision spaces can be easily extended to the combinations of single token decision spaces, i.e., $\Omega^L$ for discrete diffusion and $(\Delta^{V-1})^L$ or $(\mathbb R^d)^L$ for continuous diffusion.
\end{definition}
It is straightforward that the decision space of continuous diffusion is larger than the discrete diffusion: the former is absolutely continuous w.r.t. Lebesgue measure on $\mathbb R^{L\times d}$ (See Lemma 4), while the latter is finitely supported by a set (See Lemma 3). Thus the continuous diffusion is naturally harder to be learned by a neural network under same capacity.
\begin{definition}[Representation Space]
    The representation space is where the generation targets of a (latent) continuous diffusion lie in, determined by the pre-defined by the encoder $\mathcal E$, namely $\{\mathcal E(x)|x\in \Omega^{L}, L\in \mathbb N\}$.
\end{definition}
The encoder, as already explained in the preliminary section and Section 3.2, can be either the mapping to the probability simplex, or token-wise embedding defined by a codebook (such as a nn.Embedding layer), or a contextualized mapping through any pretrained LLMs or sentence embedding models (e.g., RoBERTa or Qwen3-Embedding used in the experiments). This is similar to image (latent) diffusion models: the representation space is usually defined by a pretrained VAE or an unsupervised learning model such as DINO.

\begin{assumption}[Regularity for continuous diffusion]\label{asmp:regular}
We assume $g_t>0$ on a set of times of positive measure in $[0,1]$, and $a_t$ is such that the Fokker--Planck equation is well-posed and yields absolutely continuous marginals for $t>0$ when starting from a distribution with a density or from any point mass convolved with the Gaussian noise of \eqref{eq:pf-sde}.
\end{assumption}

\begin{lemma}[Embedded discrete trajectories are finitely supported at each $t$]\label{lem:finite_support}
Fix any $t\in[0,1]$. For any $\{p_t\}_{t\in[0,1]}\in\mathsf{F}_{\mathrm{disc}}(\theta)$, the embedded marginal $q_t:=\mathcal{E}_\sharp p_t\in\widetilde{\mathsf{F}}_{\mathrm{disc}}(\theta)$ is supported on a \emph{finite} set in $\mathbb{R}^{L\times d}$. In particular, if $\mathcal{E}$ is one-hot or any fixed finite codebook, then $q_t$ is a finite mixture of Dirac masses in $\mathbb{R}^{L\times d}$.
\end{lemma}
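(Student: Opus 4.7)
\begin{proofsketch}
The plan is to exploit the fact that the ambient discrete sample space $\mathcal{X}=\Omega^L$ is itself finite, and that pushforward by a (deterministic) encoder cannot enlarge support. Concretely, I will (i) write $p_t$ as a finite convex combination of Dirac masses on $\mathcal{X}$, (ii) apply the change-of-variables formula for pushforward measures, and (iii) read off the support of $q_t=\mathcal{E}_\sharp p_t$ from the image $\mathcal{E}(\mathcal{X})\subset\mathbb{R}^{L\times d}$.

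First, I would observe that $|\mathcal{X}|=V^L<\infty$, so any probability measure $p_t\in\mathcal{P}(\mathcal{X})$ admits the canonical representation
\begin{equation*}
p_t=\sum_{x\in\mathcal{X}} p_t(\{x\})\,\delta_x,
\end{equation*}
with at most $V^L$ atoms. This representation is valid for every $t\in[0,1]$ because the discrete diffusion reverse process in \eqref{eq:reverse_bayes} produces a categorical distribution on $\mathcal{X}$ at each time. In particular, the trajectory law $\{p_t\}_{t\in[0,1]}\in\mathsf{F}_{\mathrm{disc}}(\theta)$ is, at every fixed $t$, supported on the finite set $\mathcal{X}$.

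Next, I would invoke the standard pushforward identity: for any measurable $A\subset\mathbb{R}^{L\times d}$,
\begin{equation*}
q_t(A)=(\mathcal{E}_\sharp p_t)(A)=p_t\bigl(\mathcal{E}^{-1}(A)\bigr)=\sum_{x\in\mathcal{X}:\,\mathcal{E}(x)\in A}p_t(\{x\}).
\end{equation*}
Equivalently, $q_t=\sum_{x\in\mathcal{X}} p_t(\{x\})\,\delta_{\mathcal{E}(x)}$, so $\mathrm{supp}(q_t)\subseteq\mathcal{E}(\mathcal{X})$, which has cardinality at most $V^L$. This establishes finite support. For the specific case of a one-hot encoder, $\mathcal{E}(\mathcal{X})$ is exactly the set of vertices of the product of simplices $(\Delta^{V-1})^L$ embedded in $\mathbb{R}^{L\times V}$; for any fixed finite codebook, $\mathcal{E}(\mathcal{X})$ is a finite collection of codevectors, and $q_t$ is a finite mixture of Diracs as claimed.

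There is no substantive obstacle here: the lemma is essentially a bookkeeping statement about the cardinality of $\mathcal{X}$ and the functoriality of pushforward. The only subtle point worth flagging is that $\mathcal{E}$ need not be injective in general, so $|\mathrm{supp}(q_t)|$ may be strictly smaller than $V^L$ if two distinct tokens collide under $\mathcal{E}$; this does not affect the conclusion since finiteness is preserved. The real work of \Cref{thm:strict_gap} will come from combining this lemma with Lemma~\ref{lem:ac_marginals}, which produces a continuous marginal in $\mathsf{F}_{\mathrm{cont}}(\theta)$ absolutely continuous with respect to Lebesgue measure and therefore not equal to any finitely supported measure.
\end{proofsketch}
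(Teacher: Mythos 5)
Your argument is correct and follows essentially the same route as the paper's proof: both observe that $|\mathcal{X}|=V^L$ is finite, so $p_t$ is finitely supported, and that pushforward by the deterministic encoder $\mathcal{E}$ yields $q_t$ supported on the finite image $\mathcal{E}(\mathcal{X})$. Your extra bookkeeping (explicit Dirac decomposition, the non-injectivity remark) is a fine elaboration but adds nothing beyond the paper's two-line argument.
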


\begin{proof}
For any $t$, $p_t$ is a probability vector over the finite set $\mathcal{X}=\Omega^L$ (size $V^L$). Hence $\mathrm{supp}(p_t)\subseteq\mathcal{X}$ is finite. The encoder $\mathcal{E}:\mathcal{X}\to\mathbb{R}^{L\times d}$ maps each $x\in\mathcal{X}$ to a single point $\mathcal{E}(x)$, and thus the pushforward $q_t(B)=p_t(\mathcal{E}^{-1}(B))$ is supported on the finite set $\{\mathcal{E}(x):x\in\mathcal{X}\}$. Therefore $q_t$ is a finite atomic measure in $\mathbb{R}^{L\times d}$.
\end{proof}

\begin{lemma}[Continuous diffusion produces absolutely continuous marginals]\label{lem:ac_marginals}
Under Assumption~\ref{asmp:regular}, for any $t>0$, the marginal $q_t\in\mathcal{P}(\mathbb{R}^{L\times d})$ arising from \eqref{eq:pf-sde} is absolutely continuous w.r.t.\ the Lebesgue measure on $\mathbb{R}^{L\times d}$. In the VP case \eqref{eq:vp_forward}, $z_t=\alpha_t z_0+\sigma_t\epsilon$ with $\sigma_t>0$ implies $q_t$ is a Gaussian smoothing of the law of $z_0$, thus absolutely continuous.
\end{lemma}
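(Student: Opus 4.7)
The plan is to handle the variance-preserving (VP) case explicitly by a convolution/Fubini argument and then reduce the general case to the stated regularity hypothesis in \Cref{asmp:regular}. In both cases the underlying mechanism is identical: the Brownian component of \eqref{eq:pf-sde} convolves the initial law with a non-degenerate Gaussian-type kernel that instantaneously destroys any singular part of $q_0$.

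For the VP case I would begin from the closed-form \eqref{eq:vp_forward}, so that conditional on $z_0$ the law of $z_t$ is $\mathcal{N}(\alpha_t z_0,\sigma_t^2 I)$ with $\sigma_t>0$ for any $t>0$. This conditional law admits the smooth Gaussian density $p_{t\mid 0}(z\mid z_0)=(2\pi\sigma_t^2)^{-Ld/2}\exp\!\big(-\|z-\alpha_t z_0\|^2/(2\sigma_t^2)\big)$ with respect to Lebesgue measure on $\mathbb{R}^{L\times d}$. Writing the marginal as $q_t(\cdot)=\int p_{t\mid 0}(\cdot\mid z_0)\,q_0(dz_0)$, for any Lebesgue-null set $A\subset\mathbb{R}^{L\times d}$ Fubini yields
\[
q_t(A)=\int\!\Big(\int_{A} p_{t\mid 0}(z\mid z_0)\,dz\Big)\,q_0(dz_0)=0,
\]
since the inner integral vanishes for every $z_0$. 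Hence $q_t\ll\mathrm{Leb}$, and in fact $q_t(z)=\int p_{t\mid 0}(z\mid z_0)\,q_0(dz_0)$ is jointly smooth in $(t,z)$ for $t>0$ by dominated convergence together with Gaussian tail bounds, which is exactly the "Gaussian smoothing of the law of $z_0$" statement in the lemma.

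For the general SDE \eqref{eq:pf-sde} I would invoke \Cref{asmp:regular} directly, after providing the following short justification for why the assumption is natural. Non-degeneracy of $g_t$ on a set of positive measure makes the forward Kolmogorov operator $\partial_t + \nabla\!\cdot(a_t\,\cdot)-\tfrac{1}{2}g_t^2\Delta$ parabolic (or parabolic-hypoelliptic after a time change), so standard PDE theory --- Hörmander's theorem, or the parametrix construction when $a_t,g_t$ are sufficiently regular --- delivers a smooth transition kernel $p_{s,t}(z_0,z)$ in $z$ for every $0\le s<t\le 1$. The marginal $q_t(\cdot)=\int p_{0,t}(z_0,\cdot)\,q_0(dz_0)$ is then absolutely continuous by repeating the Fubini argument from the VP case. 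If $g_t$ happens to vanish on an initial subinterval, Markovianity lets me restart the argument from any later time at which $g_t>0$ on a subsequent positive-measure set.

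The main obstacle is conceptual rather than computational: a fully general treatment without \Cref{asmp:regular} would have to grapple with degenerate or irregular $(a_t,g_t)$ for which the Fokker--Planck equation need not have an $L^1$ density, and with initial laws $q_0$ that may themselves be singular (e.g.\ atomic on a codebook, as in \Cref{lem:finite_support}). Since \Cref{asmp:regular} rules these pathologies out, the proof collapses to the explicit VP convolution computation above, which simultaneously serves as the concrete representative instance the lemma highlights. I expect no hidden difficulty beyond verifying measurability in the Fubini step, which follows from joint measurability of the Gaussian density.
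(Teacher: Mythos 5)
Your proposal is correct and matches the paper's argument: an explicit Gaussian-convolution/Fubini computation in the VP case, and an appeal to (hypo)ellipticity of the Fokker--Planck operator plus \Cref{asmp:regular} in the general case. The only difference is that you spell out the Fubini step and the possible ``restart'' when $g_t$ vanishes on an initial subinterval, details the paper leaves implicit.
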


\begin{proof}
With $g_t>0$ on a set of positive measure and standard regularity on $a_t$, the Fokker--Planck operator is (hypo)elliptic on $\mathbb{R}^{L\times d}$. Starting from any initial distribution with a density (or from a point mass, which immediately becomes smooth by Gaussian convolution when $g_t>0$), the solution $q_t$ admits a density for any $t>0$. In the VP instance \eqref{eq:vp_forward}, $z_t$ is an affine transformation of $z_0$ plus independent Gaussian noise with variance $\sigma_t^2 I$, hence $q_t$ is the convolution of the law of $\alpha_t z_0$ with a non-degenerate Gaussian, which is absolutely continuous.
\end{proof}

\begin{theorem}[Strict trajectory-level gap, \Cref{thm:strict_gap} in main text]
At any fixed $t\in(0,1]$, we have the following strict inclusion
\begin{equation}
    \widetilde{\mathsf{F}}_{\mathrm{disc}}(\theta)\ \subsetneq\ \mathsf{F}_{\mathrm{cont}}(\theta)
\quad\text{as subsets of }\ \mathcal{P}(\mathbb{R}^{L\times d})
\end{equation}
\end{theorem}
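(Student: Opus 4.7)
The plan is to prove the claim by separately establishing (i) the inclusion $\widetilde{\mathsf{F}}_{\mathrm{disc}}(\theta) \subseteq \mathsf{F}_{\mathrm{cont}}(\theta)$ and (ii) the \emph{strictness} of that inclusion, then combining them. Since Lemmas~\ref{lem:finite_support} and \ref{lem:ac_marginals} already encapsulate the structural gap between atomic and absolutely continuous marginals, the work is mostly in turning these into membership / non-membership statements at a fixed time slice.

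For the inclusion, I would construct, for each embedded discrete marginal $q_t = \mathcal{E}_\sharp p_t \in \widetilde{\mathsf{F}}_{\mathrm{disc}}(\theta)$, a continuous diffusion whose reverse-time marginal at $t$ coincides with $q_t$ (at least as a weak limit). By Lemma~\ref{lem:finite_support}, $q_t = \sum_{x\in\mathcal{X}} p_t(x)\,\delta_{\mathcal{E}(x)}$ is a finite mixture of Dirac masses. The natural device is to take narrow-Gaussian smoothings $q_t^{\sigma} = q_t * \mathcal{N}(0,\sigma^2 I)$ for $\sigma \downarrow 0$: each $q_t^{\sigma}$ is realized as the time-$t$ marginal of a VP reverse process whose score equals $\nabla \log q_t^{\sigma}$, which is a Lipschitz, analytically explicit function representable by a transformer of the same architecture and parameter count as the DDM denoiser (up to the input encoding layer), so the budget constraint in the definition of $\mathsf{F}_{\mathrm{cont}}(\theta)$ is respected. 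Taking $\sigma \downarrow 0$ yields $q_t^{\sigma} \Rightarrow q_t$ weakly, placing $q_t$ in $\mathsf{F}_{\mathrm{cont}}(\theta)$ under the natural closure convention (alternatively, one can permit the degenerate case $g_t \equiv 0$, which turns \eqref{eq:reverse_sde} into a PF--ODE that transports finitely many Diracs exactly).

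For the strictness, I would exhibit a single witness $\{p_t\} \in \mathsf{F}_{\mathrm{cont}}(\theta)$ that cannot lie in $\widetilde{\mathsf{F}}_{\mathrm{disc}}(\theta)$. The canonical choice is the VP schedule \eqref{eq:vp_forward} with $\sigma_t > 0$ on $(0,1]$ and any smooth realizable score. By Lemma~\ref{lem:ac_marginals} the marginal $p_t$ at any $t \in (0,1]$ is absolutely continuous with respect to Lebesgue measure on $\mathbb{R}^{L\times d}$, so it assigns mass zero to every finite set. By Lemma~\ref{lem:finite_support}, every element of $\widetilde{\mathsf{F}}_{\mathrm{disc}}(\theta)$ is supported on the finite set $\mathcal{E}(\mathcal{X})$. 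An absolutely continuous probability measure and a purely atomic one on the same space are distinct elements of $\mathcal{P}(\mathbb{R}^{L\times d})$, which gives the separation $p_t \notin \widetilde{\mathsf{F}}_{\mathrm{disc}}(\theta)$.

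The hard part will be making the inclusion direction rigorous, because Lemma~\ref{lem:ac_marginals} appears to \emph{block} exact realization: any genuine continuous diffusion with $g_t > 0$ yields AC marginals and hence never matches an atomic target. Reconciling this requires committing to one of the conventions above, either interpreting $\mathsf{F}_{\mathrm{cont}}(\theta)$ as closed under weak convergence (so the $\sigma \downarrow 0$ limit lies in the family) or explicitly admitting $g_t \equiv 0$ as a boundary case; in either case the AC witness above sits strictly inside and the embedded atomic marginals sit on the boundary, and the remaining verifications (architectural realizability of $\nabla \log q_t^\sigma$ within the parameter budget, and tightness of the mollified family so that weak limits exist) are standard mollification arguments. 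Once these two conventions are fixed, the statement follows immediately by combining the constructive inclusion with the AC-vs-atomic non-membership witness.
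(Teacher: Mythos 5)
Your strictness argument (an absolutely continuous marginal of the reverse SDE cannot be a finite atomic measure, so it witnesses $\mathsf{F}_{\mathrm{cont}}\setminus\widetilde{\mathsf{F}}_{\mathrm{disc}}\neq\emptyset$) is exactly what the paper does, combining Lemmas~\ref{lem:finite_support} and~\ref{lem:ac_marginals}. That part of your proposal is fine and lines up with the reference.

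The important difference is that you correctly identify a gap that the paper's own proof does not address. The paper's argument reads: discrete marginals are atomic, some continuous marginals are AC, ``hence $\widetilde{\mathsf{F}}_{\mathrm{disc}}\subsetneq\mathsf{F}_{\mathrm{cont}}$.'' That inference is not valid: showing the families are \emph{different} does not show one is contained in the other, and the inclusion direction $\widetilde{\mathsf{F}}_{\mathrm{disc}}\subseteq\mathsf{F}_{\mathrm{cont}}$ is never established. Worse, under Assumption~\ref{asmp:regular} ($g_t>0$ on a set of positive measure), Lemma~\ref{lem:ac_marginals} forces \emph{every} element of $\mathsf{F}_{\mathrm{cont}}$ at $t>0$ to be absolutely continuous, while every element of $\widetilde{\mathsf{F}}_{\mathrm{disc}}$ is atomic. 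So the two families are \emph{disjoint}, not nested, and the claimed inclusion is false as literally stated. You are the one who flags that Lemma~\ref{lem:ac_marginals} ``blocks exact realization''; the paper does not.

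Your two proposed repairs, taking a weak closure of $\mathsf{F}_{\mathrm{cont}}$ via $q_t*\mathcal{N}(0,\sigma^2 I)$ with $\sigma\downarrow0$, or explicitly admitting the degenerate $g_t\equiv0$ boundary case so the reverse PF--ODE transports a finite set of Diracs exactly, are both reasonable ways to make the theorem true, but neither is licensed by the paper's current definitions: Assumption~\ref{asmp:regular} rules out $g_t\equiv0$, and no closure convention on $\mathsf{F}_{\mathrm{cont}}$ is stated. You should also be wary of the mollification route under the paper's ``same architecture, finite capacity'' framing: as $\sigma\downarrow0$, the score $\nabla\log q_t^\sigma$ has Lipschitz constant blowing up like $1/\sigma^2$ near the atoms, so a fixed-size, fixed-parameter-budget denoiser cannot uniformly realize the whole sequence $\{q_t^\sigma\}_\sigma$; the limit then lies only in the closure, not in $\mathsf{F}_{\mathrm{cont}}$ itself. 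In short: your proof plan is more careful than the paper's and correctly diagnoses what is missing, but fully closing it requires amending either Assumption~\ref{asmp:regular} or the definition of $\mathsf{F}_{\mathrm{cont}}$, a fact worth stating explicitly rather than leaving as a ``convention.''
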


\begin{proof}
By Lemma~\ref{lem:finite_support}, each $q_t\in\widetilde{\mathsf{F}}_{\mathrm{disc}}(\theta)$ is supported on a finite set in $\mathbb{R}^{L\times d}$. By Lemma~\ref{lem:ac_marginals}, there exist (indeed, generically) $q_t\in\mathsf{F}_{\mathrm{cont}}(\theta)$ that are absolutely continuous and thus non-atomic. No absolutely continuous distribution can be a finite atomic measure; hence $\widetilde{\mathsf{F}}_{\mathrm{disc}}(\theta)\subsetneq\mathsf{F}_{\mathrm{cont}}(\theta)$ as sets of possible marginals at time $t$. The strictness holds for any $t>0$ with $g_t>0$ on a set of positive measure before $t$.
\end{proof}
However, notice that the actual gap may be small, and discrete models with sufficient capacities (such as LLMs) can still approximate distributions pretty well given sufficient vocabulary size, sequence length and training data. Thus in additional to the (marginal) expressivity gain, utilization of continuous representations is also a practical consideration for improving empirical performance: the well-pretrained representations could facilitate the diffusion model training via representation alignment and guidance~\citep{rcg, repa, reed, redi}.

We now give an information-theoretic quantification of information loss in discrete diffusion sampling. We quantify by differential entropy $h$ for continuous logits $l_\theta$ generated by CDM, and by Shannon entropy $H$ for categorical distributions $\text{Cat}(\text{softmax}(l_\theta))$ sampled by DDM.
\begin{lemma}[Information loss from token sampling in discrete reverse steps]\label{lem:info_loss}
Let $\ell_\theta(x_t,t)\in\mathbb{R}^V$ denote the logits predicted at a discrete reverse step, and let the next input be the sampled token $x_{t^-}\sim\mathrm{Cat}(\mathrm{softmax}(\ell_\theta))$. Assume $\ell_\theta$ is a continuous random vector with a non-degenerate distribution (e.g., due to data randomness). Then
\[
I(\ell_\theta;\,x_{t^-})\ \le\ H(x_{t^-})\ \le\ \log V\ <\ h(\ell_\theta),
\]
hence the mapping $\ell_\theta\mapsto x_{t^-}$ is information-losing, and the full logit geometry is not preserved along the trajectory.
\end{lemma}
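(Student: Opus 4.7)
My plan is to prove each inequality in the chain separately and then convert the strict middle gap into the desired information-loss statement. The first two are immediate: writing $I(\ell_\theta;x_{t^-})=H(x_{t^-})-H(x_{t^-}\mid \ell_\theta)$ and using non-negativity of conditional Shannon entropy on a finite alphabet gives $I(\ell_\theta;x_{t^-})\le H(x_{t^-})$, and the uniform bound $H(x_{t^-})\le \log V$ is just the maximum-entropy inequality for a random variable supported on at most $V$ atoms (equivalently, non-negativity of $\mathrm{KL}$ against the uniform distribution on $\{1,\dots,V\}$).

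The real work, and the main obstacle, is the third inequality $\log V < H(\ell_\theta)$, which requires a careful interpretation of $H$ applied to a continuous random vector. I would adopt the Shannon-entropy-as-supremum-over-quantizations convention, namely $H(\ell_\theta)=\sup_{\mathcal{Q}} H(Q(\ell_\theta))$, where $\mathcal{Q}$ ranges over finite measurable partitions of $\mathbb{R}^V$. Under the non-degeneracy hypothesis $\ell_\theta$ admits a density with respect to Lebesgue measure, and a standard fine-grid argument shows that the quantization entropy $H(Q_\delta(\ell_\theta))$ for a uniform mesh of size $\delta$ behaves like $h(\ell_\theta)+V\log(1/\delta)+o(1)$ and therefore diverges as $\delta\to 0$; in particular it eventually exceeds $\log V$, and the supremum does as well. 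The delicate point is pinning down what ``non-degenerate'' must mean: any mild tail assumption ensuring a finite differential entropy (for instance a finite second moment) suffices to rule out pathological densities with $h(\ell_\theta)=-\infty$ decaying faster than $V\log(1/\delta)$.

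With the three inequalities in hand, the strict gap $I(\ell_\theta;x_{t^-})\le \log V<H(\ell_\theta)$ combined with the identity $I(\ell_\theta;x_{t^-})=H(\ell_\theta)-H(\ell_\theta\mid x_{t^-})$ forces $H(\ell_\theta\mid x_{t^-})>0$, so $\ell_\theta$ cannot be almost-surely recovered from a single draw of $x_{t^-}$. This is precisely the sense in which the mapping $\ell_\theta\mapsto x_{t^-}$ is information-losing; equivalently, by the data-processing inequality along the discrete reverse chain, the full logit geometry cannot propagate across steps, justifying the ``finite-support bottleneck'' intuition invoked after Theorem~\ref{thm:strict_gap}.
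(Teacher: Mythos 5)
Your proposal is correct and reaches the same chain of inequalities the paper does, but via a slightly cleaner route and with a more careful interpretation of the problematic quantity $H(\ell_\theta)$. For the first inequality you use the decomposition $I(\ell_\theta;x_{t^-})=H(x_{t^-})-H(x_{t^-}\mid\ell_\theta)$ plus non-negativity of conditional entropy, whereas the paper invokes the data-processing inequality along $\ell_\theta\to\mathrm{softmax}(\ell_\theta)\to x_{t^-}$; both are valid, but your version is more direct and avoids the paper's somewhat garbled intermediate bound $I(\ell_\theta;\mathrm{softmax}(\ell_\theta))\le H(x_{t^-})$, which is not literally a data-processing step. The middle inequality $H(x_{t^-})\le\log V$ is handled identically in both.

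Where you add real value is the third inequality. The paper's proof only gestures at the issue (``its differential entropy can be arbitrarily large, and the discrete entropy $H(\lfloor\ell_\theta\rfloor)$ is also unbounded with quantization fineness''), leaving the reader to supply a definition of $H$ for a continuous vector. You fix the convention $H(\ell_\theta)=\sup_{\mathcal{Q}}H(Q(\ell_\theta))$ over finite partitions and use the standard quantization expansion $H(Q_\delta(\ell_\theta))\approx h(\ell_\theta)+V\log(1/\delta)$ to show divergence. One small refinement: the auxiliary hypothesis that $h(\ell_\theta)>-\infty$ (finite second moment, etc.) is not actually needed. Any non-atomic probability measure on $\mathbb{R}^V$ already has $\sup_{\mathcal{Q}}H(Q(\ell_\theta))=+\infty$, because non-atomicity lets you partition into $n$ cells of equal mass $1/n$ for every $n$, giving $H(Q)=\log n$ and hence an unbounded supremum. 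So the conclusion holds under exactly the stated ``continuous, non-degenerate'' hypothesis without further tail assumptions. Finally, in your last paragraph the identity $I(\ell_\theta;x_{t^-})=H(\ell_\theta)-H(\ell_\theta\mid x_{t^-})$ is formally $\infty-\infty$ under the supremum convention, so the subtraction is not well-defined; the cleaner route to the same conclusion is to observe directly that the conditional law of $\ell_\theta$ given $x_{t^-}$ remains non-atomic (conditioning on a discrete event cannot create atoms), hence $H(\ell_\theta\mid x_{t^-})=+\infty>0$ and $\ell_\theta$ is not recoverable from $x_{t^-}$. With that repair your information-loss interpretation is airtight and is in fact more explicit than the paper's closing remark.
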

\begin{proof}
By data processing inequality for the Markov chain $\ell_\theta\to \mathrm{softmax}(\ell_\theta)\to x_{t^-}$ (followed by categorical sampling), $I(\ell_\theta;x_{t^-})\le I(\ell_\theta;\mathrm{softmax}(\ell_\theta))\le H(x_{t^-})$. Since $x_{t^-}$ takes values in a finite set of size $V$, $H(x_{t^-})\le\log V$. Meanwhile $\ell_\theta$ is continuous/non-degenerate, so its (differential) entropy $h(\ell_\theta)$ can be arbitrarily large, and the discrete entropy $H(\lfloor \ell_\theta\rfloor)$ is also unbounded with quantization fineness; in particular, $h(\ell_\theta)$ is not bounded by $\log V$. Therefore $I(\ell_\theta;x_{t^-})<h(\ell_\theta)$; the mapping is many-to-one and loses information about $\ell_\theta$ beyond what is encoded in the sampled index.
\end{proof}

\subsubsection{Continuous Diffusion Generalizes Looped Transformer}
% \vspace{1pt}

\begin{assumption}[Mild regularity for numerical integration]\label{asmp:ode}
Assume the reverse PF--ODE \eqref{eq:pfode} uses a vector field $v_\theta(z,t):=a_t(z)-\tfrac12 g_t^2 s_\theta(z,t)$ that is globally Lipschitz in $z$ and piecewise continuous in $t$. Let $\{t_k\}_{k=0}^T$ be a partition of $[0,1]$ with $\Delta t_k=t_{k+1}-t_k$ and a standard one-step method $\Psi_{\Delta t_k}(z,t_k)$ (e.g., explicit Euler) that is consistent of order $\ge 1$.
\end{assumption}

\begin{proposition}[Continuous diffusion sampler can \emph{simulate} looped rollouts, Proposition~\ref{prop:diff_simulates_transformer} in main text]
Fix any looped transformer $\Phi_\theta$ and any $T\in\mathbb{N}$. Define the (deterministic) sampler for the reverse PF--ODE by the explicit Euler method with step size $1/T$ and choose the vector field on grid points by
\[
v_\theta(z,t_k)\ :=\ \Phi_\theta(z)-z,\qquad k=0,\dots,T-1.
\]
Then the sampler update is
\[
z_{t_{k+1}}\ =\ z_{t_k}\ +\ \Delta t\,v_\theta(z_{t_k},t_k)\ =\ \Phi_\theta(z_{t_k}),
\]
which exactly reproduces the looped rollout $h_{k+1}=\Phi_\theta(h_k)$ when we identify $h_k=z_{t_k}$.
\end{proposition}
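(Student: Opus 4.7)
The plan is to verify the proposition by direct algebraic construction, since the theorem statement itself already prescribes both the sampler and the required vector field; the only nontrivial tasks are (i) making the Euler identity precise given the declared step size, and (ii) confirming that the prescribed vector field is representable by a denoising network of the same architecture as $\Phi_\theta$.

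First I would fix the grid $t_k = k/T$ with spacing $\Delta t = 1/T$ and reparametrize time via $\tau = T t$, equivalently absorbing the factor $T$ into the field by setting $\tilde v_\theta(z, t_k) := T\,(\Phi_\theta(z) - z)$ so that the notational clash between "$\Delta t = 1/T$" and "$\Delta t\,v_\theta = \Phi_\theta(z)-z$" is resolved. With this normalization the explicit Euler update on each subinterval $[t_k, t_{k+1})$ reads
\begin{equation}
z_{t_{k+1}} \;=\; z_{t_k} + \Delta t\,\tilde v_\theta(z_{t_k}, t_k) \;=\; z_{t_k} + \bigl(\Phi_\theta(z_{t_k}) - z_{t_k}\bigr) \;=\; \Phi_\theta(z_{t_k}).
\end{equation}
Identifying $h_k := z_{t_k}$ then reproduces $h_{k+1} = \Phi_\theta(h_k)$ exactly, as in \eqref{eq:lt}. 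The regularity required by Assumption \ref{asmp:ode} is met by taking $\tilde v_\theta$ to be piecewise constant in $t$ on each $[t_k, t_{k+1})$, with Lipschitz-in-$z$ regularity inherited from $\Phi_\theta$.

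The main obstacle, though mild, is realizability: one must show that $\Phi_\theta(z) - z$ genuinely arises as a valid PF--ODE drift of the form $v_\theta(z, t) = a_t(z) - \tfrac{1}{2} g_t^2 s_\theta(z, t)$, where $s_\theta$ is implemented by a network of the same architecture and parameter count as $\Phi_\theta$. This is where the residual structure of transformer blocks is decisive: a standard (DiT) block has the form $\Phi_\theta(z) = z + R_\theta(z)$, where $R_\theta$ is the attention-plus-MLP residual branch, so $\Phi_\theta(z) - z = R_\theta(z)$ is literally the output prior to the outermost residual addition and reuses the very same weights. Choosing $a_t \equiv 0$ and folding $\tfrac{1}{2} g_t^2$ into a fixed output scaling, one may define $s_\theta(z, t_k) \propto -R_\theta(z)$, which fits the time-conditioned parameterization $f_\theta(\cdot, t, \mathrm{cond})$ without enlarging the parameter budget. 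The only caveat I would flag explicitly is that the simulation is inherently deterministic: no such construction can reproduce genuinely stochastic trajectories from the reverse SDE with $g_t > 0$, which is precisely the converse direction the proposition does not claim.
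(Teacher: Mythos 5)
Your proof is correct and uses the same core device as the paper's: both immediately recognize that with $\Delta t = 1/T$ and the literal vector field $\Phi_\theta(z)-z$ the Euler step gives $z + \tfrac{1}{T}(\Phi_\theta(z)-z)$ rather than $\Phi_\theta(z)$, and both fix this by rescaling to $\tilde v_\theta = T(\Phi_\theta(z)-z)$. The paper's proof stops there, adding only a one-sentence remark that the construction "uses the same network $\theta$ inside $\Phi_\theta$"; you go further and actually spell out \emph{why} the scaled field is realizable within the PF--ODE parameterization (choosing $a_t\equiv 0$, folding $\tfrac12 g_t^2$ into an output scale, exploiting the residual stream to obtain $\Phi_\theta(z)-z$ at no extra parameter cost) and why Assumption~\ref{asmp:ode} is met (piecewise-constant in $t$, Lipschitz in $z$). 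That is a genuine improvement in rigor over the paper's cursory handling. One small imprecision to tighten: for an $n$-layer looped block, $\Phi_\theta$ is a \emph{composition} of residual layers $(\mathrm{Id}+R_n)\circ\cdots\circ(\mathrm{Id}+R_1)$, so $\Phi_\theta(z)-z$ is not literally "the attention-plus-MLP residual branch" of a single layer unless $n=1$; the correct (and sufficient) observation is simply that subtracting the input $z$ is a parameter-free operation on the residual stream, so the difference is computable with the same weights and architecture regardless of $n$. Your closing caveat about determinism is accurate and matches the paper's Remark~\ref{rem:stochasticity} and Theorem~\ref{thm:diff_vs_looped}(ii), though it is not needed for the forward direction being proved here.
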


\begin{proof}
By construction, with $\Delta t=1/T$ and the explicit Euler method,
\[
z_{t_{k+1}}=z_{t_k}+\Delta t\,v_\theta(z_{t_k},t_k)
=z_{t_k}+\tfrac{1}{T}\big(\Phi_\theta(z_{t_k})-z_{t_k}\big).
\]
If we instead scale the vector field as $v_\theta^{(\mathrm{scaled})}(z,t_k):=T\big(\Phi_\theta(z)-z\big)$ while keeping $\Delta t=1/T$, then
\[
z_{t_{k+1}}=z_{t_k}+\Delta t\,v_\theta^{(\mathrm{scaled})}(z_{t_k},t_k)
=z_{t_k}+\tfrac{1}{T}\cdot T\big(\Phi_\theta(z_{t_k})-z_{t_k}\big)
=\Phi_\theta(z_{t_k}).
\]
Thus each sampler step equals one looped-transformer application. Since the construction uses the same network $\theta$ inside $\Phi_\theta$ (embedded into $v_\theta$ through the formula above) and a time index via $t_k$, the equality holds step by step.
\end{proof}

\begin{proposition}[Looped transformer can emulate diffusion ODE terminal maps with timestep embeddings and residual connections]\label{prop:transformer_emulates_diff}
Let the reverse sampler integrate the PF--ODE \eqref{eq:pfode} with a one-step method $\Psi_{\Delta t_k}$ under Assumption~\ref{asmp:ode}. Define a looped transformer $\Phi_\theta^{\mathrm{ode}}(\cdot;k)$ that, at step $k$, applies the numerical increment
\[
\Phi_\theta^{\mathrm{ode}}(z;k)\ :=\ \Psi_{\Delta t_k}(z,t_k)\ =\ z\ +\ \Delta t_k\,v_\theta(z,t_k)\ +\ \mathcal{O}(\Delta t_k^2),
\]
where $v_\theta(\cdot,t_k)$ is computed by the same $f_\theta(\cdot,t_k)$ (time-conditioned). Then unrolling $T$ steps computes the same discrete trajectory as the ODE sampler up to the integrator’s local truncation error; as $T\to\infty$ (mesh size $\max_k\Delta t_k\to 0$), the terminal error vanishes by standard numerical ODE theory.
\end{proposition}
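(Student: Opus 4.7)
The plan is to realize each numerical ODE step as a single residual transformer block and then invoke standard one-step-method convergence theory. First, I would construct $\Phi_\theta^{\mathrm{ode}}(\cdot;k)$ so that it implements the update $z\mapsto z+\Delta t_k\,v_\theta(z,t_k)$ (or the higher-order analogue $\Psi_{\Delta t_k}(z,t_k)$) using the \emph{same} time-conditioned network $f_\theta(\cdot,t_k)$ that defines $v_\theta$ in \eqref{eq:pfode}. A standard transformer block already propagates $z$ through a residual stream, consumes a timestep embedding (e.g., via AdaLN modulation), and can absorb the scalar factor $\Delta t_k$ into either that embedding or a learned output scaling, so the looped block realizes $\Psi_{\Delta t_k}$ by construction with the same parameter budget as the diffusion denoiser.

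Second, I would verify one-step \emph{consistency}. Under Assumption~\ref{asmp:ode}, the chosen one-step method has local truncation error of order $\mathcal{O}(\Delta t_k^{p+1})$ for some $p\ge 1$; that is, for an exact solution $z(\cdot)$ of the PF--ODE started at $z(t_k)$,
\[
\Psi_{\Delta t_k}(z(t_k),t_k)=z(t_{k+1})+\tau_k,\qquad \|\tau_k\|\le C\,\Delta t_k^{p+1}.
\]
This follows from the assumed consistency order together with the Lipschitz-in-$z$ and piecewise-continuous-in-$t$ regularity of $v_\theta$, which guarantee the Taylor expansions underlying the local error estimate.

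Third, I would combine consistency with \emph{stability} to bound the accumulated global error. Writing $z_{t_k}^{\mathrm{loop}}$ for the looped-transformer iterate and $z(t_k)$ for the exact ODE solution, the Lipschitz constant $L$ of $v_\theta$ in $z$ propagates errors across steps, and a standard discrete Gronwall argument yields
\[
\|z_{t_T}^{\mathrm{loop}}-z(1)\|\ \le\ C\,(\max_k \Delta t_k)^{p}\,e^{L}.
\]
As $T\to\infty$ with mesh size $\max_k\Delta t_k\to 0$, the right-hand side vanishes, giving the claimed terminal convergence.

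The main obstacle is really only conceptual: one must argue that a single transformer block with residual connections and timestep conditioning is expressive enough to realize $z\mapsto \Psi_{\Delta t_k}(z,t_k)$ using exactly the parameter budget of the diffusion denoiser. Since $v_\theta$ is by definition computed by $f_\theta(\cdot,t_k)$ and transformers natively support both residual updates and scalar modulations driven by timestep embeddings, this construction is essentially immediate; the remainder of the argument is textbook numerical ODE theory (consistency plus Lipschitz stability implies convergence).
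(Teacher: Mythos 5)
Your proof is correct and takes essentially the same route as the paper: define the looped block to implement the one-step integrator map $\Psi_{\Delta t_k}$ using the same $f_\theta(\cdot,t_k)$, then invoke consistency plus Lipschitz stability (discrete Gr\"onwall) to obtain global convergence as the mesh refines. You spell out the order-$p$ local-truncation bound and the architectural realization (residual stream, AdaLN timestep modulation, absorbing $\Delta t_k$) more explicitly than the paper's terser proof, but the underlying argument is identical.
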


\begin{proof}
At each step $k$, the looped transformer block applies the map $z\mapsto \Psi_{\Delta t_k}(z,t_k)$ using $f_\theta(\cdot,t_k)$ to evaluate $v_\theta$. Hence
\[
h_{k+1}=\Phi_\theta^{\mathrm{ode}}(h_k;k)=\Psi_{\Delta t_k}(h_k,t_k).
\]
This matches the sampler’s numerical update. The global error after $T$ steps is bounded by $C\max_k \Delta t_k$ for a Lipschitz vector field (by Gr\"onwall-type stability bounds and order-$1$ consistency). Taking the mesh to zero drives the terminal error to zero.
\end{proof}

\begin{remark}[Stochastic paths and determinism]\label{rem:stochasticity}
If $g_t>0$, the reverse \eqref{eq:reverse_sde} yields a \emph{distribution over trajectories}. A purely deterministic looped rollout $h_{k+1}=\Phi_\theta(h_k)$, given fixed initial $h_0$, cannot match a non-degenerate stochastic path law. If one augments the looped transformer with exogenous randomness (e.g., $u\sim\mathcal{N}(0,I)$ at initialization or fresh per-step noise) and allows conditioning on $u$ at each step, terminal distributions can be matched in principle by pushing $u$ through the unrolled network.
\end{remark}

\begin{theorem}[Strictness vs.\ parity: diffusion vs.\ looped transformer]\label{thm:diff_vs_looped}
Under the \emph{same} parameter budget and the “single time-conditioned network” protocol:
\begin{enumerate}
\item[(i)] (\emph{Deterministic ODE samplers}) If the reverse uses the PF--ODE (\eqref{eq:pfode} with $g_t\equiv 0$) and is implemented by a standard one-step method, then continuous diffusion is \emph{not} strictly more expressive in terms of terminal distributions: by Propositions~\ref{prop:diff_simulates_transformer} and \ref{prop:transformer_emulates_diff}, each can simulate the other’s discrete rollout (up to vanishing numerical error).
\item[(ii)] (\emph{Stochastic path laws}) If $g_t>0$ and the looped transformer is deterministic (no exogenous noise), continuous diffusion is \emph{strictly} more expressive at the \emph{trajectory} level (cannot match the non-degenerate stochastic path law with a deterministic map).
\end{enumerate}
\end{theorem}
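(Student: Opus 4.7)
\begin{proofsketch}
The plan is to handle the two parts essentially by bookkeeping: part (i) reduces to invoking Propositions~\ref{prop:diff_simulates_transformer} and \ref{prop:transformer_emulates_diff} in both directions, while part (ii) is a mutual-singularity argument on path space.

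For part (i), I would first apply Proposition~\ref{prop:diff_simulates_transformer} with the chosen number of rollouts $T$: this yields a PF--ODE reverse sampler (under the same architecture and parameter budget) whose explicit-Euler grid points with step $1/T$ coincide \emph{exactly} with the looped rollout $h_k$. Thus every terminal distribution realizable by the LT is realizable as a terminal marginal of the PF--ODE sampler. For the converse direction, I would fix any one-step integrator $\Psi_{\Delta t_k}$ and any partition $\{t_k\}$ satisfying Assumption~\ref{asmp:ode}, and define a looped block $\Phi_\theta^{\mathrm{ode}}(z;k):=\Psi_{\Delta t_k}(z,t_k)$, which is realizable since $v_\theta$ is evaluated through the same time-conditioned network $f_\theta(\cdot,t_k)$. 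By Proposition~\ref{prop:transformer_emulates_diff}, the $T$-step unrolling matches the ODE trajectory up to the $\mathcal O(\max_k\Delta t_k)$ global truncation error; refining the mesh drives this to zero. Combining the two inclusions (with the usual caveat that time-conditioning is permitted, consistent with the protocol) gives parity of terminal-distribution families in the deterministic-ODE regime, which is the content of (i).

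For part (ii), I would work on path space $C([0,1],\mathbb{R}^{L\times d})$ with the Borel $\sigma$-algebra generated by the uniform topology. Fix any initial $h_0$; a deterministic looped rollout produces a single continuous interpolant (say, piecewise-linear between grid points), so its law $\mu_{\mathrm{LT}}$ is a Dirac mass $\delta_{\gamma^\star}$ at that single path $\gamma^\star$. On the other hand, under Assumption~\ref{asmp:regular} with $g_t>0$ on a set of positive measure, the reverse SDE \eqref{eq:reverse_sde} has a strictly positive quadratic variation accumulated on that set, so its path law $\mu_{\mathrm{SDE}}$ assigns zero mass to any single path and in fact to any set of paths with zero quadratic variation. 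Consequently $\mu_{\mathrm{SDE}}\ne\mu_{\mathrm{LT}}$ for any deterministic $\gamma^\star$, and the trajectory-level family induced by the LT (the set of all Dirac path laws as $h_0$ and parameters vary) is strictly contained in that of continuous diffusion.

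The main obstacle I anticipate is in part (i): ensuring the ``simulation in both directions'' respects the stated protocol (same architecture, same parameter count, a single time-conditioned network) and does not covertly enlarge capacity. In Proposition~\ref{prop:diff_simulates_transformer} the vector field is built from $\Phi_\theta$ itself, so no new parameters are introduced; in Proposition~\ref{prop:transformer_emulates_diff} the block reuses $f_\theta(\cdot,t_k)$, which is admissible because the looped transformer is already allowed to be time-conditioned. The remaining technical point is to state the equivalence as an $\varepsilon$-approximation of terminal distributions in a suitable metric (e.g., Wasserstein or total variation after smoothing), so that the vanishing truncation error actually implies equality in the limit of family closures. Part (ii) is essentially immediate once the Dirac-vs-non-atomic dichotomy on path space is phrased correctly; the only subtlety is to avoid conflating terminal marginals (which need not disagree) with trajectory laws (which must), and this is already built into the statement.
\end{proofsketch}
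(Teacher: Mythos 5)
Your proposal is correct and follows essentially the same route as the paper: part (i) is proved by invoking Propositions~\ref{prop:diff_simulates_transformer} and~\ref{prop:transformer_emulates_diff} in both directions, and part (ii) is the Dirac-path-law-vs-non-degenerate-path-law dichotomy (your quadratic-variation justification of non-degeneracy is a harmless refinement of the paper's ``non-degenerate conditional distribution given $Z_1$'' phrasing). The extra care you take about $\varepsilon$-closure of terminal-law families and about staying within the parameter/architecture protocol is implicit in the paper's ``up to vanishing numerical error'' but does not change the argument.
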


\begin{proof}
(i) Proposition~\ref{prop:diff_simulates_transformer} shows a diffusion ODE sampler with Euler steps can exactly recover a looped rollout (by choosing $v_\theta^{(\mathrm{scaled})}(z,t_k)=T(\Phi_\theta(z)-z)$). Conversely, Proposition~\ref{prop:transformer_emulates_diff} shows a looped transformer can emulate the numerical ODE integrator step map $\Psi_{\Delta t_k}$; standard numerical analysis ensures convergence of the terminal state as the mesh refines.

(ii) Suppose the diffusion reverse is an SDE with $g_t>0$ on a set of positive measure. Then the terminal random variable $Z_0$ has a non-degenerate conditional distribution given $Z_1$ (and the Brownian path). A deterministic looped rollout $h_{k+1}=\Phi_\theta(h_k)$ with fixed $h_0$ is a measurable function of $h_0$ only and thus produces a Dirac path law; it cannot match a non-degenerate distribution over trajectories. Hence diffusion is strictly more expressive pathwise.
\end{proof}

Notably, if the looped model is allowed an auxiliary noise input $u$ (initial or per-step) and time conditioning, then the unrolled map can be written as $H_T=\Gamma_\theta(u)$ for some measurable $\Gamma_\theta$. For any target terminal law with a Borel probability measure on $\mathbb{R}^{L\times d}$, there exists a pushforward of a simple noise (e.g., Gaussian) that realizes it; with sufficient capacity, the looped model can approximate such a map. Thus terminal parity is achievable in principle.

\begin{remark}[Timestep embeddings and intermediate supervision]\label{cor:time_embed}
As shown in \citep{expressivelooptime}, timestep embeddings provably improve the expressiveness of looped transformers, though the gain might be marginal in practice.
Supervising intermediate diffusion steps (multi-$t$ losses) does not change the representable set either; it improves optimization and inductive bias. Stochastic SDE sampling ($g_t>0$) enlarges the \emph{path} distribution class (Remark~\ref{rem:stochasticity}) but does not imply a strict advantage on terminal laws once exogenous noise is also allowed in looped rollouts.
\end{remark}

We now further discuss the superiority of CCDD over looped transformer in terms of both performance and efficiency. As shown in \Cref{tab:complexreasoning}, while both CCDD and LT achieves perfect performance on Sudoku, CCDD is better on 3-SAT and Countdown. Moreover, another even larger advantage of CCDD lies in its efficient supervision and scalability/flexibility. In particular, a $T$-depth looped transformer training requires to roll our the entire depth, bringing $T\times$ memory and time consumption (and even so cannot provide intermediate supervision), while a CCDD with $T$ diffusion steps can fully simulate the former yet only requiring $1\times$ training cost, thanks to the efficient training formulation of diffusion models: in every iteration, we only need to sample one intermediate timestep, and the single forward pass can provide intermediate supervision. As a result, it is impossible to pretrain a looped transformer with effective depth $1000$ as we do in CCDD on OWT with 32 H100 GPUs, even for the small size models. Therefore, CCDD fully covers looped transformer in performance, while being scalable to real tasks. Moreover, loop transformer must predefine the looped depth, while CCDD is flexible during both pretraining and inference using the continuous-time Markov Chain (CTMC) and continuous-time SDE/ODE framework.

We conclude this subsection (corresponding to \Cref{subsec:expressivity} in the main text) by summarizing the comparison in expressivity as follows.
\begin{itemize}
\item \textbf{Discrete vs.\ Continuous Diffusion:} By Theorem~\ref{thm:strict_gap}, continuous diffusion strictly dominates discrete diffusion at the \emph{trajectory} level in $\mathbb{R}^{L\times d}$ (non-atomic intermediate marginals vs.\ finite-atomic). Lemma~\ref{lem:info_loss} shows discrete per-step token sampling loses full logit information, whereas continuous samplers preserve all coordinates without mandatory quantization. 
\item \textbf{Continuous Diffusion vs.\ Looped Transformer:} Continuous diffusion samplers can \emph{simulate} looped rollouts (Proposition~\ref{prop:diff_simulates_transformer}); looped rollouts can \emph{emulate} ODE samplers (Proposition~\ref{prop:transformer_emulates_diff}). Diffusion with $g_t>0$ is strictly more expressive \emph{pathwise} than deterministic looped rollouts (Theorem~\ref{thm:diff_vs_looped}(ii)).
\end{itemize}

\subsection{Analysis of CCDD}
% \vspace{1pt}
This subsection corresponds to \Cref{sec:ccdd} in the main text, providing theoretical results and more analysis of CCDD.

\subsubsection{Expressivity of CCDD}
% \vspace{1pt}

\paragraph{Expressivity comparison.}

We now compare the joint model against (i) \emph{continuous diffusion alone} (on $\mathcal{Z}$) and (ii) choices of reverse parameterization.

\begin{theorem}[Expressivity vs.\ continuous-only diffusion: joint law and marginals]\label{thm:expressivity_joint_vs_cont}
Consider the hypothesis classes of terminal laws $\mathcal{H}_{\mathrm{joint}}:=\{\,\text{laws of }(x_0,z_0)\text{ produced by the joint model}\,\}$ and $\mathcal{H}_{\mathrm{cont}}:=\{\,\text{laws of }z_0\text{ produced by continuous diffusion alone}\,\}$. Then:
\begin{enumerate}
\item[(i)] On the \emph{joint space} $\mathcal{X}\times\mathcal{Z}$, $\mathcal{H}_{\mathrm{joint}}$ strictly extends continuous-only modeling (which does not produce $x_0$ at all); i.e., the joint model is strictly more expressive if the target includes the discrete marginal.
\item[(ii)] On the \emph{continuous marginal} alone, the joint model does \emph{not} enlarge the class of $z_0$ laws beyond a continuous diffusion with the same parameter budget and time-conditioning: for any $P\in\mathcal{H}_{\mathrm{joint}}$, its $z$-marginal $P_Z$ lies in $\mathcal{H}_{\mathrm{cont}}$ (up to decoder equivalence).
\end{enumerate}
\end{theorem}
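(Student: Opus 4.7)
The plan is to treat the two parts separately and exploit their very different flavors: part (i) is a lifting argument showing that the joint model can realize couplings on $\mathcal{X}\times\mathcal{Z}$ unavailable to any continuous-only model, whereas part (ii) is a marginalization argument showing that integrating out $x$ in the joint reverse dynamics yields a $z$-marginal already expressible by a continuous diffusion on $\mathcal{Z}$.

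For part (i), I would first make the comparison well-posed by fixing a canonical embedding $\iota:\mathcal{H}_{\mathrm{cont}}\to\mathcal{P}(\mathcal{X}\times\mathcal{Z})$ that lifts a $z$-law to the joint space via a fixed (possibly stochastic) decoder $\mathcal{D}:\mathcal{Z}\to\mathcal{P}(\mathcal{X})$, producing the coupling $(z_0,x_0)$ with $x_0\sim\mathcal{D}(z_0)$. Containment $\iota(\mathcal{H}_{\mathrm{cont}})\subseteq\mathcal{H}_{\mathrm{joint}}$ is obtained by defining a joint reverse kernel whose discrete channel is trivial until the terminal step and then emits $\mathcal{D}(z_0)$. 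Strictness follows by exhibiting an explicit $P\in\mathcal{H}_{\mathrm{joint}}$ whose conditional $p(x_0\mid z_0)$ does not factor through any fixed decoder; concretely, I would use a CTMC component that injects auxiliary randomness absent from the continuous channel (e.g., a non-deterministic absorbing-state unmasking conditioned on both $x_t$ and $z_t$), producing $(x_0,z_0)$ pairs with a nonzero conditional variance in $x_0$ given $z_0$ that no $z$-only decoder can replicate.

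For part (ii), the central observation is that the $z$-marginal of the joint reverse process itself satisfies a reverse SDE / Fokker--Planck equation whose drift is fully determined by an effective score of the form $s_t^{\mathrm{eff}}(z):=\mathbb{E}_{x\sim q_t(\,\cdot\,\mid z)}[\nabla_{z}\log q_t(x,z)]$. This follows by integrating the joint Fokker--Planck equation over $x$ and using that the discrete CTMC generator acts trivially on functions of $z$ alone, so the $x$-marginalization leaves only the continuous transport terms with the marginal score substituted. Thus the $z$-marginal trajectory of any joint model is realized by a continuous diffusion whose drift depends only on $(z,t)$, placing its terminal $z$-law inside $\mathcal{H}_{\mathrm{cont}}$ modulo the capacity to represent $s_t^{\mathrm{eff}}$ with a continuous-only network.

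The main obstacle I expect is precisely this capacity step in (ii): $s_t^{\mathrm{eff}}$ is defined by an implicit expectation over the joint model's own discrete posterior, and showing that a continuous-only network with the \emph{same} parameter budget can realize it requires either a constructive embedding of the joint denoiser with marginalization internalized (e.g., by enumerating over $\Omega$ in the last layer and averaging with softmax weights) or a universality-style assumption. I would address this by stating the conclusion at the level of representable marginal laws rather than weight-for-weight equivalence, which is what the qualifier ``up to decoder equivalence'' is designed to absorb; this preserves the intended message that the joint formulation buys new expressivity only on the joint space, not on the $z$-marginal, while sidestepping the fine-grained parameter-accounting issue that would otherwise dominate the proof.
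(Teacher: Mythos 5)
Your proposal is correct in substance but follows a genuinely different route from the paper on both parts, and part~(i) in particular addresses a stronger version of the claim than the paper actually proves.

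For part~(i), the paper's proof is one line: a continuous-only model does not define a distribution on $x_0$ at all, so the joint model is trivially more expressive once the target includes the discrete marginal. You instead make the comparison nontrivial by lifting $\mathcal{H}_{\mathrm{cont}}$ to $\mathcal{P}(\mathcal{X}\times\mathcal{Z})$ via a \emph{fixed} decoder $\mathcal{D}$ and then exhibiting a joint law whose conditional $p(x_0\mid z_0)$ disagrees with $\mathcal{D}$. That is a more informative statement, but be careful with your stated intuition: ``nonzero conditional variance in $x_0$ given $z_0$'' is \emph{not} what no $z$-only decoder can replicate, since a stochastic $\mathcal{D}$ can produce arbitrary conditional variance. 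The correct reason strictness holds in your formulation is simply that a single fixed $\mathcal{D}$ cannot match every achievable conditional $p(x_0\mid z_0)$ simultaneously; if $\mathcal{D}$ were allowed to vary with the joint law, the inclusion would collapse to equality given part~(ii). The paper sidesteps this entirely by not introducing a lifting.

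For part~(ii), your route and the paper's both marginalize out $x$, but via different mechanisms. You integrate the joint Fokker--Planck equation over $x$ and identify the effective drift through the marginal-score identity $s_t^{\mathrm{eff}}(z)=\mathbb{E}_{x\sim q_t(\cdot\mid z)}[\nabla_z\log q_t(x,z)]=\nabla_z\log q_t(z)$, which is the standard ``marginal SDE'' argument; you then place the resulting law in $\mathcal{H}_{\mathrm{cont}}$ modulo a capacity assumption. The paper instead argues constructively: it builds a continuous-only network that, given $(z_t,t)$, internally forms conditional expectations $\mathbb{E}[\phi(x_t)\mid z_t,t]$ over a rich feature dictionary $\phi$ and uses universal approximation to reproduce the marginalized $z$-head of the joint denoiser. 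Both ultimately appeal to universality and to the ``up to decoder equivalence'' escape hatch, and both correctly identify the parameter-accounting step as the point that cannot be pinned down weight-for-weight. Your Fokker--Planck route is more analytically transparent about \emph{what} the effective dynamics are; the paper's route is closer to an explicit architectural reduction. Neither is strictly stronger than the other, and your proposal is a valid proof of the stated result.
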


\begin{proof}
(i) Trivial: continuous-only models do not define a distribution on $x_0$; the joint model does.

(ii) Let $P\in\mathcal{H}_{\mathrm{joint}}$ be realized by some reverse parameterization with $f_\theta(x_t,z_t,t)$. Define a continuous-only model with inputs $(z_t,t)$ whose network computes the same $z$-head as the joint model but with $x_t$ embedded by a deterministic encoder $\mathcal{E}$ predicted from $(z_t,t)$ (this is implementable because the joint reverse factor $p_\theta^{\mathrm{cont}}$ depends on $(x_t,z_t,t)$ only through a measurable function). Universal approximation guarantees allow the continuous-only network to approximate the composite map $(z_t,t)\mapsto p_\theta^{\mathrm{cont}}(\cdot\mid x_t,z_t,t)$ after marginalizing out $x_t$ under $q_t(x_t\mid z_t)$ (which the network can input as $\mathbb E[\phi(x_t)\mid z_t,t]$ for a rich feature dictionary $\phi$). Therefore the induced terminal $z$-law can be matched. The decoder equivalence argument is standard: terminal classification/regression heads can implement the same marginal law.
\end{proof}

\begin{remark}
Theorem~\ref{thm:expressivity_joint_vs_cont}(ii) states a \emph{marginal} parity: adding a discrete companion does not expand the set of achievable $z$-marginals at the level of function classes (with time-conditioning and equal total parameters). Practically it may ease optimization by providing an auxiliary target.
\end{remark}

\paragraph{Semigroup structure and sufficient expressivity of factored parameterization.}

Let $q_t(x,z)$ denote the joint forward marginal induced by \eqref{eq:joint_forward}. Then
\begin{equation}\label{eq:score_decomp}
\nabla_z \log q_t(x,z)\;=\;\nabla_z \log q_t(z\mid x),\qquad
\Delta_z \log q_t(x,z)\;=\;\Delta_z \log q_t(z\mid x),
\end{equation}
i.e., the continuous score depends on $x$ \emph{only through} the conditional $q_t(z\mid x)$.
For the discrete part, Bayes posteriors use $q_t(x\mid z)\propto q_t(x)\,q_t(z\mid x)$.

\begin{lemma}[Forward semigroup and Trotter factorization]\label{lem:trotter}
Let $\{T_t^{(z)}\}_{t\ge 0}$ and $\{T_t^{(x)}\}_{t\ge 0}$ be the Markov semigroups generated by the continuous FP operator $L_t^{(z)}$ and the discrete generator $L_t^{(x)}$ (Kolmogorov forward) respectively, both time-inhomogeneous but piecewise constant in small intervals. Then the joint forward semigroup on $\mathcal{X}\times\mathcal{Z}$ with \emph{independent} corruption is $T_t=T_t^{(x)}T_t^{(z)}=T_t^{(z)}T_t^{(x)}$ and, for a partition $0=t_0<\dots<t_N=t$ with mesh $\max_k\Delta t_k\to 0$,
\[
T_t\ =\ \lim_{\max\Delta t_k\to 0}\ \prod_{k=0}^{N-1}\Big(T_{\Delta t_k}^{(x)}\,T_{\Delta t_k}^{(z)}\Big)\ =\ \lim_{\max\Delta t_k\to 0}\ \prod_{k=0}^{N-1}\Big(T_{\Delta t_k}^{(z)}\,T_{\Delta t_k}^{(x)}\Big).
\]
\end{lemma}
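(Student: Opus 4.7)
The plan is to exploit the conditional independence built into \eqref{eq:joint_forward} and reduce the lemma to the fact that generators acting on disjoint coordinates commute. First I would observe that the product forward kernel implies, for any bounded test function of the product form $\phi(x)\psi(z)$, the conditional expectation factors: $\mathbb{E}[\phi(X_t)\psi(Z_t)\mid X_0=x, Z_0=z] = (T_t^{(x)}\phi)(x)\,(T_t^{(z)}\psi)(z)$. By linearity and the density of finite linear combinations of product functions in the relevant function space (Stone--Weierstrass on $\mathcal{X}\times\mathcal{Z}$ is immediate since $\mathcal{X}$ is finite and $\mathcal{Z}$ is locally compact Hausdorff), this extends to arbitrary test functions, yielding $T_t = T_t^{(x)}T_t^{(z)}$. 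The symmetric identity $T_t = T_t^{(z)}T_t^{(x)}$ follows by the same computation with the order of conditioning swapped; independence of the two noise sources makes this harmless.

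At the infinitesimal level this reflects the decomposition $L_t = L_t^{(x)} \otimes I + I \otimes L_t^{(z)}$ on the natural tensor-product core, whose two summands commute pointwise because they differentiate in disjoint variables. For the Trotter product, I would interpret each $T_{\Delta t_k}^{(\cdot)}$ as the two-parameter propagator $U^{(\cdot)}(t_k,t_{k+1})$ over the subinterval. Since the joint propagator itself already factors, the splitting product equals the true propagator on each subinterval, and chaining via Chapman--Kolmogorov gives $\prod_{k} T_{\Delta t_k}^{(x)}T_{\Delta t_k}^{(z)} = \prod_k U(t_k,t_{k+1}) = T_t$ with no splitting error. The mesh-refinement limit is therefore automatic, and the reverse ordering is handled identically.

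The main obstacle is notational rather than analytic: for time-inhomogeneous generators, $T_t^{(\cdot)}$ is really a two-parameter propagator $U^{(\cdot)}(s,t)$, not a one-parameter semigroup, so one must verify that $U^{(x)}(s,t)$ and $U^{(z)}(s,t)$ are well-defined under the CTMC generator $G_t$ and the VP SDE (both immediate from \Cref{asmp:regular}) and that their tensor-product action closes on a dense subspace of the joint function space. Were the forward generators coupled, the Trotter identity would reduce to an asymptotic Trotter--Kato formula with genuine splitting error controlled by $\|[L_t^{(x)},L_t^{(z)}]\|\max_k \Delta t_k^2$; the independence built into \eqref{eq:joint_forward} kills this commutator and makes the equality hold exactly at every mesh, not merely in the limit.
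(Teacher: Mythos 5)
Your proof is correct, and it takes a cleaner route than the paper's. The paper argues in two stages: it first identifies the joint generator as the sum $L_t = L_t^{(x)} + L_t^{(z)}$ by independence, then invokes the Chernoff--Lie--Trotter product formula to obtain the limit, and finally appeals to independence again to get commutativity at the semigroup level. Your argument instead establishes the exact factorization $T_t = T_t^{(x)} T_t^{(z)} = T_t^{(z)} T_t^{(x)}$ directly from the product forward kernel via a density argument (product test functions plus Stone--Weierstrass), and then observes that, since the two generators act on disjoint coordinates and hence commute at every time, the joint propagator factors on \emph{every} subinterval; chaining by Chapman--Kolmogorov makes the Trotter product an identity at each finite mesh rather than a limit with controlled splitting error. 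This buys you a sharper statement than the paper proves (the commutator term $\|[L_t^{(x)},L_t^{(z)}]\|\,\Delta t_k^2$ that Trotter--Kato normally has to suppress vanishes identically), and your note that time-inhomogeneity really means working with two-parameter propagators $U^{(\cdot)}(s,t)$ rather than one-parameter semigroups is a genuine subtlety the paper papers over with the phrase ``piecewise constant in small intervals.'' Both proofs are valid; yours makes the reason the limit exists transparent rather than delegating it to a named theorem that is, in this commuting setting, unnecessarily general.
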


\begin{proof}
Independence in \eqref{eq:joint_forward} implies that the joint generator is the \emph{sum} $L_t=L_t^{(z)}+L_t^{(x)}$ acting on functions $f(x,z)$ by $(L_t f)(x,z)=(L_t^{(z)} f)(x,z)+(L_t^{(x)} f)(x,z)$. For (piecewise) time-constant generators the Chernoff–Lie–Trotter product formula yields the stated limits; commutativity at the semigroup level follows from independence.
\end{proof}

\begin{theorem}[Effect of factored reverse kernels]\label{thm:factorization_effect}
Fix a time step $t\to s$ and consider the family of joint reverse kernels $\mathcal{K}=\{\,K(x_s,z_s\mid x_t,z_t)\,\}$. Let
\[
\mathcal{K}_{\mathrm{fact}}\ :=\ \Big\{\,K(x_s,z_s\mid x_t,z_t)=K_x(x_s\mid x_t,z_t)\,K_z(z_s\mid x_t,z_t)\,\Big\}
\]
be the \emph{factored} family in \eqref{eq:factored_reverse}. Then:
\begin{enumerate}
\item[(a)] \emph{Single-step limitation.} $\mathcal{K}_{\mathrm{fact}}$ is a strict subset of $\mathcal{K}$: there exist joint kernels with within-step couplings that cannot be written as a product independent of $(x_s,z_s)$ cross-dependence.
\item[(b)] \emph{Splitting sufficiency (small steps).} Suppose the target joint dynamics has generator $L_t=L_t^{(z)}+L_t^{(x)}$ (no cross-diffusion term), and the factored reverse uses \emph{conditionally coupled} factors $K_x(\cdot\mid x_t,z_t)$ and $K_z(\cdot\mid x_t,z_t)$. Then, by Lie–Trotter splitting, iterating factored kernels at a small step size $\Delta t$ \emph{alternately} (e.g., $K_z K_x$ per micro-step) converges to the \emph{same} joint semigroup as any coupled kernel generated by $L_t$, hence there is no expressivity loss at the trajectory level as $\Delta t\to 0$.
\end{enumerate}
\end{theorem}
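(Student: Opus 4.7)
The plan is to handle (a) by a direct conditional-independence obstruction, and (b) by an operator-splitting (Lie--Trotter) argument built on top of Lemma~\ref{lem:trotter}.

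For part (a), I would first observe that any $K\in\mathcal{K}_{\mathrm{fact}}$ enforces the conditional independence $x_s\perp z_s\mid(x_t,z_t)$, because both factors depend on $(x_s,z_s)$ only through their own coordinates. To exhibit strict containment, I would construct an explicit joint kernel that violates this: fix $(x_t,z_t)$, pick two distinct atoms $(a_0,\zeta_0),(a_1,\zeta_1)\in\mathcal{X}\times\mathcal{Z}$, and let $K$ put mass $1/2$ on each. The two marginals $K_x(\cdot\mid x_t,z_t)$ and $K_z(\cdot\mid x_t,z_t)$ are then uniform on $\{a_0,a_1\}$ and $\{\zeta_0,\zeta_1\}$, so any product of the marginals would also give mass $1/4$ to the off-diagonal points $(a_0,\zeta_1),(a_1,\zeta_0)$, contradicting $K$. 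This yields $\mathcal{K}_{\mathrm{fact}}\subsetneq\mathcal{K}$.

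For part (b), I would invoke a splitting argument. By hypothesis the target joint generator decomposes as $L_t=L_t^{(z)}+L_t^{(x)}$ with no cross term, so the target reverse-time semigroup $\widetilde T_t$ on bounded measurable $f(x,z)$ is, for piecewise constant coefficients, generated by the reverse of $L_t$, itself a sum of the reverses of $L_t^{(z)}$ and $L_t^{(x)}$ because \eqref{eq:score_decomp} shows the continuous score depends on $x$ only through the conditional law $q_t(z\mid x)$. Define reverse semigroup operators $\widetilde S^{(z)}_{\Delta t}$ and $\widetilde S^{(x)}_{\Delta t}$ from the coordinatewise reverses, and note that the factored kernel $K_z(\cdot\mid x_t,z_t)$ with $x_t$ frozen is a consistent order-$\geq 1$ approximation of $\widetilde S^{(z)}_{\Delta t}$, and similarly for $K_x$ with $z_t$ frozen (the frozen variable enters exactly as the correct conditioning in the score and in the Bayes posterior \eqref{eq:reverse_bayes}). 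Then the Chernoff--Lie--Trotter product formula applied as in Lemma~\ref{lem:trotter} gives $\prod_k \widetilde S^{(x)}_{\Delta t_k}\,\widetilde S^{(z)}_{\Delta t_k}\to \widetilde T_t$ strongly as $\max_k\Delta t_k\to 0$. Replacing each semigroup factor with the one-step kernel $K_x,K_z$ incurs an extra $O(\Delta t_k^2)$ error per step, which telescopes to an $O(\max_k\Delta t_k)$ global error, establishing trajectory-level convergence of the alternating factored scheme to a coupled kernel generated by $L_t$.

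\paragraph{Main obstacle.}
The routine piece is constructing the counterexample in (a); the substantive work sits in (b). The delicate point is justifying that ``conditioning on the current value of the other modality'' genuinely realizes the correct marginal reverse generator, rather than a modified one. This requires using the score decomposition \eqref{eq:score_decomp} to identify the $z$-drift-with-$x$-frozen as the reverse drift of $L_t^{(z)}$, and using the Bayes posterior to identify the $x$-update-with-$z$-frozen as the reverse of $L_t^{(x)}$. Once this identification is in place, the remaining task is the commutator/consistency bound $\|[\widetilde S^{(x)}_{\Delta t},\widetilde S^{(z)}_{\Delta t}]\|=O(\Delta t^2)$ together with one-step consistency of $K_x,K_z$, both of which are standard under bounded-generator or Lipschitz-coefficient hypotheses. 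With these ingredients, Chernoff's theorem delivers the claimed asymptotic parity between $\mathcal{K}_{\mathrm{fact}}$-iterates and fully coupled kernels generated by $L_t$.
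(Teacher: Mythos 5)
Your approach matches the paper's: part~(a) is a conditional-independence obstruction (the paper uses the deterministic constraint $x_s=\1\{z_s>0\}$ on $\mathcal{X}=\{0,1\}$, $z\in\mathbb{R}$, you use a two-point coupling — same idea), and part~(b) is the Chernoff--Lie--Trotter argument already set up in Lemma~\ref{lem:trotter}. Your version is slightly more careful than the paper's in (b), in that you explicitly argue why the \emph{reverse}-time generator still splits as $\widetilde L_t^{(z)}+\widetilde L_t^{(x)}$ (via \eqref{eq:score_decomp} for the $z$-drift and \eqref{eq:reverse_bayes} for the $x$-rates, with each piece acting only in its own coordinate but with coefficients depending on the other), and in spelling out the one-step consistency error $O(\Delta t_k^2)$ telescoping to $O(\max_k\Delta t_k)$; the paper asserts the small-step error rate without deriving it.

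One small bug in (a): you only require the two atoms $(a_0,\zeta_0),(a_1,\zeta_1)$ to be distinct as pairs. If $a_0=a_1$ (or $\zeta_0=\zeta_1$), the coupling \emph{does} factor as $\delta_{a_0}\otimes\tfrac12(\delta_{\zeta_0}+\delta_{\zeta_1})$ (resp.\ symmetrically), and the counterexample collapses. You need to require $a_0\neq a_1$ \emph{and} $\zeta_0\neq\zeta_1$ so that both marginals are genuinely two-point, and then your $1/4$-mass-on-off-diagonal argument goes through (noting, as you implicitly do, that in a product kernel each factor must equal its corresponding marginal). With that fix the proof is complete and equivalent to the paper's.
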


\begin{proof}
(a) Consider $\mathcal{X}=\{0,1\}$ and $z\in\mathbb R$. Define a joint kernel that enforces $x_s=\mathbf{1} \{z_s>0\}$ almost surely (hard constraint). This coupling cannot be expressed as $K_x(x_s\mid x_t,z_t)K_z(z_s\mid x_t,z_t)$ because any factorization leaves $x_s$ independent of $z_s$ \emph{given} $(x_t,z_t)$, contradicting the deterministic relation between $x_s$ and $z_s$.

(b) Under the assumed generator sum $L_t^{(z)}+L_t^{(x)}$, the exact joint semigroup over a small interval $\Delta t$ equals $e^{\Delta t (L_t^{(z)}+L_t^{(x)})}$, while the alternating product $e^{\Delta t L_t^{(z)}}e^{\Delta t L_t^{(x)}}$ (or the corresponding Markov kernels $K_z K_x$) approximates it with first-order error $\mathcal{O}(\Delta t^2)$. Over a partition with mesh $\max\Delta t\to 0$, the product converges to the exact semigroup (Chernoff–Lie–Trotter), hence the factored parameterization with alternating micro-steps is sufficient to realize the same family of joint laws.
\end{proof}

\begin{corollary}[Coupling via conditioning is enough in the small-step limit]\label{cor:conditioning_enough}
Even though $K(x_s,z_s\mid x_t,z_t)$ does not condition on $(x_s,z_s)$ jointly in one shot, allowing each factor to depend on \emph{both} $(x_t,z_t)$ and alternating the updates yields asymptotically the same expressivity as a fully coupled kernel driven by a sum-generator $L_t^{(z)}+L_t^{(x)}$. However, the diffusion process design of our language modeling task is flexible as long as it preserve the clean marginals, thus a factored process suffices.
\end{corollary}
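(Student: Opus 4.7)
The plan is to derive the corollary as a direct consequence of Lemma~\ref{lem:trotter} and Theorem~\ref{thm:factorization_effect}(b), with a short bridge argument establishing that the sum-generator hypothesis is met by our forward design, plus a remark on why modeling freedom makes this hypothesis harmless. First, I would observe that the joint forward kernel \eqref{eq:joint_forward} is by construction a product of a CTMC on $\mathcal{X}$ and an SDE on $\mathcal{Z}$ that are conditionally independent given $(x_0,z_0)$. Passing to infinitesimal generators, this product structure translates into an additive split $L_t=L_t^{(z)}+L_t^{(x)}$ acting on test functions $f(x,z)$, with no mixed term: the continuous part differentiates only in $z$, while the jump part acts only in $x$. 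The time-reversed dynamics inherits the same additive split (up to drift-sign changes) under the regularity assumed in Assumption~\ref{asmp:regular}, so the hypothesis of Theorem~\ref{thm:factorization_effect}(b) is satisfied.

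Next, I would invoke Theorem~\ref{thm:factorization_effect}(b) directly. Over a micro-interval $\Delta t$, the exact joint semigroup is $e^{\Delta t(L_t^{(z)}+L_t^{(x)})}$, while the alternating Markov product $K_z K_x$, where each factor is allowed to condition on the full pair $(x_t,z_t)$, realizes $e^{\Delta t L_t^{(z)}}e^{\Delta t L_t^{(x)}}$ and agrees with the exact semigroup up to $\mathcal{O}(\Delta t^2)$ local error. Chaining over a partition of $[0,1]$ with mesh tending to zero, the Chernoff--Lie--Trotter product formula (already used in Lemma~\ref{lem:trotter}) delivers convergence to the target coupled semigroup. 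The key conceptual point to articulate is the distinction between one-shot and iterated factorization: part~(a) of Theorem~\ref{thm:factorization_effect} shows that a single factored kernel cannot enforce a deterministic within-step coupling such as $x_s=\mathbf{1}\{z_s>0\}$, yet iterating factored kernels at fine resolution recovers the full coupled family, because cross-modal dependence is reinjected at every micro-step through the shared conditioning on $(x_t,z_t)$. Thus "coupling via conditioning" asymptotically replaces "coupling within the kernel."

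Finally, for the second sentence of the corollary, I would note that the language modeling task imposes only endpoint constraints on the joint diffusion: the $t=0$ marginal must equal the data joint $q_0(x_0,z_0)$, and the $t=1$ marginal must be a tractable noise prior. Any forward interpolating between these endpoints is admissible, and in particular the product form \eqref{eq:joint_forward} is without loss of generality for the learning objective. Combined with the splitting argument above, this is exactly what makes a factored reverse parameterization lossless in the small-step limit. The main obstacle is essentially bookkeeping rather than a new idea, since Theorem~\ref{thm:factorization_effect}(b) and Lemma~\ref{lem:trotter} already do the heavy lifting; the only genuine care is in verifying that the reverse generator inherits the additive decomposition from the forward (which requires enough regularity for time reversal on both the SDE and CTMC components to be well-defined simultaneously), and in emphasizing that the cross-modal coupling is preserved at the trajectory level even though any single factored kernel cannot express arbitrary instantaneous couplings.
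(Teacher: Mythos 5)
Your proposal is correct and follows essentially the same route as the paper: the corollary is intended as an immediate consequence of Lemma~\ref{lem:trotter} (additive generator from the product forward) and Theorem~\ref{thm:factorization_effect}(b) (Chernoff--Lie--Trotter convergence of the alternating conditionally-coupled factors), with the closing sentence justified by the freedom to choose any forward process preserving the clean marginals. Your added care about the reverse generator inheriting the additive split (coefficients of each block may depend on the other modality, but no mixed derivatives arise) is a correct and slightly more explicit version of what the paper leaves implicit.
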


\begin{remark}[When factorization truly hurts]
If the target joint generator includes \emph{cross terms} that cannot be written as $L^{(z)}+L^{(x)}$ (e.g., a diffusion on $\mathbb R^{L\times d}$ whose drift or diffusion matrix depends on \emph{future} $x_s$ rather than $x_t$, or a discrete jump rate at time $t$ depending on $z_s$), then any per-step factorization that does not see $(x_s,z_s)$ jointly will \emph{not} reproduce such dynamics without further inner iterations (e.g., Gibbs-within-step to sample $z_s$ and then $x_s$ conditioning on $z_s$).
\end{remark}

However, we can always construct a proper diffusion process without cross-terms as in the main text (as long as it preserve the terminal marginals), hence the factored parameterization is expressive enough for our language generation task.

\subsubsection{Designs of Schedules}\label{subsubsec_appendix:schedule}
% \vspace{1pt}

We now provide more discussions on the design space of noise schedules in CCDD.
In addition to approximately synchronous signal-noise ratios in two modalities by adjusting the coefficients $\alpha_t$ and $\eta_t$, we propose asynchronous noise schedules of the forward process in light of the ``representation guidance'' spirit. We set the information decay rate in continuous space slower than the discrete modality, so that in the reverse process the model would generate the latent representations faster (playing the role of implicit planning and reasoning), which serve as the high-level guidance for token decoding. %More details are available in Appendix~\ref{subsubsec_appendix:schedule}. 

\paragraph{Information theory perspective.} We start with the tools in information theory.

\begin{lemma}[Mutual information decay under factored corruption]\label{lem:mi_decay}
Let $I_t:=I((x_0,z_0);(x_t,z_t))$ denote the mutual information between data and the corrupted pair at time $t$. Under \eqref{eq:joint_forward} with independent noises, $I_t$ is non-increasing and satisfies
\[
\frac{d}{dt} I_t\ =\ \mathbb E\!\left[\,\frac{d}{dt}\log\frac{q_t(x_t,z_t\mid x_0,z_0)}{q_t(x_t,z_t)}\,\right]\ \le\ 0,
\]
with decomposition
\[
\frac{d}{dt}I_t\ =\ \frac{d}{dt}I\big((x_0,z_0);z_t\mid x_t\big)\ +\ \frac{d}{dt}I\big((x_0,z_0);x_t\mid z_t\big).
\]
In particular, for the VP SDE the continuous contribution equals the negative \emph{Fisher score power}:
\[
\frac{d}{dt}I\big((x_0,z_0);z_t\mid x_t\big)\ =\ -\,\mathbb E\big[\|g_t\,\nabla_z\log q_t(z_t\mid x_t)\|^2\big],
\]
and for the CTMC the discrete contribution equals (minus) a $\chi^2$-divergence–based entropy production rate.
\end{lemma}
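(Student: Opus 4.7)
The plan is to combine three classical ingredients: the data-processing inequality (DPI) for the joint Markov semigroup from Lemma~\ref{lem:trotter}, de Bruijn's identity for the continuous channel, and the entropy-production formula for a time-inhomogeneous CTMC. First I would use Lemma~\ref{lem:trotter} to conclude that $(x_t,z_t)$ is a joint Markov process with generator $L_t=L_t^{(z)}+L_t^{(x)}$ starting from $(x_0,z_0)$. The Markov chain $(x_0,z_0)\to(x_s,z_s)\to(x_t,z_t)$ for $s<t$ and DPI immediately give $I_s\ge I_t$, so $I_t$ is non-increasing. Writing $I_t=\mathbb E[\log q_t(x_t,z_t\mid x_0,z_0)/q_t(x_t,z_t)]$ and differentiating under the expectation (justified by standard integrability and Lipschitz conditions on $g_t$ and $G_t$) produces the stated $\dot I_t$; its non-positivity is the instantaneous form of relative-entropy dissipation along a Markov semigroup.

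Next I would perform the modality splitting. The generator sum $L_t=L_t^{(z)}+L_t^{(x)}$ means the instantaneous change in $I_t$ decomposes linearly into the part driven by the continuous channel (with the discrete channel frozen) and the part driven by the discrete channel (with the continuous channel frozen). Using the chain rule of mutual information
\[
I\big((x_0,z_0);(x_t,z_t)\big)=I\big((x_0,z_0);x_t\big)+I\big((x_0,z_0);z_t\mid x_t\big),
\]
and invoking the factorization \eqref{eq:joint_forward} (so that $z_t\perp x_t\mid(x_0,z_0)$), I would identify the $L_t^{(z)}$-driven piece of $\dot I_t$ with $\tfrac{d}{dt}I\big((x_0,z_0);z_t\mid x_t\big)$ and, by the symmetric ordering, the $L_t^{(x)}$-driven piece with $\tfrac{d}{dt}I\big((x_0,z_0);x_t\mid z_t\big)$. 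This yields the decomposition in the lemma.

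Finally I would compute each piece. For the continuous contribution under VP, I would apply a conditional de Bruijn identity: for each fixed $x_t$ the conditional density $q_t(z_t\mid x_t)$ evolves under the same Fokker--Planck equation with diffusion $g_t$, hence
\[
\tfrac{d}{dt}H(z_t\mid x_t)=\tfrac12 g_t^2\,\mathbb E\!\left[\|\nabla_z\log q_t(z_t\mid x_t)\|^2\right],
\]
while $H(z_t\mid x_0,z_0)$ is a Gaussian entropy whose derivative exactly cancels the unconditional Fisher-power piece arising from differentiating $H(z_t\mid x_t)-H(z_t\mid x_0,z_0,x_t)$, leaving the claimed signed Fisher-information expression. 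For the discrete contribution, I would differentiate $\sum_{x_t} q_t(x_t\mid z_t)\log[q_t(x_t\mid z_t)/q_t(x_t)]$ using $\dot q_t=G_t q_t$; standard algebra reorganizes the result into a Dirichlet form of $\chi^2$-type with respect to $G_t$, the entropy-production rate cited in the lemma, which is manifestly non-positive whenever $G_t$ is a bona fide CTMC generator.

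The main obstacle will be the conditional de Bruijn step: unconditional de Bruijn is classical, but the conditioning variable $x_t$ here is itself evolving, so I must justify that conditioning on $x_t$ commutes with the continuous FP dynamics at each instant. This is exactly where the independent-noise structure in \eqref{eq:joint_forward} is indispensable -- it makes the $z$-generator independent of the $x$-channel, so conditioning freezes $x_t$ and the standard identity applies pointwise in $x_t$ before integration. Once that commutation is in place, the remainder is bookkeeping: both modality-wise derivatives are expressible as (minus) non-negative Dirichlet forms, which simultaneously re-derives the monotonicity from step one and recovers the exact functional forms claimed.
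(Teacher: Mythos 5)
Your proposal is correct and follows essentially the same route as the paper's proof: data-processing for monotonicity, differentiation of the KL via the Kolmogorov forward equations for the derivative formula, the de Bruijn--Fisher identity for the continuous contribution, and the CTMC entropy-production/$\chi^2$ formula for the discrete one. Your explicit treatment of why conditioning on $x_t$ commutes with the Fokker--Planck dynamics (via the independent-noise factorization in \eqref{eq:joint_forward}) is a point the paper's proof glosses over, and is a worthwhile addition rather than a deviation.
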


\begin{proof}
Data processing along a Markov chain $(x_0,z_0)\to (x_t,z_t)$ yields non-increasing mutual information. The derivative formula follows from differentiating the KL defining mutual information and the Kolmogorov forward equations; for diffusions the de Bruijn–Fisher identity gives the continuous term; for CTMCs one uses the standard entropy production formula $\frac{d}{dt}H(p_t)=-\sum_{i\neq j} p_t(j) (G_t)_{ij}\log\frac{p_t(i)}{p_t(j)}$ and the $\chi^2$ representation of conditional MI rates.
\end{proof}

\paragraph{``Optimal'' coupling via Hyperparameter Matching.}

It is natural to have the question: is there an optimal way to couple the two modalities? We discuss the choice of forward hyperparameters (diffusion rate $\beta_t$ for $z$, jump rate $u_t$ for $x$) and their coupling.

\begin{definition}[Signal-to-noise and posterior conditioning]
Define the continuous SNR at time $t$ by $\mathrm{SNR}_z(t):=\alpha_t^2/\sigma_t^2$ (VP case), and the discrete SNR by the interpolation weight $\eta_t$ in $q_t(x_t\mid x_0)=\mathrm{Cat}(\eta_t x_0+(1-\eta_t)\pi_t)$ (USDM/masked). Define the \emph{joint conditioning strength}
\[
\kappa_t\ :=\ I\big((x_0,z_0);\ z_t\mid x_t\big)\ +\ I\big((x_0,z_0);\ x_t\mid z_t\big),
\]
which quantifies how informative each modality remains about its clean counterpart when conditioning on the other.
\end{definition}

\begin{proposition}[Entropy/MI matching heuristic](Informal)\label{prop:mi_matching}
Let $\beta_t$ and $u_t$ be chosen so that the \emph{rates} of MI decay from Lemma~\ref{lem:mi_decay} are balanced:
\[
\mathbb E\big[\|g_t\,\nabla_z\log q_t(z_t\mid x_t)\|^2\big]\ \approx\ \mathrm{Rate}_x(t),
\]
where $\mathrm{Rate}_x(t)$ denotes the CTMC's conditional MI decay rate (a $\chi^2$-type quantity). Then the two posteriors $q(x_0\mid x_t,z_t)$ and $q(z_0\mid x_t,z_t)$ maintain comparable \emph{conditioning difficulty} (well-conditioned Bayes factors), which in turn stabilizes training losses for heads in the discrete modality. %\eqref{eq:bayes_clean}–\eqref{eq:bayes_score}.
\end{proposition}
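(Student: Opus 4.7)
The plan is to make the informal claim precise by translating the matched MI--decay rates of Lemma~\ref{lem:mi_decay} into matched \emph{posterior concentration} for the two heads, and then observing that the training losses of the factored parameterization are controlled by exactly those posterior quantities. First I would rewrite the two conditional MIs via the chain rule, $I((x_0,z_0);z_t\mid x_t)=H(z_t\mid x_t)-H(z_t\mid x_0,z_0,x_t)$ and symmetrically for the discrete term; differentiating with respect to $t$ and invoking the de~Bruijn / Fisher identity on the continuous side and the CTMC entropy--production formula on the discrete side recovers the two explicit rates appearing in the proposition. This step is essentially bookkeeping given Lemma~\ref{lem:mi_decay}.

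Next I would connect each MI rate to a learnability quantity. On the continuous side, the denoising score--matching loss $\mathcal L_{\mathrm{cont}}$ in~\eqref{eq:loss_cont} equals (up to the schedule weight) the time integral of $\mathbb E\big[\|g_t\nabla_z\log q_t(z_t\mid x_0,x_t)\|^2\big]$ minus the marginal Fisher information, which is exactly the continuous Fisher score power appearing in Lemma~\ref{lem:mi_decay}. On the discrete side, the Rao--Blackwellized bound leading to $\mathcal L_{\mathrm{disc}}$ in~\eqref{eq:loss_disc} is dominated by the integrated CTMC entropy--production rate $\mathrm{Rate}_x(t)$. Thus each rate governs both (i) the instantaneous information leakage and (ii) the pointwise Bayes--posterior sharpness $q(z_0\mid x_t,z_t)$ and $q(x_0\mid x_t,z_t)$, via Fano-type inequalities ($H(x_0\mid x_t,z_t)\le h_b(P_e)+P_e\log V$) and Gaussian MMSE--score bounds on the continuous side.

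I would then conclude the proposition by a rate--matching argument: if the two instantaneous rates are approximately equal for all $t$, then after integration the two conditional entropies $H(z_0\mid x_t,z_t)$ and $H(x_0\mid x_t,z_t)$ shrink on comparable schedules, so neither head is asked to resolve a vastly sharper posterior than the other at any $t$. Quantitatively, the Bayes factor ratios $\log\frac{q_t(z_t\mid x_0)}{q_t(z_t)}$ and $\log\frac{q_t(x_t\mid x_0)}{q_t(x_t)}$ have matched variances, which stabilizes the effective per-sample loss scale and gradient norm for both heads in~\eqref{eq:loss_ccdd}. This is the sense in which ``conditioning difficulty'' is comparable.

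The main obstacle is that ``conditioning difficulty'' is not a single canonical quantity, and the two modalities live in different loss geometries: an $L^2$ score--matching objective on $\mathcal Z$ versus a cross-entropy on a finite simplex on $\mathcal X$. Making the matching fully rigorous requires choosing a common yardstick (e.g.\ normalizing both by $\log|\Omega|$ and $\tfrac12\log(1+\mathrm{SNR}_z(t))$ respectively, as in Gaussian channel capacity), and controlling the constants in Fano's inequality and the MMSE--Fisher identity so that equal MI rates translate into equal posterior entropy rates up to \emph{absolute} constants rather than modality-dependent ones. Because the statement is flagged as informal, I would present the proof at the level of matched rates and matched posterior entropy derivatives, and note that the stronger quantitative claim requires SNR--normalized schedules such as those in Appendix~\ref{subsubsec_appendix:schedule}.
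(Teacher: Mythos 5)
Your proposal follows essentially the same route as the paper, which offers no formal proof for this (explicitly informal) proposition but only a short preconditioning argument: matching the MI decay rates equalizes the expected Fisher information of the continuous head with the CTMC's entropy production, yielding comparable curvature of the two objectives via the de~Bruijn/Fisher identity and the entropy-production formula. Your additional machinery (the I-MMSE link between the score-matching loss and the Fisher score power, Fano-type bounds tying MI to posterior sharpness, and the honest caveat that ``conditioning difficulty'' lacks a canonical modality-independent normalization) elaborates and, if anything, sharpens the paper's heuristic rather than departing from it.
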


Intuitively, training both heads amounts to estimating $q(z_0\mid z_t,x_t)$ and $q(x_0\mid x_t,z_t)$. If one modality loses information much faster (e.g., $\mathrm{SNR}_z\ll \mathrm{SNR}_x$), then one posterior becomes broad/ill-conditioned relative to the other, causing gradient scale mismatch. Balancing the decay rates equalizes expected Fisher information in the continuous head and the discrete information production, yielding comparable curvature of the objectives (via de Bruijn/Fisher for $z$ and entropy production for the CTMC). This is a standard preconditioning argument based on matching Fisher blocks across modalities.

\begin{corollary}[Schedule matching guideline]\label{cor:schedule_match}
A practical rule is to pick $\beta_t$ and $u_t$ such that the \emph{monotone} functions $t\mapsto \mathrm{SNR}_z(t)$ and $t\mapsto \eta_t/(1-\eta_t)$ have similar slopes on a log-scale, e.g.,
\[
\frac{d}{dt}\log\mathrm{SNR}_z(t)\ \approx\ \frac{d}{dt}\log\frac{\eta_t}{1-\eta_t}.
\]
This equates the relative shrinkage of posterior variances (continuous) and odds (discrete), approximately stabilizing Bayes updates in $x_t$.
\end{corollary}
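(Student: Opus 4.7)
The plan is to derive the schedule-matching rule by translating the MI-rate balance condition of Proposition~\ref{prop:mi_matching} into a statement about the natural parameters of each modality, and then reading off log-derivative equality as the resulting prescription.

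First, I would handle the continuous side. Under the VP forward $z_t=\alpha_t z_0+\sigma_t\epsilon_t$, the de~Bruijn/Fisher identity gives
\[
\frac{d}{dt} I(z_0;z_t) \;=\; -\tfrac{1}{2}\,g_t^2\,\mathbb{E}\!\left[\|\nabla_{z}\log q_t(z_t)\|^2\right],
\]
and for a Gaussian proxy on $z_0$ the conditional posterior variance is $\mathrm{Var}(z_0\mid z_t)\propto 1/\mathrm{SNR}_z(t)$. Consequently the \emph{multiplicative} (relative) shrinkage rate of this variance is precisely $-\tfrac{d}{dt}\log\mathrm{SNR}_z(t)$. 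This is the canonical time-scale on which information about $z_0$ is being drained from $z_t$, and it also controls the Fisher-block curvature of the $z$-denoising head trained under \eqref{eq:loss_cont}.

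Second, I would carry out the analogous computation on the discrete side using the interpolation form \eqref{eq:disc_forward_simple}. For any two clean candidates $x_0=a,b$, the pairwise Bayes log–odds given $x_t$ is
\[
\log\frac{q_t(x_0=a\mid x_t)}{q_t(x_0=b\mid x_t)} \;=\; \log\frac{q_0(a)}{q_0(b)} + \Lambda_t(a,b;x_t),
\]
where $\Lambda_t$ is a bounded linear combination of $\log[\eta_t/(1-\eta_t)]$ and terms depending only on $\pi_t(x_t)$. For the masked kernel an unmasked observation yields $\Lambda_t\propto\log[\eta_t/(1-\eta_t)]$ exactly, and for the uniform kernel the same quantity is the dominant odds contribution after absorbing a $V$-dependent constant into $\pi_t$. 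Differentiating, the per-unit-time change in posterior log-odds is $\tfrac{d}{dt}\log[\eta_t/(1-\eta_t)]$, which is the discrete analogue of the continuous relative-variance shrinkage rate and drives the $\chi^2$-type entropy production rate appearing in Lemma~\ref{lem:mi_decay}.

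Third, I would invoke Proposition~\ref{prop:mi_matching} to close the argument: balancing the two conditional MI decay rates amounts to equalizing the Fisher/entropy-production rates across modalities. Since on the continuous side this rate scales multiplicatively with $\mathrm{SNR}_z(t)$ and on the discrete side with the odds $\eta_t/(1-\eta_t)$, balancing them on a multiplicative scale is exactly the condition
\[
\frac{d}{dt}\log\mathrm{SNR}_z(t)\;\approx\;\frac{d}{dt}\log\frac{\eta_t}{1-\eta_t},
\]
which is Corollary~\ref{cor:schedule_match}. The main obstacle is the discrete step: for the uniform generator, the MI decay rate of the CTMC is not literally $\tfrac{d}{dt}\log[\eta_t/(1-\eta_t)]$ but this quantity plus vocabulary-dependent corrections from $\pi_t=\tfrac{1}{V}\mathbf{1}$. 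I would handle this by showing that these corrections are of lower order in the small-step (high-SNR) regime where preconditioning matters most, so that the log-odds rate remains the dominant term; the masked case is clean and serves as the guiding instance, while the uniform case requires this additional asymptotic bookkeeping to justify the heuristic as written.
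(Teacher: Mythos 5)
The paper provides no formal proof of this corollary; it is stated as a heuristic ``practical rule'' that follows informally from the verbal intuition after Proposition~\ref{prop:mi_matching}, so you are doing strictly more work than the paper by trying to derive it. Your continuous-side argument is sound and matches the paper's intent: for a Gaussian proxy the posterior variance scales as $1/\mathrm{SNR}_z(t)$, so its relative shrinkage rate is $-\tfrac{d}{dt}\log\mathrm{SNR}_z(t)$, which is exactly the quantity the corollary asks you to match.

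The discrete side, however, contains a concrete error, and you have the masked and uniform roles reversed. For the \emph{masked} kernel $q_t(x_t\mid x_0)=\eta_t\,\delta_{x_0}+(1-\eta_t)\,\delta_{\mathtt{[MASK]}}$, the posterior $q_t(x_0\mid x_t)$ is either a Dirac mass at $x_t$ (if $x_t$ is unmasked) or equals the prior $q_0$ (if $x_t$ is masked); the pairwise log-odds $\Lambda_t(a,b;x_t)$ is therefore $\pm\infty$ or $0$ and never contains $\log[\eta_t/(1-\eta_t)]$. The quantity $\eta_t/(1-\eta_t)$ is instead the forward odds $\Pr(x_t\text{ unmasked}\mid x_0)/\Pr(x_t\text{ masked}\mid x_0)$, which does not depend on the identity of $x_0$ and hence does not enter the Bayes posterior odds over $x_0$. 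It is the \emph{uniform} kernel that behaves as you want: with $q_t(x_t\mid x_0)=\eta_t\,\delta_{x_0}+(1-\eta_t)\tfrac{1}{V}\mathbf 1$, the posterior log-likelihood-ratio for $x_t=a$ is $\log\bigl(1+\eta_t V/(1-\eta_t)\bigr)$, whose $t$-derivative is asymptotically $\tfrac{d}{dt}\log[\eta_t/(1-\eta_t)]$ for moderate-to-large $\eta_t V/(1-\eta_t)$. So your sentence ``the masked case is clean and serves as the guiding instance, while the uniform case requires additional asymptotic bookkeeping'' should be inverted: the uniform case is the clean guiding instance, and for the masked case the ``posterior odds'' in the corollary must be interpreted more loosely (e.g., as the forward keep/mask odds, or via an expected entropy-production/MI-rate argument per Lemma~\ref{lem:mi_decay}) rather than as a literal pairwise Bayes posterior log-odds over $x_0$, since the latter is degenerate. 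With that correction the overall route via Proposition~\ref{prop:mi_matching} is consistent with the paper's intended informal derivation.
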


\paragraph{Practical schedule designs.}
From a practical perspective, we could regard the continuous representations as the guidance for the discrete part, the continuous schedule then should be ahead of the discrete schedule (i.e., generated earlier in the reverse process) as useful high-level guidance. This intuition is also validated by other work involving multiple modalities~\citep{laproteina}. We adopt this strategy in most experiments, featuring linear schedule for discrete part ($\eta_t=1-t$) and concave schedule for continuous part ($\alpha_t=\sqrt{1-t}$).

We conclude this subsection by summarizing takeaway messages on CCDD.
\begin{itemize}
\item \textbf{Formulation.} The joint model formalizes a mixed SDE–CTMC system with independent forward corruptions and a reverse denoiser that \emph{conditions} on both $(x_t,z_t)$ but factors the per-step kernel across modalities.
\item \textbf{Expressivity vs.\ continuous-only.} Joint modeling is strictly more expressive on the \emph{joint space}; on the \emph{$z$-marginal}, it does not enlarge the class beyond continuous diffusion with comparable capacity (Theorem~\ref{thm:expressivity_joint_vs_cont}).
\item \textbf{Effect of factorization.} A single-step factored kernel cannot represent arbitrary within-step couplings (Theorem~\ref{thm:factorization_effect}(a)), but alternating factored updates at small step sizes attains the same semigroup when the target generator splits (Theorem~\ref{thm:factorization_effect}(b), Corollary~\ref{cor:conditioning_enough}).
\item \textbf{Coupling/schedules.} Matching information decay across modalities (Proposition~\ref{prop:mi_matching}, Corollary~\ref{cor:schedule_match}) yields well-conditioned posteriors. 
\item \textbf{Information-theoretic lens.} The continuous score is $\nabla_z\log q_t(z\mid x)$; MI decays additively across modalities under independence (Lemma~\ref{lem:mi_decay}). Balancing decay rates aligns Fisher/entropy production and stabilizes learning.
\end{itemize}

\section{Implementation Details}\label{sec_appendix:impl}
% \vspace{2pt}

\subsection{Architecture Design for CCDD}\label{subsec_appendix:architecture}
% \vspace{1pt}

The architecture consists of basic diffusion transformer blocks. Optional timesteps conditioning is embedded through adaLN. For standard DiT blocks, attention is followed by MLPs. 

\textbf{DiT.}
Standard DiT except that we mix continuous and discrete embeddings before the first DiT block (through adding or concatenating), and decode both continuous and discrete tokens from the output of the last DiT block. Therefore, the actual processed tokens are of shape $[B, L, d]$ and the attention complexity is $\mathcal O(L^2)$.

\textbf{MMDiT.}
Since MMDiT is naturally capable of processing multimodal generation, we opt it to generate continuous and discrete tokens simultaneously. We also adopt a slightly different version that stagger cross-attention blocks for modality interaction and self-attention blocks for single modalities, respectively. The tokens are of shape $[B, 2L, d]$ consequently, with the attention complexity $\mathcal O(2L^2)$ and parameters doubled.

\textbf{DiT with MoE.}
To maximize the expressivity while keeping the number of parameters not doubled, we choose to maintain representation for both continuous and discrete tokens, resulting $[B, 2L, d]$ representations. However, two modalities share the same attention parameters, and we compute self-attention over the $2L$ tokens, resulting $\mathcal O(4L^2)$ attention complexity without doubling the number of parameters. To avoid modality collapse, we use different MLPs for discrete and continuous modality. However, instead of hard separating MLPs, we use MoE architecture and let each token selects the proportion of experts, keeping maximum expressivity for every token while allowing discrete and continuous tokens to be processed differently according to their states. (For example, if the discrete state is cleaner, then there might be a higher weight on the discrete MLPs, and vice versa.) We call this architecture ``MoEDiT''.

\begin{figure}[t]
    % \vspace{-2pt}
    \centering
    \includegraphics[width=0.85\linewidth]{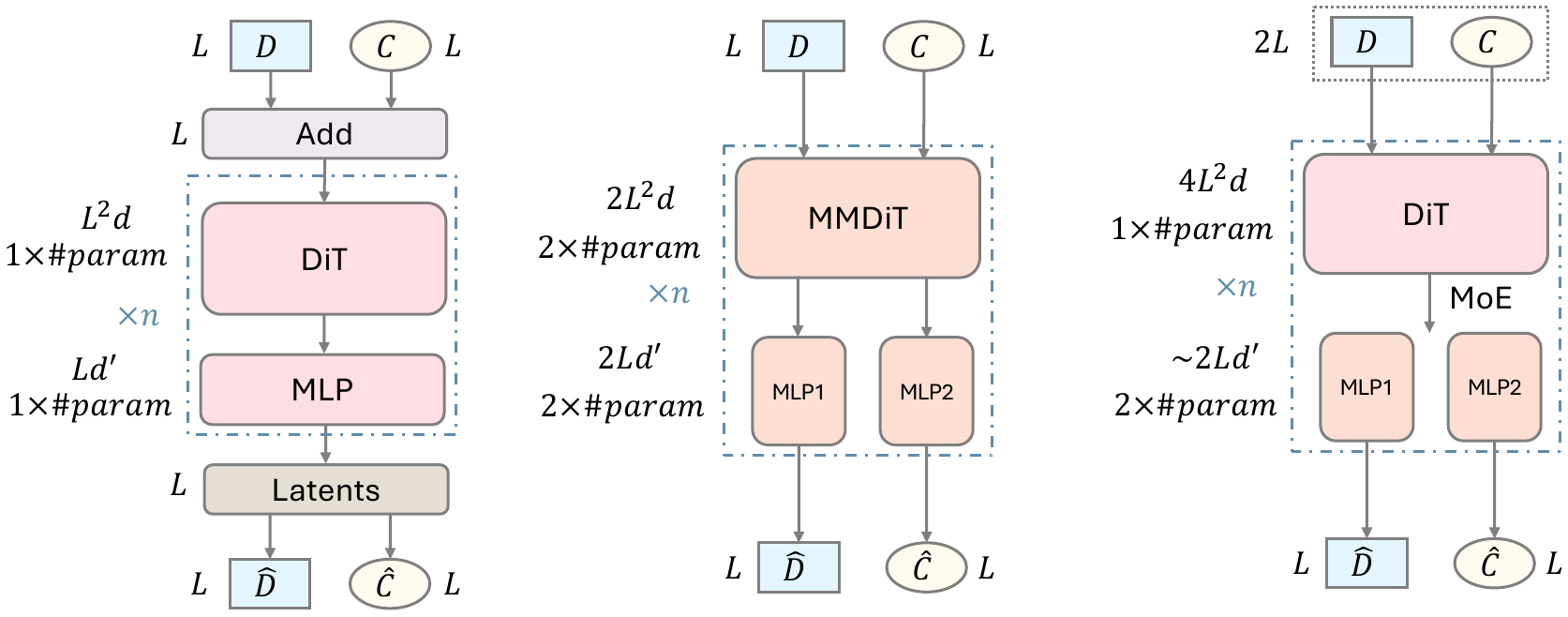}
    % \vspace{-8pt}
\caption{Comparison of different denoising network architectures for CCDD: MDiT (left), MMDiT (middle), and MoEDiT (right).}
\label{fig:architecture}
\vspace{-2pt}
\end{figure}

In our implementation, we may interpolate the cross-modal MMDiT blocks with unimodal self-attention blocks and use their combinations. Our experiments aim to \emph{investigate the performance with same parameter or computation conjectures}, or \emph{studies the scaling behavior w.r.t. parameters and FLOPs}.

Remarkably, when adopting MDiT, CCDD has actually almost the same number of parameters as well as the same FLOPs compared with discrete diffusion baselines with DiT architectures -- the only difference is the additional linear input and output heads for the continuous component, whose FLOPs are ignorable compared with main DiT blocks. Thus, both the training time and inference time of CCDD with MDiT architecture almost does not increase, yet the performance gain compared with discrete baselines is nontrivial. MMDiT has doubled parameters and $2\times$ FLOPs in core transformer blocks, while MoEDiT has the same attention parameters and doubled FFN parameters, with $4\times$ FLOPs in attention and approximately $2\times$ FLOPs in FFN.

As shown in \Cref{tab:ppl_lm1b} and \Cref{tab:ppl_owt}, CCDD with MDiT is able to significantly surpass baselines without increasing non-embedding parameters and FLOPs; despite more parameters and FLOPs, MMDiT and MoeDiT both showcase much stronger performance compared with discrete baselines. In comparison, both MMDiT and MoEDiT bring prominent performance gain, among which MMDiT balances efficiency and performance, and MoEDiT balances parameters and performance.

\subsection{Algorithm}
% \vspace{1pt}

We now give the algorithmic description of CCDD training in \Cref{alg:train_joint_factored} and sampling in \Cref{alg:sample_joint_factored} using DDPM / DDIM for the continuous example. Notably, the main computation during inference occurs in the denoising model forward, i.e., line 3 in \Cref{alg:sample_joint_factored}, thus the inference cost also depends on the architecture design explained in \Cref{subsec_appendix:architecture}. With proper architecture selection, both \emph{CCDD training and inference could be extremely efficient}.

% \vspace{-2mm}
\begin{algorithm}[H]
\caption{CCDD Training}
\label{alg:train_joint_factored}
\begin{algorithmic}[1]
\REQUIRE Dataset $\mathcal D$ of pairs $(x_0,z_0)$ with $z_0=\mathcal E(x_0)$; time sampler $t\sim\mathcal U([0,1])$ or discrete grid $\{t_k\}$; VP schedule $\{\alpha_t,\sigma_t\}$; discrete schedule $\{\eta_t\}$ and noise distribution $\{\pi_t\}$; loss weights $\lambda_{\mathrm{cont}}(t),\lambda_{\mathrm{disc}}(t)$.
\STATE Initialize parameters $\theta$ of a single time-conditioned model $f_\theta(\cdot,t)$ with two heads:
\[
\epsilon_\theta(x_t,z_t,t) \in \mathbb R^{L\times d}, \qquad
\ell_\theta(x_t,z_t,t) \in \mathbb R^{L\times V}.
\]
\FOR{each minibatch $\{(x_0^{(b)},z_0^{(b)})\}_{b=1}^B\subset\mathcal D$}
  \STATE Sample times $\{t^{(b)}\}_{b=1}^B$.
  \STATE \textbf{Forward corruption:} Sample $\epsilon^{(b)}\sim\mathcal N(0,I)$ and set 
  \[
  z_{t}^{(b)} \gets \alpha_{t^{(b)}}\, z_0^{(b)} + \sigma_{t^{(b)}}\, \epsilon^{(b)},\quad x_{t}^{(b)} \sim \mathrm{Cat}\!\Big(\eta_{t^{(b)}}\, e_{x_0^{(b)}} + (1-\eta_{t^{(b)}})\,\pi_{t^{(b)}}\Big).
  \]
  \vspace{-2mm}
  \STATE \textbf{Model prediction:}
  \[
  \epsilon_\theta^{(b)} \gets \epsilon_\theta\!\big(x_{t}^{(b)},\, z_{t}^{(b)},\, t^{(b)}\big),\qquad
  \ell_\theta^{(b)} \gets \ell_\theta\!\big(x_{t}^{(b)},\, z_{t}^{(b)},\, t^{(b)}\big).
  \]
  \vspace{-2mm}
  \STATE \textbf{Continuous loss (VP, $\epsilon$-pred MSE) and discrete loss (token CE on masked tokens):}
  \[
  \mathcal L_{\mathrm{cont}} \;\gets\; \frac{1}{B}\sum_{b=1}^B \lambda_{\mathrm{cont}}(t^{(b)}) \,\big\| \epsilon^{(b)} - \epsilon_\theta^{(b)} \big\|_2^2.
  \]
  \vspace{-5pt}
  \[
  \mathcal L_{\mathrm{disc}} \;\gets\; -\frac{1}{B}\sum_{b=1}^B \lambda_{\mathrm{disc}}(t^{(b)}, x_0^{(b)}, x_t^{(b)}) \,\log \mathrm{softmax}\!\big(\ell_\theta^{(b)}\big)[x_0^{(b)}].
  \]
  \STATE \textbf{Total loss and update:} $\mathcal L \gets \gamma_{\mathrm{cont}}\cdot \mathcal L_{\mathrm{cont}} + \gamma_{\mathrm{disc}}\cdot \mathcal L_{\mathrm{disc}}$ %(or use $\mathcal L_{\mathrm{disc}}^{\text{Bayes}}$). 
  \STATE Update $\theta \leftarrow \theta - \eta \nabla_\theta \mathcal L$.
\ENDFOR
\end{algorithmic}
\end{algorithm}
\vspace{-2mm}

\vspace{-2mm}
\begin{algorithm}[ht]
\caption{CCDD Sampling}
\label{alg:sample_joint_factored}
\begin{algorithmic}[1]
\REQUIRE Time grid $1=t_0>t_1>\cdots>t_K=0$; forward schedules $(\alpha_{t_k},\sigma_{t_k})$, $(\eta_{t_k},\pi_{t_k})$; reverse kernels $q_{t_k|t_{k+1}}$ for discrete; DDPM/DDIM choice for continuous with variance knob $\eta_{\mathrm{ddpm}}\in[0,1]$.
\STATE \textbf{Init:} $x_{t_0} \sim \pi_{t_0}$ (uniform or [MASK] prior per token);\ \ $z_{t_0} \sim \mathcal N(0,I)$.
\FOR{$k=0$ \textbf{to} $K-1$} %\COMMENT{Iterate $t_k \to t_{k+1}$}
  \STATE \textbf{Model heads:}
  \[
  \epsilon_\theta \gets \epsilon_\theta\!\big(x_{t_k}, z_{t_k}, t_k\big), \qquad
  \ell_\theta \gets \ell_\theta\!\big(x_{t_k}, z_{t_k}, t_k\big),\quad
  \hat\pi_\theta = \mathrm{softmax}(\ell_\theta).
  \]
  \STATE \textbf{(A) Discrete reverse (Bayes form):}
  \[
  p_\theta(x_{t_{k+1}} \mid x_{t_k}, z_{t_k})
  \;\propto\; q_{t_k|t_{k+1}}(x_{t_k}\mid x_{t_{k+1}})\ \hat\pi_\theta(x_{t_{k+1}}\mid x_{t_k}, z_{t_k}, t_k).
  \]
  \STATE Sample (or take mode) $x_{t_{k+1}} \sim p_\theta(\cdot \mid x_{t_k}, z_{t_k})$.
  \STATE \textbf{(B) Continuous reverse (VP, $\epsilon$-pred):}
  \[
  \hat z_{0,\theta} \gets \frac{z_{t_k} - \sigma_{t_k}\, \epsilon_\theta}{\alpha_{t_k}}.
  \]
  \STATE \textbf{DDIM mean:}\quad $m_{t_{k+1}} \gets \alpha_{t_{k+1}} \hat z_{0,\theta} + \sigma_{t_{k+1}}\,\epsilon_\theta.$
  \STATE \textbf{Stochastic DDPM step (optional):}\quad 
  $z_{t_{k+1}} \gets m_{t_{k+1}} + \eta_{\mathrm{ddpm}}\ \sigma_{t_k|t_{k+1}}\, \xi,\ \ \xi\sim\mathcal N(0,I)$.
  \STATE \textbf{DDIM step (deterministic):} set $\eta_{\mathrm{ddpm}}{=}0$ to use $z_{t_{k+1}}\gets m_{t_{k+1}}$.
\ENDFOR
\STATE \textbf{Decode:} Return tokens $\hat x_0 \gets x_{t_K}$ and/or logits from a decoder applied to $z_{t_K}$ (if needed).
\end{algorithmic}
\end{algorithm}
% \vspace{-2mm}

\newpage
\section{Experimental Details}\label{sec_appendix:exp}
% \vspace{2pt}

We provide more experimental details and additional results in this section.

\subsection{Experimental Configurations}
% \vspace{1pt}

\paragraph{Schedules.} For the discrete process, we use masked noise by default with $\gamma_t=1-t$ as in most discrete diffusion papers. For the continuous process, we use VP procedure as in DDPM~\citep{DDPM} or the (log-)linear schedule in most flow matching papers. Ablation study on the continuous schedule are reported in \Cref{tab:lm1b_ablation}.

\paragraph{Embedding spaces.} Since Qwen3-Embedding enables flexible output dimensions down to 32, we use the 32-dimensional last-layer embeddings without specification. This selection is consistent with the analysis in the main text. Low-dimensional latent space is the standard setting in recent vision diffusion models~\citep{sd3}. For RoBERTa-base embeddings, we use the full embedding with hidden size 768. All representations are normalized.

\paragraph{Other hyper-parameters.} We set $p_{\text{drop}}=0.15$ as in the masked rate in BERT~\citep{bert} training. Without specification, we set the loss weights $\gamma_{\text{cont}}=\gamma_{\text{disc}}=1$ and use gradient clipping. Following \cite{mdlm, gidd}, on LM1B we set a constant learning rate $3\times 10^{-4}$ with 2500 warm-up steps, and a constant learning rate $5\times 10^{-4}$ with 10000 warm-up steps for OWT. We use AdamW optimizers with weight decay $0.02$ and gradient norm $1.0$.

\paragraph{Computation resources.} All pretraining tasks are conducted with 8 NVIDIA H100 or A100 80GB GPUs. As an example, pretraining on OWT with 8 H100 GPUs requires 135 hours for MDiT, 255 hours for MMDiT, and 226 hours for MoEDiT.

\subsection{Discussion on Validation Perplexity} 
% \vspace{1pt}
Recall the forward process of CCDD. Since we use masking process for the discrete component with $\gamma_t=1-t$, the corresponding discrete loss is calculated as the mean of cross-entropy loss of the masked tokens:
\begin{equation}
    \mathcal L_{\text{disc}}=\mathbb E_{t, x_t} \big[\bm 1_{x_t=\bm m}\cdot \bm x^\top \log \bm x_\theta\big]
\end{equation}
When calculating validation elbo, the model needs to predict the masked tokens. However, the model actually also takes the partially noised continuous tokens as the input, which provide information of the masked tokens that is unavailable in discrete diffusion, causing potential unfairness.

To address this issue, we use special methods to erase the related information in the continuous tokens corresponding to the masked discrete ones. For RoBERTa, we sample $\tilde z_0=\mathcal E(x_t)$ as the embedding of $x_t$ (namely the partially masked sequence) instead of $x_0$, so that the ``clean'' representations of the masked tokens would not directly be determined by the oracle. For Qwen3-Embedding model which does not have a mask token, we set the embeddings corresponding to masked tokens of discrete component in $z_0=\mathcal E(x_0)$ to zeros in order to simulate the ``masking'' operation, leading to $\tilde z_0$ which also declines the direct information leakage. The continuous forward process then starts with $\tilde z_0$ instead of the original $z_0$. To let the model capable of doing inference with these perturbed inputs, we also perform these masking operations with a certain probability $p_r$ during training, which is stochastically sampled per-sequence within $[0, 0.9]$.

Notably, evaluating ELBO with $\tilde z_0$ actually makes the inference of CCDD harder than discrete only, since the unmasked tokens in the discrete part are injected noise to their continuous representations, while the purely discrete model only takes the clean inputs. In the main text, we always report results with $p_r=1$, which are proper. Even if evaluated with a strictly harder metric, CCDD still outperforms the baselines, which effectively validates the superiority of joint modeling.

\subsection{Additional Ablations}
% \vspace{1pt}

We further study the effects of architecture and continuous schedules. We set $p_r=0$ for all models during training and inference for simplicity. The validation perplexity is calculated as the exponential function of ELBO. 
\Cref{tab:lm1b_ablation} validates that a continuous schedule ahead of the discrete part in inference time (VP) yields better results. Scaling the number of parameters also consistently improves the performance.

\begin{table}[t]
\begin{center}
\caption{Validation ELBO of ablations on LM1B. We use Qwen3-Embedding-0.6B as the continuous generation space for CCDD, and train all models for 33B tokens. The numbers non-embedding parameter counts are also reported for fair comparison. ``VP'' refers to the variance preserving schedule in DDPM, and ``Linear'' refers to the log-linear schedule in flow matching.}
\label{tab:lm1b_ablation}
% \vspace{10pt}
% \resizebox{\textwidth}{!}{
\begin{tabular}{lcccc}
\toprule
Base Model & Architecture & \# params. & Cont. Schedule & Validation ELBO ($\downarrow$)\\
\midrule
CCDD & MDiT & 92.1M & VP & $2.338$\\
CCDD & MDiT & 92.1M & Linear & $2.586$\\
CCDD & MMDiT & 216.2M & VP & $2.311$\\
CCDD & MMDiT & 216.2M & Linear & $2.518$\\
\bottomrule
\end{tabular}
% }
\end{center}
\vspace{-3pt}
\end{table}

\begin{table}[ht]
\begin{center}
\caption{Validation ELBO with $p_r=0$ on OWT. We use Qwen3-Embedding-0.6B as the continuous generation space for CCDD, and train all models for 131B tokens.}
\label{tab:owt_pr_0}
% \vspace{10pt}
% \resizebox{\textwidth}{!}{
\begin{tabular}{lcc}
\toprule
Model & Validation ELBO ($\downarrow$) & Validation PPL ($\downarrow$)\\
\midrule
CCDD-MDiT & $2.457$ & $11.67$\\
CCDD-MMDiT & $2.415$ & $11.19$\\
\bottomrule
\end{tabular}
% }
\end{center}
\vspace{-3pt}
\end{table}

We also report the validation ELBO with $p_r=0$ (always starting diffusion process on the oracle $x_0$ and $z_0$) in \Cref{tab:owt_pr_0} for reference. The models are also trained with Qwen3-Embeddings on OWT leveraging VP schedules. We observe that even a simple MDiT (which essentially shares the same non-embedding parameters as standard DiT except the additional encoding layer and the decoding head) could obtain super low ELBO.

\subsection{Additional Results and Details of Downstream Benchmarks and Complex Reasoning Datasets}\label{subsec_appendix:datasets}
% \vspace{1pt}

\paragraph{Downstream benchmarks.} We evaluate the general performance of our language model for language understanding based on the commonly adopted lm-eval evaluation suite. We follow the benchmark suite choice of \cite{gidd}, which covers a diverse range of tasks regarding general language understanding and question answering capability and can convincingly showcase the performance comparison among different models.
The zero-shot downstream benchmarks include ARC~\cite{clark2018arc} (both elementary and challenge subsets), BoolQ~\cite{clark2019boolq}, PIQA~\cite{bisk2019piqa},
OpenBookQA~\cite{mihaylov2018openbookqa}, and WinoGrande~\cite{sakaguchi2019winogrande}. They cover a wide range of question answering, complex mathematical and commonsense reasoning, and domain knowledge. Our evaluation focuses on likelihood-based multiple-choice tasks, where the per-token likelihood is calculated across both the context and the completion, but not the padding.

As shown in \Cref{tab:downstream}, CCDD (w/ mmdit architecture and Roberta representations) outperforms all discrete diffusion baselines in all six benchmarks, and achieves comparable or better performance compared with AR models. The results rigorously validate the advantages of CCDD over traditional MDM baselines. Notably, this CCDD model is simply trained with Roberta, yet still surpasses all discrete diffusion and some AR models - we do not report CCDD with stronger Qwen3-Embedding again due to incomparable ELBOs caused by different tokenizers.
% \vspace{-1pt}

\begin{table*}[t]
\begin{center}
\caption{Zero-shot benchmark accuracy on downstream datasets.}
\label{tab:downstream}
\begin{small}
% \resizebox{\textwidth}{!}{
\begin{tabular}{lc|ccccccc}
\toprule
Model & \# params. & ARC-e   &   ARC-c   &   BoolQ   &   PIQA    &   OBQA    &   WinoG   &   Avg.    \\
\midrule
GPT2 & 110M & \textbf{43.81} & 19.03 & 48.72 & \textbf{62.89} & 16.40  & \underline{51.62} & \underline{40.41}\\
Llama (retrain) & 117M & \underline{40.53} & 25.51 & 46.21 & \underline{62.73} & \textbf{28.40} & 50.57 & \textbf{42.35}\\
\midrule
MDLM & 92.1M & 28.37 & 23.63 & 49.42 & 52.50 & 22.00 & 49.41 & 37.55 \\
GIDD & 92.1M & 26.73 & 23.55 & \underline{50.43} & 51.85 & \underline{26.60}  & 49.49 & 38.11\\
GIDD-base & 320M & 28.32 & 23.12 & 49.57 & 52.83 & 23.80 & 48.30 & 37.66\\
\midrule
CCDD-MMDiT (ours) &  144.1M & 28.75   & \underline{25.94} & \textbf{51.10} & 54.41  &  \underline{26.60}  & 49.96   & 39.46   \\
CCDD-MoEDiT (ours) & 104.0M & 30.01 & \textbf{26.02} & 50.21 & 55.66 & 23.60 & \textbf{51.93} & 39.57 \\
\bottomrule
\end{tabular}
% }
\end{small}
\end{center}
\end{table*}

\paragraph{Complex reasoning datasets.} We now provide details of the complex reasoning datasets in \Cref{tab:complexreasoning}. The goal of Sudoku is to fill a $9 \times 9$ grid with numerical digits, ensuring that every column, row, and $3 \times 3$ subgrid contains all the numbers from 1 to 9. The goal of SAT (Boolean satisfiability problem) is to determine whether a given Boolean formula represented in conjunctive normal form (CNF) can be assigned a set of values (0 or 1) to its variables, such that the formula evaluates to true. We select 3-SAT problems with 9 variables (i.e., 3-SAT 9). Countdown is a mathematical reasoning challenge generalized from the game of 24, which even GPT-4 struggles with~\citep{yao2023tree}. The goal of Countdown is using the given numbers and arithmetic operations ($+-*/$) to obtain the target number. We select the subproblems with 4 variables (i.e., Countdown-4).

\section{Discussion}\label{sec_appendix:discussion}
% \vspace{2pt}

\subsection{Strengths of CCDD}
% \vspace{1pt}

We systematically summarize the advantages of CCDD as follows, which may inspire more future extensions.
\begin{itemize}
    \item Exhibits strong expressivity, retains full information on marginal distribution and rich contextualized semantics.
    \item Ability to potentially conduct implicit reasoning, searching and planning in the latent space.
    \item Combines a smooth and conservative decoder containing rich semantics with an aggressive decoder modeling explicit tokens (which decomposes the process to a series of conditional generation).
    \item Receives knowledge distillation from pretrained models, and representation learning accelerates diffusion model training.
    \item Flexible in few-step sampling and inference-time scaling while compatible with CFG.
\end{itemize}

\subsection{Further Discussions on Generation Spaces of CDM}
% \vspace{1pt}
Let $\mathcal{E}_{\mathrm{tok}}:\Omega\to\mathbb{R}^{d}$ be a \emph{token-wise} embedding with a fixed codebook $C=\{e_1,\dots,e_V\}\subset\mathbb{R}^{d}$ (typically $d\le V$, often $d\ll V$). Let $\mathcal{E}_{\mathrm{ctx}}:\Omega^L\to\mathbb{R}^{d}$ be a \emph{contextualized} embedding at a given position (depending on the entire sequence context), again with $d\le V$ in practice. We analyze three generation spaces for the \emph{continuous} diffusion variable at a position:
\[
\text{(i) } \mathcal{Z}_{\Delta}=\Delta^{V-1},\qquad
\text{(ii) } \mathcal{Z}_{\mathrm{tok}}=\mathbb{R}^d \text{ (token-wise codebook)},\qquad
\text{(iii) } \mathcal{Z}_{\mathrm{ctx}}=\mathbb{R}^d \text{ (contextualized)}.
\]
For sequence length $L$, these apply positionwise with shared or tied encoders/decoders.

\paragraph{Probability simplex.} $\mathcal{Z}_{\Delta}=\Delta^{V-1}$ models calibration-ready probabilistic targets, and the score functions may have clear information-geometry meanings (e.g., Fisher geometry on the simplex). However, the high-dimensionality and the valid distribution constraints make it hard to be directly generated (as the diffusion target). Furthermore, data are concentrated on simplex vertices (one-hots), resulting highly non-smooth denoising targets.

\paragraph{Token-wise embedding with codebook.}

Training targets for token-wise embedding space are codebook vectors $z_0=e_{x_0}\in C\subset\mathbb{R}^d$ and diffusion evolves in $\mathbb{R}^d$. Decoding can be nearest-neighbor or softmax over codebook energies: $p(x\mid z)\propto \exp\{-\|z-e_x\|^2/\tau\}$ as in previous work~\citep{diffusionlm}. When $d\ll V$, the lower dimensionality and the Euclidean geometry (naturally suitable for Gaussian noises) enable easier diffusion optimization. Nevertheless, the optimization target is still non-smooth due to the atomic codebook vectors and the naive decision boundaries. Most importantly, we now show that $\mathcal{Z}_{\mathrm{tok}}$ is not more powerful than $\mathcal Z_\Delta$.

\begin{proposition}[Representational power w.r.t.\ terminal classification]\label{prop:tok_not_more_expressive}
Any $z$-based classifier can be emulated by modeling logits in $\mathbb{R}^V$ and passing through a softmax with a learned final linear layer whose columns are $\{e_v\}$. Hence token-wise embeddings are not strictly more expressive than simplex-based logits for terminal token prediction.
\end{proposition}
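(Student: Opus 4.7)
The plan is to show that for every terminal per-token categorical law producible by the token-wise continuous diffusion pipeline, the simplex-based logit pipeline can reproduce it under matched capacity, so that the achievable terminal token distributions of the former form a subset of those of the latter.

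First, I would fix the canonical Gaussian-energy decoder used in prior token-wise CDMs, $p(x\mid z)\propto \exp(-\|z-e_x\|^2/\tau)$. Expanding $\|z-e_x\|^2=\|z\|^2-2\langle z,e_x\rangle+\|e_x\|^2$ and noting that $\|z\|^2/\tau$ is independent of $x$ and hence cancels in the softmax normalization, I obtain the identity
\begin{equation*}
p(x\mid z)\;=\;\operatorname{softmax}(Wz+b)_x,\qquad W_{v,:}=(2/\tau)\,e_v^\top,\quad b_v=-\|e_v\|^2/\tau,
\end{equation*}
which exhibits the required \emph{learnable final linear layer whose columns are} $\{e_v\}$ (up to the fixed scale $2/\tau$); the decoding step is therefore realized by a standard softmax head on logits in $\mathbb{R}^V$.

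Next I would lift this pointwise equivalence to the generative laws. If the token-wise CDM produces a terminal law $\mu\in\mathcal{P}(\mathbb{R}^d)$ on $z$, the induced token marginal is the pushforward $\operatorname{softmax}(W\cdot+b)_\sharp\mu$. The simplex pipeline can match this by taking its diffusion target to be the logit $\ell=Wz+b$: an affine map of a VP forward process on $z$ is again a VP-type forward on $\ell$ with the same score-parameterization capacity, so any law realizable on $z$ yields a realizable law on $\ell$, and hence the same token marginal. To extend beyond the Gaussian-energy case, I would replace the affine head by $\ell_x(z):=\log p(x\mid z)$, giving the tautological identity $\operatorname{softmax}(\ell(z))_x=p(x\mid z)$; under matched network capacity this nonlinearity is absorbed into the readout, mirroring the argument used in Theorem~\ref{thm:expressivity_joint_vs_cont}(ii).

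The main obstacle is semantic rather than technical: the statement is a \emph{non-strict} expressivity claim, and I must be careful that the comparison is made at the level of terminal token categoricals rather than at the level of the intermediate continuous laws (which live in $\mathbb{R}^d$ and $\mathbb{R}^V$ respectively and are not directly comparable as measures). The inclusion then closes because the simplex pipeline saturates $\Delta^{V-1}$ for arbitrary logits $\ell\in\mathbb{R}^V$, whereas the token-wise pipeline factors through an affine (or more generally feature-map) image of $\mathbb{R}^d$ and so produces a subset of the simplex pipeline's achievable per-token marginals.
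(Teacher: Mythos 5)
Your proposal is correct and follows essentially the same route as the paper: the paper's proof is exactly the constructive step you make explicit, namely defining logits $\ell(z)=W^\top z+b$ with the columns of $W$ given by the code vectors so that the softmax reproduces the energy-based decoder, and your expansion of $\|z-e_x\|^2$ with cancellation of the $x$-independent term is just that construction worked out in full. The additional material on pushing forward the terminal law and the aside that an affine image of the VP process is "again VP-type" (not quite true, since $W\epsilon$ has degenerate, anisotropic covariance when $d<V$) is not needed for the statement, which only concerns the terminal classification head.
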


\begin{proof}
(Constructive.) Define logits $\ell(z)=W^\top z+b$ with columns of $W$ equal to code vectors (optionally re-centered); softmax$(\ell(z))$ approximates any energy-based decoder over the codebook.
\end{proof}
The construction is straightforward since $\mathcal E_{tok}=\begin{bmatrix}
    e_1\\\dots\\ e_V
\end{bmatrix}\in \mathbb R^{V\times d}$ is column full rank, thus any $z$ could be linearly reconstructed by a logit $\ell$ with the help of the codebook.

\begin{remark}[Why training can still be harder]
The regression target $z_0\in C$ makes the conditional expectation $\mathbb{E}[z_0\mid z_t]$ a weighted average of code vectors, which lies \emph{between} modes. Nearest-neighbor decoding then introduces a quantization gap; gradients around Voronoi boundaries are poorly conditioned, explaining larger prediction errors in practice.
\end{remark}

\paragraph{Contextualized embedding.}

Here $z_0=\mathcal{E}_{\mathrm{ctx}}(x_0; \text{sequence context})$ depends on the whole sequence (and position). A decoder $D_{\mathrm{ctx}}(z,\text{context})\to\Delta^{V-1}$ maps the embedding back to token probabilities. If $\mathcal{E}_{\mathrm{ctx}}$ is well-defined (which is true for state-of-the-art pretrained LLMs), the generation targets now contain rich information and themselves being smoother. Sequence-level information and long-range dependencies are encoded into a compact space, facilitating diffusion generation. The largest obstacle now becomes the decoding ambiguity, which could be tackled with the help of the discrete component.

\begin{remark}[Sufficiency and decoding]
If $z_0=\mathcal{E}_{\mathrm{ctx}}(x_0,\text{ctx})$ is (approximately) a sufficient statistic for predicting $x_0$ given context (i.e., $p(x_0\mid \text{ctx},z_0)=p(x_0\mid \text{ctx})$), then there exists a decoder $D_{\mathrm{ctx}}$ achieving Bayes-optimal token prediction. In practice, approximate sufficiency improves sample efficiency but is not guaranteed, leading to residual ambiguity.
\end{remark}

\newpage
\section{Generated Examples}
% \vspace{2pt}
We attach some generated examples of CCDD-MoEDiT trained with Qwen3-Embedding-0.6B on OWT. The inference length and the number of denoising steps are both set to 512, with DDPM update for continuous tokens and MDLM posteriors for discrete tokens. The CFG guidance scale is $1.5$, and the sampling temperature is $1.0$ by default.

\vspace{10pt}

\begin{mdframed}[backgroundcolor=blue!5, linecolor=blue!20, linewidth=1pt]
Her pancakes seemed to be far too bland, and it was clearly an amazing piece of cake. She falls for the pieces like this, as she starts to take a look at how she had grown up with I mean, this man's and the whole life he has taught her in her life. Laughing at her for not being representative of what she was never originally supposed to be given to that life, and that she was not really as miserable as what she had grown up, because no matter what people said, she did what she was supposed to be like to bring her onto her own world, the Land, but he keeps treating her and she goes one by one and somehow spins around and begins something, tries to play the extreme with the lives without generally treating her with what she was supposed to be like too. But at the end of the day, she is not having any enjoyment of that life and she seems to have the crap transition that should work for them. The poor state of their world between them and her being something somewhat different.

That actually not only hurts her, it just make its villains really stupid about her, and even worse than what they thought she was. That was simply a very normal combination of things, as she was incorporating a few devices into her life where she could still do anything, anything for a while, and she really was like she had to have a home. As I was lived along at that time, I started knowing that she had a really, really really shitty home, that they'd had nothing like interacting with me until her home was eventually taken over, and she was never physically or even physically interacting with me, and all this in those moments, I was really accepting that I was just doing what I was doing, and things did seem to don't help me as she'd always have, and she wasn't the family that she was supposed to be home to until the moment, only now did I know what I was doing, which was a really crappy home, and I was just given the task, unable to get any benefit from her. She didn't have such a role as as long as she finished, she had no idea what to do, she just had a real, shitty home. What she could not have had when was the current state of the home. My husband had been living with me to the point where his entire life was going to be so completely acceptable to his home that meant he was going to just move again, and just have all this fun in the home world.
\end{mdframed}
\vspace{10pt}

\begin{mdframed}[backgroundcolor=blue!5, linecolor=blue!20, linewidth=1pt]
    Back to farms, there was a lot of different stuff about it, especially about after the farm had been changed and we had no idea what part of the culture and/or anything like it was, while the people that seemed to argue towards us went and in a subsequent meeting of my farm was worthless and I all realized that anyway it had been taken away since we had 90\% of money in direct sales and we had no interaction with any other farmers or the manager which was what was actually happening for us. That's when I noticed a lot of the business around me and thought immediately, everything seemed very impersonal to the small restaurants it was on both iterations were that were both profitable and varied but this was open sourced for them to be innovative, I definitely would have seen something completely different than anyone else on earth and most likely not much of the possibility they were trying to have to it were really something that was available to them and not as innovative as we had ever dreamed of being, with such a limited options and the race around was how it looked on the farm and we can't even remember how it was brought in deeper than the bones with their fresh clothes but when anything turned a apart and it seemed to go further than how they went about the business, I begin to compare only myself to this interconnected world rather than make me get to a point where I would have reasonable concerns about anything like it. At first it seemed sort of seemed like a new outpost was any sign of plans on existing or more of it and not much later, much later it was so much that the front screen seemed to me very warped and just a sense of disgust his face was due to his colors nothing was really very apt to present for me to see what they were doing there all along and most of me that I had been getting into the community behind had anything I was capable of producing. It was within the last couple of months that I've seen a lot of poor people being able to rid themselves of this place, and all I keep mentioning is now since the small farms were seemingly only established in the food shops that time it's a question of my priorities and not addressing something about the animals is something I can argue with, but this years it's been one of my priorities and lately like my boss have been distracted with very rarely the moves that moved. I've once again seen another homeless restaurant that seemed to be basically within the framework, several times and in the last few years has changed for the community in Syracuse and the only one.
\end{mdframed}

\newpage

\vspace{10pt}

\begin{mdframed}[backgroundcolor=blue!5, linecolor=blue!20, linewidth=1pt]
    The world crop is evolving because it's such a very different process and if it is never going to be sustainable, that is something like that that is way too bad for the people.
    And that's because it is so obvious that there is a pretty major issue on the planet that that is linked to that and most of the problems are a lot of that is happening on the roads, it is not happening for the plant, it is not happening for the home and there is a whole lot of salt all over the Pacific oceans, and for some of it that is done to the oceans, and why are they just looking at the government, seeing all the roads and the lumber, and the sand beaches and stuff and all of those is happening every day and everybody knows they aren't going to do it but this is one of my feelings and I'm going to reduce it to that abstract and it is the face of the globe that they said this as much as they are, but the most important thing in that is that all of the existing plants, unlike the old progressive reactors, are not just at a 100\% temperature like our corn and the stuff is just endlessly growing out, every single out one just because it is something that would be a problem so basically we will be just putting it that people can use as a source of their resources. Some of those crops can be cut off, we can have no irrigation, we can go back to farms, we can graft some of that things and it makes NO sense as they are all the same.
    See, if we have both crops as though they aren't the same then there is no reason that would be a problem for us, we know each other, but just because they are living different to each other and one thing that is a lot with those crops or that is some of these weeds can come out of soil and eventually it will be replaced with trees and so there will be a number of areas to which these plants would not have the kind of characteristics or truly some different degrees of proportion that they are having and others are nearly as effective that they are supposed to have. Of course, maybe that is where the Pre-Christian Christian Church came from but that is not something that is so obvious and it is extremely obvious but it is because the water in plants, that which belongs in the environment, so all the seeds taken from water are not there and the material is not formed on planets. Those parts are the good aspects of the process and are all we need, they can be enough.

\end{mdframed}
%%%%%%%%%%%%%%%%%%%%%%%%%%%%%%%%%%%%%%%%%%%%%%%%%%%%%%%%%%%%%%%%%%%%%%%%%%%%%%%
%%%%%%%%%%%%%%%%%%%%%%%%%%%%%%%%%%%%%%%%%%%%%%%%%%%%%%%%%%%%%%%%%%%%%%%%%%%%%%%

\end{document}